\newcolumntype{A}{>{\centering\arraybackslash}m{0.24\columnwidth}}
\newcolumntype{D}{>{\centering\arraybackslash}m{0.16\columnwidth}}
\newcolumntype{B}{>{\centering\arraybackslash}m{0.21\columnwidth}}
\newcolumntype{C}{>{\centering\arraybackslash}m{0.3\columnwidth}}
\theoremstyle{plain}
\newtheorem{theorem}{Theorem}
\newtheorem{proposition}{Proposition}
\newtheorem{lemma}{Lemma}
\newtheorem{definition}{Definition}
\theoremstyle{remark}
\newtheorem{example}{Example}
\newtheorem{remark}{Remark}
\DeclareMathOperator*{\argmin}{argmin}
\def\bbE{\mathbb{E}}
\def\bbR{\mathbb{R}}
\def\cD{\mathcal{D}}
\def\cE{\mathcal{E}}
\def\cL{\mathcal{L}}
\def\cN{\mathcal{N}}
\def\cX{\mathcal{X}}
\def\cK{\mathcal{K}}
\def\dkl{\mathrm{KL}}
\def\tr{\mathrm{tr}}
\newtheorem{assumption}{Assumption}
\title{Distributional Principal Autoencoders}
\author[1]{Xinwei Shen\thanks{\texttt{xwshen@uw.edu}}}
\author[1]{Nicolai Meinshausen\thanks{\texttt{meinshausen@gmail.com}}}
\affil[1]{Department of Statistics, University of Washington}
\date{}
\begin{document}
\maketitle
 \begin{abstract}
 Dimension reduction techniques typically incur information loss, yielding reconstructions that differ in distribution from the original data. We propose Distributional Principal Autoencoder (DPA), a nonlinear dimensionality reduction method whose encoder is chosen to minimise the unexplained variability of the conditional distribution of the data given its low-dimensional embedding, and whose decoder samples from this conditional. As a result, DPA reconstructions follow the same distribution as the original data regardless of the retained latent dimension. The encoder objective induces an ordering of the latent components and supports an adaptive choice of the retained dimension. Ordinary autoencoders are recovered as the special case with a deterministic decoder, and principal component analysis as the further special case with a linear encoder. With a stochastic encoder, DPA additionally enables posterior inference for latent variables. Our numerical results on climate data, single-cell data, and image benchmarks demonstrate the practical feasibility of DPA and its success in recovering the original distribution of the data. DPA embeddings are shown to preserve meaningful structures of data such as the seasonal cycle for precipitation and cell types for gene expression.
 \end{abstract}


\section{Introduction}\label{sec:intro}
High-dimensional data is common in modern statistics and machine learning. Dimensionality reduction and data compression have been the subject of an extensive body of literature over the past decades. Classical linear approaches such as Principal Component Analysis (PCA) \citep{jolliffe2002principal} and their nonlinear deep learning extensions, autoencoders (AE) \citep{rumelhart1986learning,hinton2006reducing}, share a common formulation in which an encoder maps observations to a low-dimensional embedding and a decoder reconstructs the conditional mean. A separate line of work, including Variational Autoencoders (VAE) \citep{kingma2013auto} and related variants \citep[e.g.][]{makhzani2015adversarial,tolstikhin2018wasserstein}, uses encoder-decoder architectures to fit likelihood-based latent-variable generative models; we discuss the relation to that line in Section~\ref{sec:related}.

After mapping high-dimensional data into a lower-dimensional latent space, there is often a need to reconstruct them in the original space. PCA and autoencoders typically minimise a mean squared reconstruction error. Given a low-dimensional latent value, they aim to reconstruct the conditional mean of all samples that share this embedding, ignoring other characteristics of the conditional distribution such as variance and tail behaviour. As a result, their reconstructed data in general follow a different distribution from the original data, since they preserve the conditional mean but discard the variability around it.

In many applications, such distortion of the data distribution can lead to biased and unreliable downstream statistical estimation or inference. Consider distributional regression with high-dimensional responses, where the goal is to estimate the conditional distribution of a high-dimensional target variable given some predictors. This setting encompasses many modern applications, including high-resolution image generation, spatial climate field prediction, and gene expression modelling. A natural strategy is to (i) reduce the dimension of the response data to lower-dimensional embeddings, (ii) learn a distributional regression model of the embeddings on the predictors, and (iii) reconstruct the response from the predicted embeddings. For example, stable diffusion~\citep{rombach2022high} first reduces the dimension of high-resolution images using a VAE and then applies a diffusion model in the lower-dimensional space to save computational cost. However, if the dimension reduction step fails to preserve the underlying data distribution, then the distribution of the final predictions will in general differ from the original distribution of the target, even if step (ii) is fit perfectly.

As a specific motivating example, we consider as the target global monthly precipitation fields on 1 degree  latitude-longitude grids for CMIP6 models~\citep{gmd-9-1937-2016,gmd-8-3379-2015}, resulting in a spatial dimension of $360\times180$. Compressing such high-dimensional spatial fields into a low-dimensional vector typically loses a large amount of information when aiming for mean reconstructions alone. As shown in Figure~\ref{fig:gcm_illus}, reconstructions from PCA or autoencoders tend to smooth out the fields and fail to capture much information about the distribution of precipitation for each location, especially for small latent dimensions. In the extreme case of latent dimension 0, AE just reconstructs any data by the temporal mean precipitation per location. Figure~\ref{fig:qqplot} illustrates how quantiles of precipitation at a random location behave for true data versus reconstructions.  The high quantiles of the original precipitation data are clearly underestimated by AE or PCA reconstructions. This matters for climate prediction, since underestimating heavy rainfall can lead to inadequate flood-risk planning.

\begin{figure}
\centering
\begin{tabular}{@{}c@{}A@{}A@{}A@{}A@{}}
 &	\small $k=512$ & \small $k=32$ & \small $k=2$ & \small $k=0$ \\
	\rotatebox[origin=c]{90}{\small{True}}
	& \includegraphics[width=0.23\textwidth]{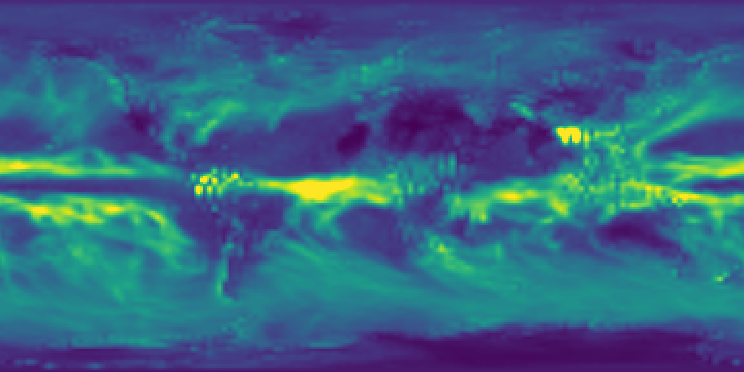} &
	\includegraphics[width=0.23\textwidth]{fig/gcm/illus/true.png} &
	\includegraphics[width=0.23\textwidth]{fig/gcm/illus/true.png} &
	\includegraphics[width=0.23\textwidth]{fig/gcm/illus/true.png} \\
	\rotatebox[origin=c]{90}{\small{PCA}}
	& \includegraphics[width=0.23\textwidth]{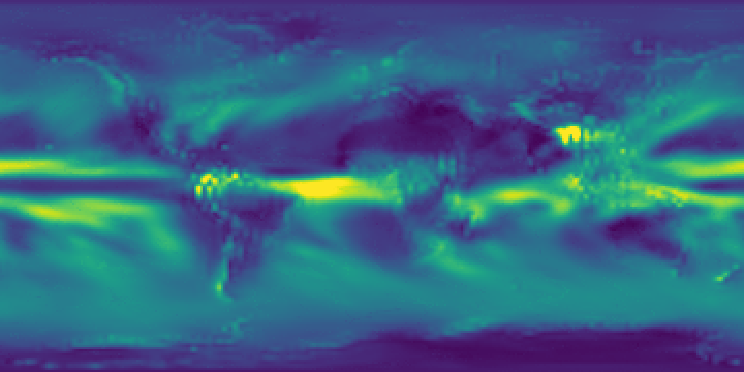} &
	\includegraphics[width=0.23\textwidth]{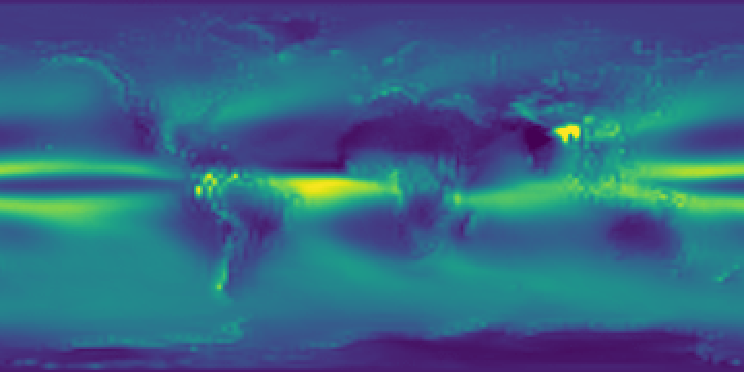} &
	\includegraphics[width=0.23\textwidth]{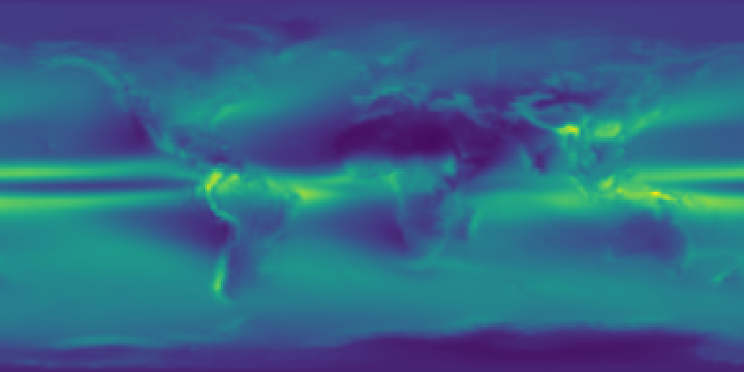} &
	\includegraphics[width=0.23\textwidth]{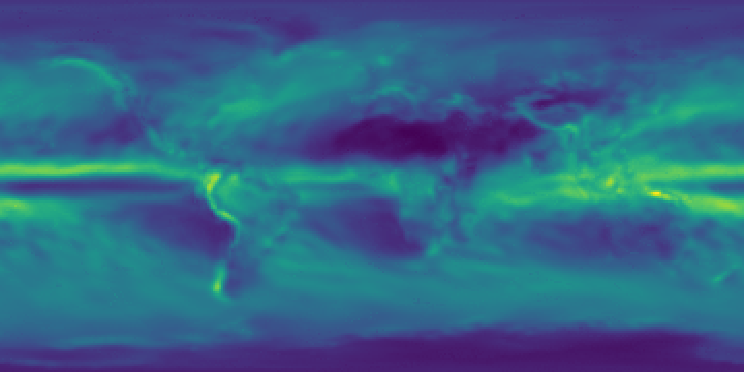} \\	
	\rotatebox[origin=c]{90}{\small{AE}}
	& \includegraphics[width=0.23\textwidth]{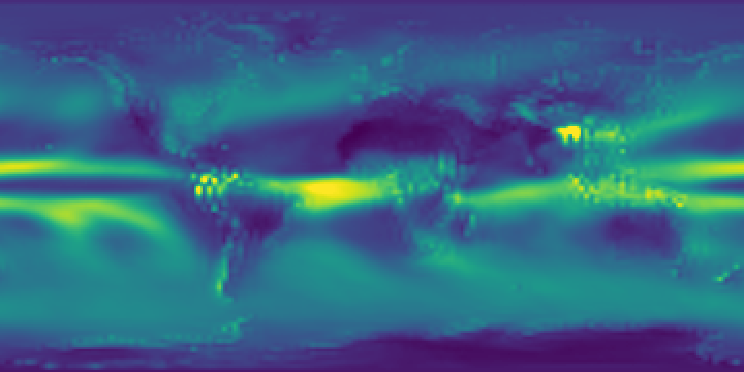} &
	\includegraphics[width=0.23\textwidth]{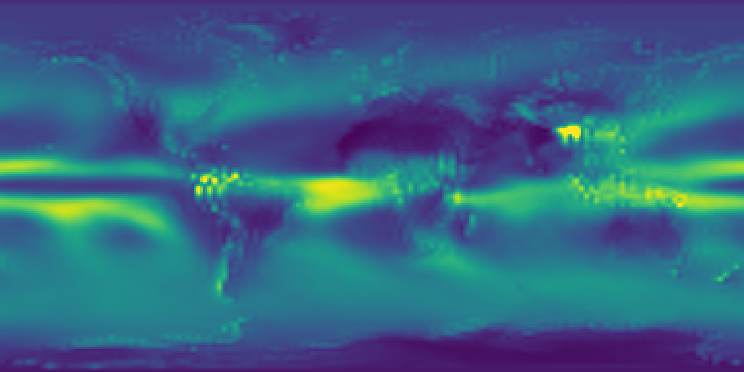} &
	\includegraphics[width=0.23\textwidth]{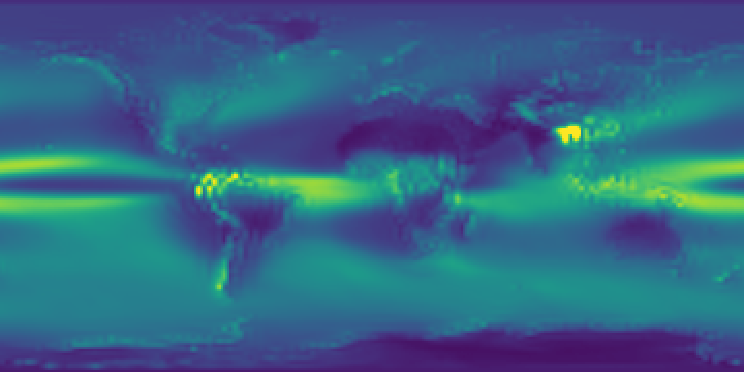} &
	\includegraphics[width=0.23\textwidth]{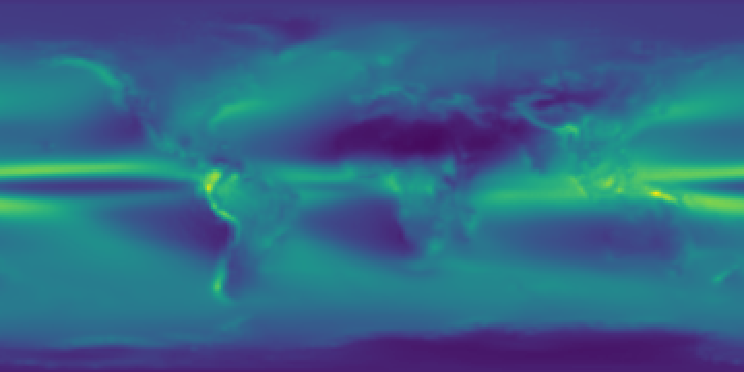} \\	
	\rotatebox[origin=c]{90}{\scriptsize{DPA mean}}
	& \includegraphics[width=0.23\textwidth]{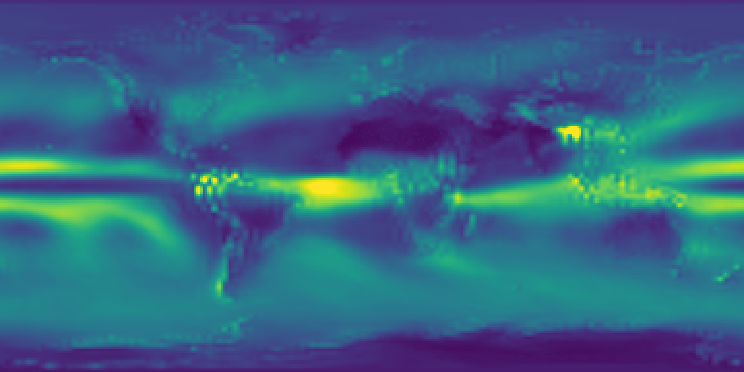} &
	\includegraphics[width=0.23\textwidth]{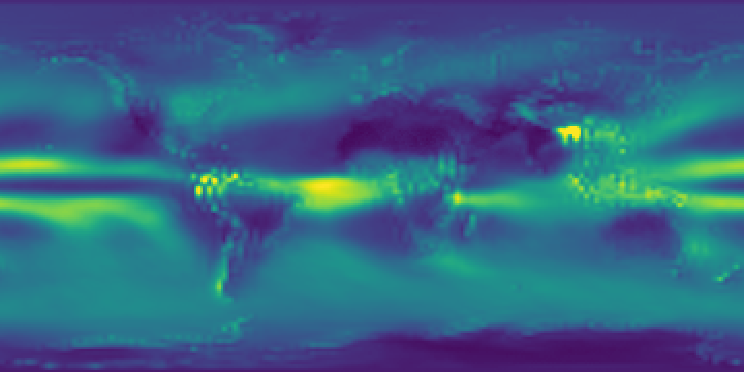} &
	\includegraphics[width=0.23\textwidth]{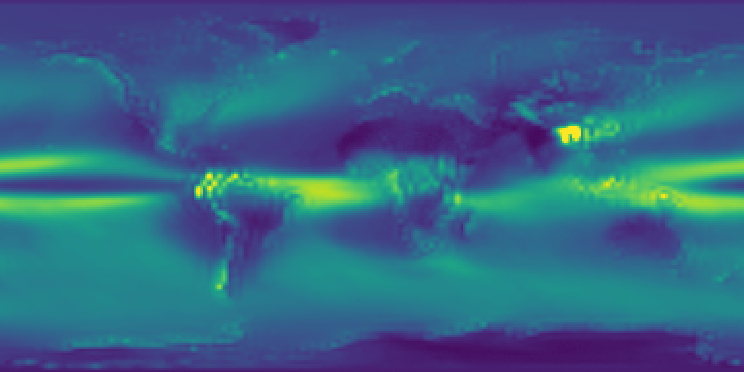} &
	\includegraphics[width=0.23\textwidth]{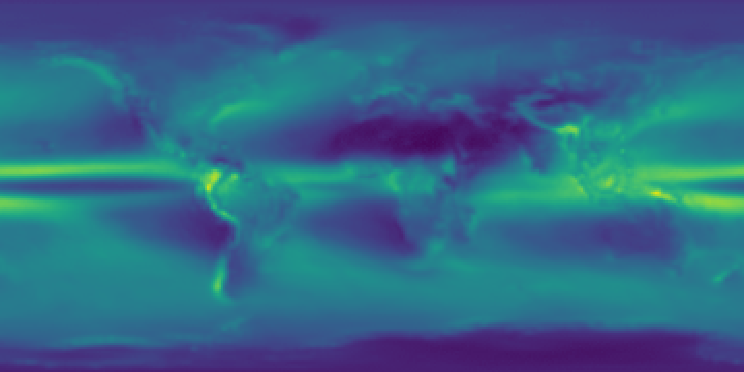} \\	
	\rotatebox[origin=c]{90}{\small{DPA samples}}
	& \includegraphics[width=0.23\textwidth]{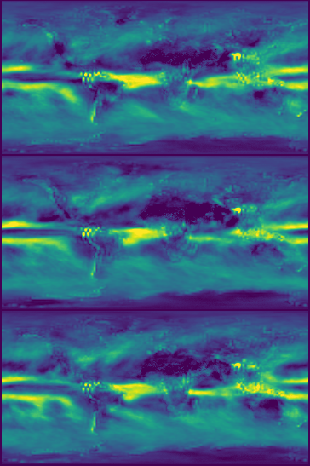} &
	\includegraphics[width=0.23\textwidth]{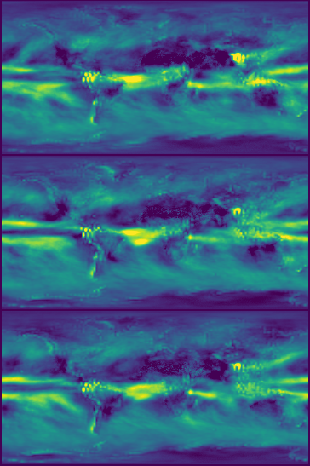} &
	\includegraphics[width=0.23\textwidth]{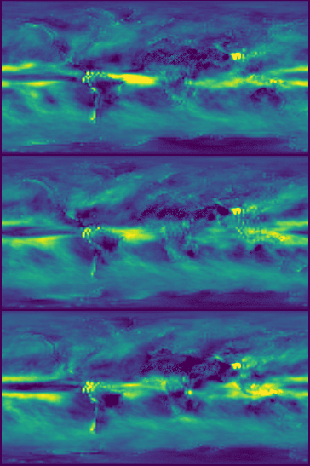} &
	\includegraphics[width=0.23\textwidth]{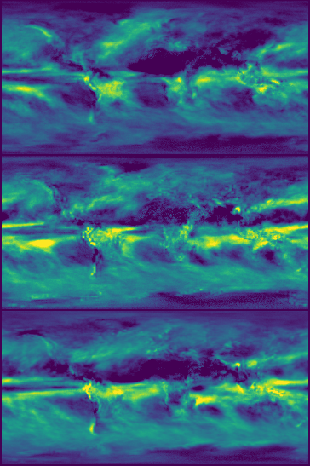} \\
 & \multicolumn{4}{c}{\includegraphics[align=c,width=0.5\textwidth]{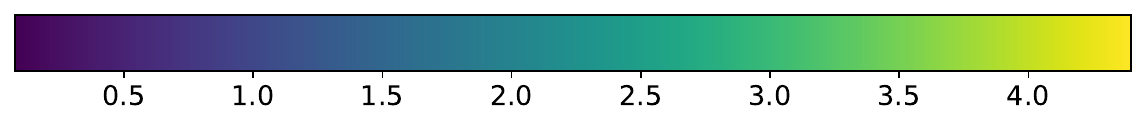}}
\end{tabular}\vspace{-0.1in}
\caption{Global monthly precipitation fields (square-root transformed, original unit $\mbox{kg} \cdot \mbox{m}^{-2} \mbox{s}^{-1}$). Top row: a test data; second row: PCA reconstructions; third row: AE reconstructions; fourth row: mean reconstructions from DPA; remaining three rows: reconstructed samples from DPA. Columns: different latent dimensions $k$. }\label{fig:gcm_illus}
\end{figure}

\begin{figure}
\centering
	\begin{tabular}{@{}ccc@{}}
		\small PCA & \small AE & \small DPA \\
		\includegraphics[width=0.3\textwidth]{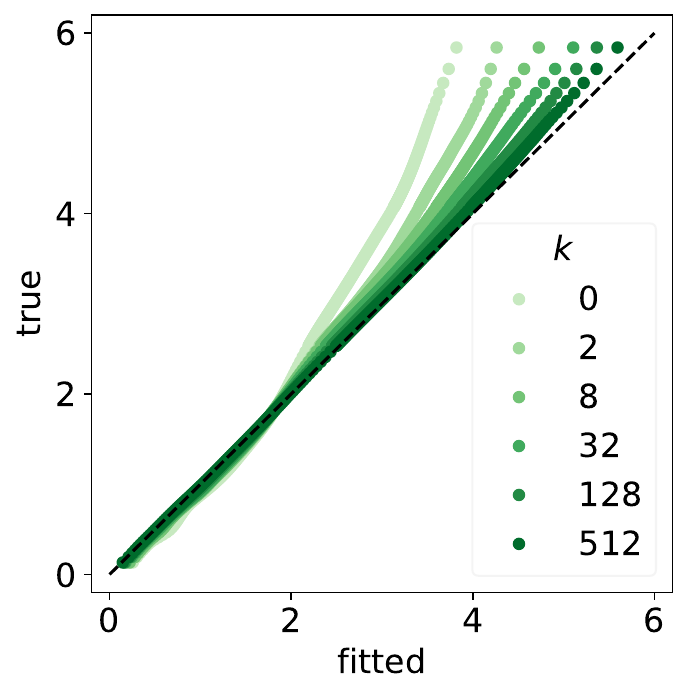} &
		\includegraphics[width=0.3\textwidth]{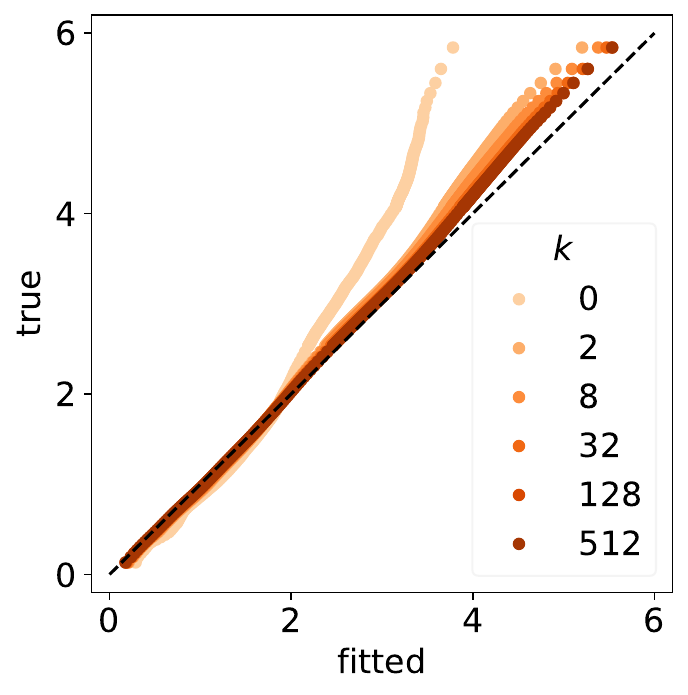} &
		\includegraphics[width=0.3\textwidth]{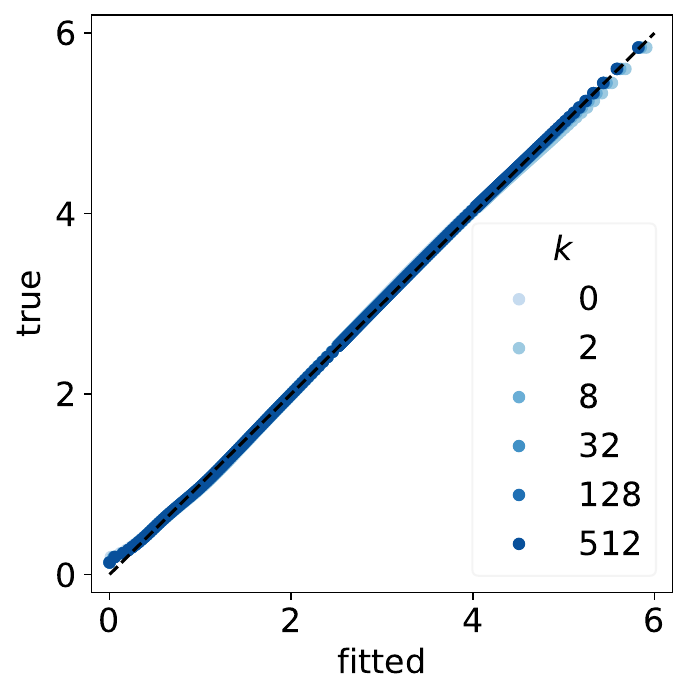} 
	\end{tabular}\vspace{-0.1in}
	\caption{Q--Q plots of precipitations at a random location for test data versus fitted distributions. }\label{fig:qqplot}
\end{figure}

Here, we develop a nonlinear dimensionality reduction method called \emph{Distributional Principal Autoencoder (DPA)}. The first aim is to reconstruct data such that the distribution of the reconstructed data is identical to the distribution of the original data. We approach this goal via an encoder-decoder framework using deep neural networks for both maps. 
%
Let $X\in\bbR^p$ be the data, drawn from a distribution $P^*$, and write $Z\in\bbR^k$ with $k\le p$ for the latent variable, also referred to as the embedding. The `encoder' $e(.):\bbR^p\to\bbR^k$ is a function from the data space to the latent space. The `decoder' is typically a function $d(.):\bbR^k\to\bbR^p$ mapping samples in the $k$-dimensional latent space back to the original $p$-dimensional space; for autoencoders, it minimises the mean squared reconstruction error, so that ideally
\[ d(z) = \bbE[X|e(X)=z],\ \forall z.\]

 Instead of mean reconstruction, the aim for our decoder is \emph{distributional reconstruction}: given an encoder $e$, the decoder samples from the conditional distribution of $X$ given $e(X)=z$. Concretely, the decoder takes as inputs a latent value $z$ and an independent noise variable $\varepsilon\in\bbR^l$ drawn from a pre-specified distribution such as a standard Gaussian, and maps them to the data space, i.e.\ $d(.,.):\bbR^k\times\bbR^l\to\bbR^p$. The noise input is the technical means by which the decoder produces a distribution over reconstructions; what matters is that this induced distribution should match the conditional $X|e(X)=z$. Formally, the decoder aims to achieve
 \begin{equation}\label{eq:dist_recon}
	d(z,\varepsilon) \overset{d}= \big(X|e(X)=z\big),\ \forall z.
 \end{equation}
That is, given an embedding $z$, the distribution of the random variable $d(z,\varepsilon)$ for a random noise input $\varepsilon$ is ideally identical to the conditional distribution of the original data that are mapped by the encoder $e$ to the  embedding $z$. 
 
If \eqref{eq:dist_recon} holds true, the distribution of reconstructed data from our decoder is guaranteed to be the same as the original distribution $P^*$, i.e.
 \begin{equation}\label{eq:eq_marg_dist}
 	d(e(X),\varepsilon) \overset{d}= X.
 \end{equation}
 One can view this equality in distribution \eqref{eq:eq_marg_dist} as a distributional criterion for lossless compression in the sense of retaining the original data distribution. As we will show later, such distributionally lossless compression can be achieved for any compression rate characterised here by the latent dimension $k$.

As an immediate illustration of the distributionally lossless data compression, we show how DPA performs in the global precipitation example earlier. In Figure~\ref{fig:gcm_illus}, the last three rows are three reconstructed samples, i.e.\ samples from the conditional distribution in \eqref{eq:dist_recon} with $z=e(x)$ for the test data $x$ on the top row. Whichever $k$ we pick, the reconstructed samples remain visually realistic as precipitation fields, since they follow the same distribution as the original data, as derived in \eqref{eq:eq_marg_dist}. This is validated further by the Q--Q plots in Figure~\ref{fig:qqplot}, where DPA captures the full distribution precisely for all choices of $k$, especially in the high quantiles, in contrast to PCA and AE. If DPA is used as the dimension reduction technique in the distributional regression pipeline described above, the distribution-preserving property ensures that the target distribution is retained in the final predictions, up to estimation error.
The visual results also suggest that the variability among different DPA samples decreases as $k$ grows larger as more information is retained in the embedding. Furthermore, DPA can also provide mean reconstructions by taking the mean of the reconstructed samples, which leads to similar results as AE, as shown in the fourth row in Figure~\ref{fig:gcm_illus}. 

 Our second aim is to learn the `principal' components, analogous to the ones in PCA, while possessing the above distributional properties. Specifically, we want to learn the components in a way that minimises the unexplained variability in a sense made precise in the next section, with the reduction of variance used in PCA being a special case. Moreover,  while the latent embeddings in autoencoders are typically disordered,  imposing an ordering in the latent space would provide the flexibility to keep only a subset of the latent components, thus enabling an adaptive choice of the latent dimension. 

Our proposed method achieves the two goals of distributional reconstruction and minimisation of unexplained variability, simultaneously through a joint formulation of the encoder and decoder. That is, 
\begin{enumerate}[(i)]
    \item \emph{DPA reconstructions} follow the same distribution as the original data, regardless of the number of components retained, and
    \item \emph{DPA embeddings} aim to explain the most variability of the data with the flexibility to keep only the first $\tilde{k}$ components as the first $\tilde{k}$ (nonlinear) principal components for varying $\tilde{k}<k$.
\end{enumerate}
A detailed description of the method is given in Section~\ref{sec:method}.

The DPA framework can also be adapted to other statistical tasks. In particular, replacing the deterministic encoder by a stochastic one enables inference over latent variables given observed data. In an empirical Bayes (EB) setting, where the likelihood is assumed known, the likelihood can be incorporated into DPA by specifying a decoder that entails it; the DPA objective then yields a procedure that learns the posterior distribution corresponding to the empirical Bayes prior. Further details of this extension are given in Section~\ref{sec:eb}.

In the remainder of the paper, after reviewing the literature, we introduce our methods in Section~\ref{sec:method} together with some theoretical justifications. Section~\ref{sec:empirical} presents empirical results that demonstrate the effectiveness of DPA in distributional reconstructions and in learning meaningful low-dimensional embeddings across a wide range of scientific data and image benchmarks. Section~\ref{sec:eb} extends the framework with a stochastic encoder for posterior inference in an empirical Bayes setting. In Section~\ref{sec:discuss}, we conclude and discuss directions for future work. All proofs are deferred to Appendix~\ref{app:proof}.

\subsection{Related work}\label{sec:related}

\begin{table}[h]
\centering
\caption{Structural comparison of DPA with related encoder-decoder methods. The first column states the mathematical object each method fits. The remaining columns indicate which of the three structural properties of DPA the method targets by design: distributional reconstruction (the encoder-decoder pair preserves the data distribution in population), ordered (principal) embedding, and an adaptive choice of the latent dimension from a single trained model.}\label{tab:related_work}
\resizebox{\textwidth}{!}{%
\begin{tabular}{lcccc}
\toprule
Method & Target object & Distributional & Ordered & Adaptive \\
       &               & reconstruction & embedding & $k$ \\
\midrule
\textbf{DPA} (this paper) & $X|e(X)=z$ for $e$ minimising unexplained variability & yes & yes & yes \\
\midrule
PCA  & $\bbE[X|e(X)=z]$, linear $e$ & no & yes & yes \\
AE   & $\bbE[X|e(X)=z]$, nonlinear $e$ & no & no & no \\
PCA-AE & $\bbE[X|e(X)=z]$, ordered, $k$ separate AE models & no & yes & no \\
Nested / Triangular Dropout & $\bbE[X|e(X)=z]$, ordered, single AE model & no & yes & yes \\
\midrule
VAE, $\beta$-VAE & joint $p(z)p_d(x|z)$, prior $p(z)$ prescribed & no & no & no \\
WAE, AAE & joint, aggregated posterior matched to prior & no & no & no \\
BiGAN & joint matched adversarially & no & no & no \\
\midrule
Deep InfoMax & mutual information $I(X, e(X))$, no decoder & no & no & no \\
\bottomrule
\end{tabular}%
}
\end{table}

Table~\ref{tab:related_work} summarises the relationship between DPA and related encoder-decoder methods. The methods divide into two lines of work with distinct targets. PCA, autoencoders, and their ordered-embedding variants target a conditional of the data given a learned encoder, $X|e(X)=z$, and reconstruct its mean. DPA replaces the mean by the full conditional and belongs to this line of work.

Likelihood-based latent-variable generative models, including VAE and its variants, instead model the joint distribution $p(z)p_d(x|z)$ on $(z,x)$, in which the prior $p(z)$ is prescribed independently of the data and the decoder defines the likelihood. DPA is not a variant of these methods. The DPA target $X|e(X)=z$ and the VAE decoder $p_d(x|z)$ coincide only when the prescribed prior matches the marginal that $e(X)$ induces on the latent space, which VAE does not enforce. The remainder of this section expands on each row of the table.

\paragraph{PCA, autoencoders, and the principal-components line of work.}
DPA generalises the encoder-decoder dimensionality reduction methods exemplified by PCA \citep{jolliffe2002principal} and autoencoders \citep{rumelhart1986learning,hinton2006reducing}. Both target the conditional mean of the data given the embedding. As shown in Section~\ref{sec:method}, taking the DPA exponent $\beta=2$ together with a deterministic decoder recovers the autoencoder objective, and additionally restricting the encoder to be linear recovers PCA. DPA generalises this line of work by replacing mean reconstruction with reconstruction of the full conditional distribution $X|e(X)=z$, while preserving the principal ordering of latent components. Existing nonlinear extensions of PCA in the same vein include PCA-AE \citep{pham2022pca}, which enforces ordering by learning $k$ separate encoders and decoders and thus has a high computational cost, and Nested Dropout \citep{rippel2014learning} and Triangular Dropout \citep{staley2022triangular}, which encourage ordering through random or deterministic truncation of a hidden layer. \citet{ho2023information} present a unified formulation of the Dropout variants:
\begin{equation}\label{eq:iob}
	\sum_{k\in\cK}\omega_k\bbE\big[\|X-d(e_{1:k}(X),\mathbf{0}_{p-k})\|^2\big],
\end{equation}
where $\cK=\{0,1,\dots,p\}$ and $\mathbf{0}_k$ denotes the zero vector of dimension $k$; \eqref{eq:iob} recovers Nested Dropout when $\omega_k$ is geometric and Triangular Dropout when uniform. The weighted-truncation structure of \eqref{eq:iob} resembles our procedure for an adaptive latent dimension in Section~\ref{subsec:adaptive}, but the underlying target is still mean reconstruction.

\paragraph{Likelihood-based latent-variable generative models.}
A separate line of work uses encoder-decoder architectures to fit likelihood-based latent-variable generative models. The Variational Autoencoder (VAE) \citep{kingma2013auto} prescribes a prior $p(z)$ on the latent variable and a parametric family for the decoder likelihood $p_d(x|z)$, with the encoder serving as a variational approximation $q_e(z|x)$ to the intractable posterior. With Gaussian encoder and decoder families, the VAE objective takes the form
\begin{equation*}
	\bbE\big[\|X-d(e(X))\|^2 + \beta\dkl(q_e(z|X),p_z(z))\big],
\end{equation*}
where the Kullback--Leibler term regularises the variational posterior toward the prior; see also $\beta$-VAE \citep{higgins2017beta}. AAE \citep{makhzani2015adversarial} and WAE \citep{tolstikhin2018wasserstein} replace the Kullback--Leibler term by a distance between the aggregated posterior $q_e(z):=\bbE[q_e(z|X)]$ and the prior. The mathematical object targeted by these methods is the joint distribution $p(z)p_d(x|z)$ on $(z,x)$, in which the prior $p(z)$ is prescribed independently of the data. DPA targets a different object: the conditional distribution $X|e(X)=z$ for an encoder $e$ defined by minimising unexplained variability, with no prior over $z$. Even at the population optimum with infinite-capacity encoder and decoder families, the two conditionals coincide only when the prescribed prior happens to match the distribution that $e(X)$ induces on the latent space, which is a constraint VAE does not impose on the encoder. Moreover, since the standard isotropic Gaussian prior used in VAE is rotationally symmetric, its latent coordinates are exchangeable and there is no natural analogue of the principal ordering DPA produces. We give a more detailed comparison in Appendix~\ref{app:vae}.

\paragraph{Other generative models with encoder-decoder structure.}
Generative adversarial networks \citep{goodfellow2014generative} were originally formulated for unconditional generation; subsequent work \citep{Dumoulin2017AdversariallyLI,Donahue2017AdversarialFL,shen2020bidirectional} extends GANs to an encoder-decoder framework, but without guarantees on the conditional distribution \eqref{eq:dist_recon}. Normalising flows \citep{papamakarios2021normalizing} rely on invertible transformations, and diffusion models \citep{sohl2015deep,ho2020denoising} implicitly define an encoder via a stochastic differential equation; both require the latent space to have the same dimension as the data space and are thus unsuitable for dimensionality reduction. Manifold-aware extensions such as M-flows \citep{brehmer2020flows} combine normalising flows with manifold learning to jointly model a lower-dimensional manifold and a density on it, with a different focus from the principal-ordered embedding considered here. Deep InfoMax \citep{hjelm2018learning} maximises mutual information between data and embedding as an alternative encoder criterion, but learns no decoder.

\paragraph{Connection to proper scoring rules.}
The way DPA approaches distributional reconstruction is based on proper scoring rules \citep{gneiting2007strictly}, which \citet{shen2023engression} showed to be effective for learning conditional distributions in regression, including consistency of the population minimiser and stability of the small-$m$ estimator. Here we consider the unsupervised setting in which the latent variable is itself a learned embedding of $X$, and the decoder objective \eqref{eq:obj_dec} inherits these properties for fixed encoder.

\subsection{Software}
Our method is available in the \texttt{Python} package {\footnotesize \texttt{DistributionalPrincipalAutoencoder}} with the source code at {\footnotesize \url{github.com/xwshen51/DistributionalPrincipalAutoencoder}}. 

\section{Distributional Principal Autoencoders}\label{sec:method} 
We describe our method in this section.
We first provide some simple analysis for our encoder and discuss its connections to PCA and AE. In a second step, we propose an approach for learning the decoder that achieves distributional reconstruction. Based on both, we suggest a joint formulation to learn both simultaneously. Finally, we present a procedure that allows an adaptive choice of the latent dimension.

\subsection{DPA encoder}\label{subsec:dpa_enc}

The objective of the encoder is to choose an encoding function of data that retains as much information as possible about the original data. To make this notion precise, we first define the so-called oracle reconstructed distribution. 

\begin{definition}[Oracle reconstructed distribution] \label{def:matching} 
	For a given encoder $e(.):\bbR^p\to\bbR^k$ and a sample $x\in \mathbb{R}^p$, the oracle reconstructed distribution, denoted by $P^*_{e,x}$, is defined as the conditional distribution of $X$, given that its embedding $e(X)$ matches the embedding $e(x)$ of $x$, i.e.\ \[ X|e(X)=e(x) .\]
\end{definition}

This distribution reflects how much information about the data is left unexplained knowing its embedding. For example, for a constant encoder $e(x)\equiv c$, $P^*_{e,x}$ is the same as the original data distribution $P^*$, as $e(x)$ contains no information about $x$; for an invertible function $e$, $P^*_{e,x}$ is a point mass at $x$. Note that $P^*_{e,x}$ is fully determined by the encoder $e$ and the data distribution $P^*$. No prior on the latent variable enters the definition.

Then we look for the encoder that maps data to a principal subspace in the sense of minimising the variability in the oracle reconstructed distribution:
 \begin{equation}\label{eq:obj_enc}
 	\argmin_e \bbE_{X\sim P^*}\bbE_{Y,Y'\overset{\rm iid}\sim P^*_{e,X}}\big[\|Y - Y'\|^\beta\big],
 \end{equation} 
 where $\|.\|$ denotes the Euclidean norm and $\beta\in(0,2]$ is a hyperparameter. The optimisation is over a pre-specified encoder class $\mathcal{E}$, which can be a family of linear functions or a neural network class; we drop this specification from the notation for the encoder and the decoder when there is no ambiguity. The variability of a distribution is measured in \eqref{eq:obj_enc} by the expected distance between two independent draws from it, so the encoder ideally collects the maximal information about $X$ such that the remaining variability is the smallest possible. As we show below, taking $\beta=2$ recovers, after reparametrisation, the encoder objectives of AE and PCA, although we generally adopt $\beta<2$.
 
The following proposition indicates an alternative interpretation of our criterion \eqref{eq:obj_enc} by showing its equivalence to minimising the reconstruction error, which sheds light on the connections to PCA and AE. 
\begin{proposition}\label{prop:two_terms_es_equal}
	For any $\beta\in(0,2]$, we have 
	\begin{equation*}
		\bbE_{X\sim P^*}\bbE_{Y,Y'\overset{\rm iid}\sim P^*_{e,X}}\big[\|Y - Y'\|^{\beta}\big] = \bbE_{X\sim P^*}\bbE_{Y\sim P^*_{e,X}}\big[\|X - Y\|^{\beta}\big].
	\end{equation*}
\end{proposition}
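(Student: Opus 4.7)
The plan is to condition on the embedding $Z := e(X)$ and observe that both sides reduce to the same quantity after a relabeling of the integration variables. The value of $\beta$ plays essentially no role in this identity; the restriction $\beta\in(0,2]$ is presumably needed only for downstream results such as the properness of the induced scoring rule.

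For the right-hand side $\bbE_{X\sim P^*}\bbE_{Y\sim P^*_{e,X}}[\|X-Y\|^{\beta}]$, I would apply the tower property with respect to the $\sigma$-algebra generated by $Z$. Conditional on $Z$, the variable $X$ itself has distribution $P^*_{e,X}$ by Definition~\ref{def:matching}. The conditional law of $Y$ given $X$ is also $P^*_{e,X}$, which depends on $X$ only through $e(X)=Z$; hence $Y$ is conditionally independent of $X$ given $Z$ and has the same conditional law $P^*_{e,X}$. Therefore, given $Z$, the pair $(X,Y)$ is iid from $P^*_{e,X}$, and the right-hand side becomes
\[ \bbE_{X\sim P^*}\,\bbE_{A,B \overset{\mathrm{iid}}{\sim} P^*_{e,X}}\bigl[\|A-B\|^{\beta}\bigr], \]
which after relabeling $(A,B)\leftrightarrow(Y,Y')$ is exactly the left-hand side.

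The main obstacle is purely conceptual rather than technical: on the right-hand side, $X$ is written in a distinguished position relative to $Y$, and one must recognise that this asymmetry vanishes once we condition on $Z$, because the conditional distribution of $Y$ given $X$ factors through $e(X)$. No properties of $\|\cdot\|^{\beta}$ beyond measurability are required, so the same identity would hold with any non-negative measurable function of $(Y,Y')$ in place of $\|Y-Y'\|^{\beta}$. Integrability, if needed, is inherited from the assumption that the left-hand side is well defined.
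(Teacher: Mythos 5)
Your proposal is correct and is essentially the paper's own argument: the paper conditions on $Z=e(X)$ by writing out the densities explicitly and uses that $p_{e(X)}(y\mid e(x))=p_{e(X)}(y\mid z)$ on the event $e(x)=z$, which is exactly your observation that the conditional law of $Y$ given $X$ factors through $e(X)$, so that $(X,Y)$ is conditionally iid from $P^*_{e,X}$ given $Z$. Your phrasing via the tower property and conditional independence is just a measure-theoretic restatement of the same computation.
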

 The reconstruction error on the right-hand side is measured by the expected distance between a sample $X$ of the original data and a sample $Y$ of the oracle reconstructed distribution $P^*_{e,X}$.

We will illustrate the connections between our method and PCA through  the following example.
\begin{example}[Gaussian data and linear encoders]\label{ex:lin_gauss}
	Assume that $X$ follows a multivariate Gaussian distribution with a mean vector $\mu^*$ and a covariance matrix $\Sigma^*$. Let $\Sigma^*=Q\Lambda Q^\top$ be the eigendecomposition of $\Sigma^*$, where $QQ^\top=I_p$ and $\Lambda=\mathrm{diag}(\lambda_1,\dots,\lambda_p)$ with $\lambda_1\ge\dots\ge\lambda_p$ being the eigenvalues. The encoder class is $\{e(x)=M^\top x:M\in\bbR^{p\times k}, M^\top M=I_k\}$.
\end{example}

The following proposition implies that \eqref{eq:obj_enc} yields the same subspace as the one spanned by the first $k$ principal components from PCA.
\begin{proposition}\label{prop:lin_gauss1}
	In the setting of Example~\ref{ex:lin_gauss}, the solution set to \eqref{eq:obj_enc} with $\beta=2$ is uniquely given by $e^*(x)=\Pi Q_{:k}^\top x$, where $\Pi\in\bbR^{k\times k}$ is a permutation matrix.
\end{proposition}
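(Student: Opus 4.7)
The plan is to use Proposition~\ref{prop:two_terms_es_equal} to rewrite \eqref{eq:obj_enc} as a reconstruction loss, compute that loss in closed form using Gaussian conditioning, and then reduce the resulting matrix problem to a $k$-dimensional subspace trace-maximisation that is handled by the Ky Fan principle.

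First I would invoke Proposition~\ref{prop:two_terms_es_equal} with $\beta=2$ to rewrite the objective as $\bbE_{X}\bbE_{Y\sim P^*_{e,X}}[\|X-Y\|^2]$. For $X\sim\cN(\mu^*,\Sigma^*)$ and $e(x)=M^\top x$ with $M^\top M=I_k$, the conditional law $X\mid M^\top X$ is Gaussian with mean $\bar X=\mu^*+\Sigma^*M(M^\top\Sigma^*M)^{-1}M^\top(X-\mu^*)$ and (deterministic) covariance $\Sigma_M := \Sigma^* - \Sigma^*M(M^\top\Sigma^*M)^{-1}M^\top\Sigma^*$. Since $Y\mid X$ has this same law, expanding $\|X-Y\|^2$ around $\bar X$ and using $\bbE[Y\mid X]=\bar X$, $\cov(Y\mid X)=\Sigma_M$ makes the cross term vanish and gives $\bbE[\|X-Y\|^2]=2\,\tr(\Sigma_M)$. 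The problem therefore reduces to maximising $\tr\!\big(\Sigma^*M(M^\top\Sigma^*M)^{-1}M^\top\Sigma^*\big)$ over column-orthonormal $M$.

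Next I would perform the change of variables $V:=\Sigma^{*1/2}M$, so that $V^\top V=M^\top\Sigma^*M$, and by cyclicity of the trace the objective equals $\tr(\Sigma^{*1/2}P_V\Sigma^{*1/2})=\tr(\Sigma^*P_V)$, where $P_V=V(V^\top V)^{-1}V^\top$ projects onto $\mathrm{Im}(V)$. Because $\Sigma^{*1/2}$ is invertible, $\mathrm{Im}(V)$ ranges over all $k$-dimensional subspaces of $\bbR^p$ as $M$ varies over column-orthonormal matrices, so one is left with $\max_{V:\,\dim V=k}\tr(\Sigma^*P_V)$. By the Ky Fan (Rayleigh--Ritz) trace principle applied to the symmetric matrix $\Sigma^*$, this maximum equals $\sum_{i=1}^k\lambda_i$ and, under the eigen-gap $\lambda_k>\lambda_{k+1}$, is attained \emph{uniquely} at $V=\mathrm{span}(Q_{:k})$. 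Hence the column span of an optimal $M$ must equal $\mathrm{span}(Q_{:k})$, so $M=Q_{:k}U$ for some orthogonal $U\in O(k)$ and $e^*(x)=U^\top Q_{:k}^\top x$, matching the form stated in the proposition.

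The main obstacle is the change-of-variables step that decouples the objective from the constraint $M^\top M=I_k$: one has to notice that the trace quantity depends on $M$ only through the subspace $\mathrm{Im}(\Sigma^{*1/2}M)$, and this is what enables the reduction to a clean Ky Fan problem. The residual $O(k)$ ambiguity simply reflects the freedom to choose an ONB of the optimal subspace; the proposition's $\Pi$ captures this non-identifiability, with the spectral-gap assumption implicitly ensuring uniqueness of the subspace itself.
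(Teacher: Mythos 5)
Your proof is correct, and for the first half it coincides with the paper's: both invoke Proposition~\ref{prop:two_terms_es_equal}, compute the Gaussian conditional law $X\mid M^\top X$, and reduce the objective to $\tr\big(\Sigma^*-\Sigma^*M(M^\top\Sigma^*M)^{-1}M^\top\Sigma^*\big)$, i.e.\ to maximising $\tr\big((M^\top\Sigma^*M)^{-1}M^\top\Sigma^{*2}M\big)$ over column-orthonormal $M$. Where you diverge is the final step: the paper disposes of this trace-ratio problem by citing an external result \citep[Theorem~2.3]{yu2011kernel}, whereas you give a self-contained argument via the substitution $V=\Sigma^{*1/2}M$, which exposes that the objective depends on $M$ only through the subspace $\mathrm{Im}(\Sigma^{*1/2}M)$ and lands you in a standard Ky Fan maximisation. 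Your route buys three things: it is elementary and verifiable without the reference; it makes explicit the eigengap condition $\lambda_k>\lambda_{k+1}$ needed for the optimal subspace to be unique (the paper only assumes $\lambda_1\ge\dots\ge\lambda_p$); and it correctly identifies the full solution set as $\{M=Q_{:k}U:U\in O(k)\}$ rather than only column permutations --- indeed, since $M\mapsto MU$ leaves the $\sigma$-algebra generated by $e(X)$ (hence the conditional law and the objective) unchanged, the proposition's restriction to permutation matrices $\Pi$ understates the invariance, and your characterisation is the accurate one. Two small points to tighten: when you pass from $\mathrm{Im}(\Sigma^{*1/2}M)=\mathrm{span}(Q_{:k})$ back to $\mathrm{Im}(M)=\mathrm{span}(Q_{:k})$, say explicitly that $\mathrm{span}(Q_{:k})$ is $\Sigma^{*-1/2}$-invariant because it is spanned by eigenvectors of $\Sigma^*$; and note that you are assuming $\Sigma^*\succ0$ so that $\Sigma^{*1/2}$ is invertible (the paper implicitly assumes the same in inverting $M^\top\Sigma^*M$).
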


In general, when taking $\beta=2$, the following proposition indicates that \eqref{eq:obj_enc} yields a solution $e^*$ which together with its corresponding conditional mean $\bbE_{Y\sim P^*_{e^*,X}}[Y]$ leads to the minimum mean squared reconstruction error. 
\begin{proposition}\label{prop:mean_recon}
	When taking $\beta=2$, \eqref{eq:obj_enc} is equal to 
\begin{equation*}
	\argmin_e \bbE_{X}\big[\|X - \bbE_{Y\sim P^*_{e,X}}[Y]\|^2\big].
\end{equation*}
\end{proposition}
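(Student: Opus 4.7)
The plan is to reduce Proposition~\ref{prop:mean_recon} to Proposition~\ref{prop:two_terms_es_equal} via a conditional bias--variance decomposition, after conditioning on the embedding $Z := e(X)$.

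First, I would apply Proposition~\ref{prop:two_terms_es_equal} with $\beta=2$ to rewrite the objective in \eqref{eq:obj_enc} as $\bbE_{X\sim P^*}\bbE_{Y\sim P^*_{e,X}}[\|X-Y\|^2]$. So it suffices to show that, for every encoder $e$,
\begin{equation*}
\bbE_{X}\bbE_{Y\sim P^*_{e,X}}\big[\|X-Y\|^2\big] \;=\; 2\,\bbE_{X}\big[\|X - \bbE_{Y\sim P^*_{e,X}}[Y]\|^2\big],
\end{equation*}
since the two sides then differ by a positive multiplicative constant, hence share the same $\argmin_e$.

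The key structural remark is that, by Definition~\ref{def:matching}, the conditional law of $X$ given $Z=e(X)$ is exactly $P^*_{e,X}$, and $P^*_{e,X}$ depends on $x$ only through $e(x)$. Consequently, when we additionally draw $Y\sim P^*_{e,X}$, the pair $(X,Y)$ is conditionally i.i.d.\ given $Z$, each with conditional mean $\mu(Z):=\bbE_{Y\sim P^*_{e,X}}[Y]$. Writing
\begin{equation*}
\|X-Y\|^2 = \|X-\mu(Z)\|^2 + \|Y-\mu(Z)\|^2 - 2\langle X-\mu(Z),\, Y-\mu(Z)\rangle
\end{equation*}
and taking conditional expectation given $Z$, the cross term vanishes by conditional independence (both conditional means equal $\mu(Z)$), and the first two summands have identical conditional expectation, so $\bbE[\|X-Y\|^2\mid Z]=2\bbE[\|X-\mu(Z)\|^2\mid Z]$. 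Taking outer expectation in $Z$ and unfolding $\mu(Z)=\bbE_{Y\sim P^*_{e,X}}[Y]$ gives the claimed identity.

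There is no really hard step here; the whole argument is an application of the classical ``squared distance between two i.i.d.\ copies equals twice the variance'' identity. The only point that requires care is keeping the layered expectation straight and recognising that the definition of $P^*_{e,X}$ forces $(X,Y)$ to be conditionally i.i.d.\ given $Z=e(X)$, which is precisely what makes the cross term vanish and unlocks the reduction to Proposition~\ref{prop:two_terms_es_equal}.
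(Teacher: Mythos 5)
Your proof is correct and follows essentially the same route as the paper's: reduce to $\bbE_{X}\bbE_{Y\sim P^*_{e,X}}[\|X-Y\|^2]$ via Proposition~\ref{prop:two_terms_es_equal} and then apply a bias--variance decomposition around the conditional mean $\mu(Z)=\bbE_{Y\sim P^*_{e,X}}[Y]$, yielding the same factor-of-two identity $\bbE_X[\|X-\mu(Z)\|^2]=\tfrac12\,\bbE_{X}\bbE_{Y\sim P^*_{e,X}}[\|X-Y\|^2]$. The only cosmetic difference is that you collapse the paper's two intermediate identities into one step by noting that $(X,Y)$ is conditionally i.i.d.\ given $Z=e(X)$, which is a valid consequence of Definition~\ref{def:matching}.
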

This reveals the connection between our formulation and autoencoder which is defined by
\begin{equation} \label{eq:obj_ae}
	\argmin_{e,d} \bbE_{X}\big[\|X - d(e(X))\|^2\big].
\end{equation}
Given an encoder $e$, the optimal AE decoder is given by the conditional mean $\bar{d}_e(z)=\bbE_{Y\sim P^*_{e,x}}[Y]$ for $z=e(x)$.
Thus, when the decoder is expressive enough, AE obtains the optimal mean reconstruction. The autoencoder objective is therefore the special case of \eqref{eq:obj_enc} obtained by taking $\beta=2$ and a decoder fixed to the conditional mean; combined with a linear encoder class as in Example~\ref{ex:lin_gauss}, it further specialises to PCA. However, AE does not allow sampling random reconstructions from $P^*_{e,X}$. In the following, we describe a procedure to address this.

\subsection{DPA decoder}\label{subsec:dpa_dec}
As stated in Section~\ref{sec:intro}, given an encoder, the aim for our decoder is distributional reconstruction \eqref{eq:dist_recon}, which can equivalently be written as $d(z, \varepsilon) \sim P^*_{e,x}$ for $z=e(x)$, where $P^*_{e,x}$ is the oracle reconstructed distribution as in Definition \ref{def:matching} and $z=e(x)$ is the embedding value. Note that the oracle reconstructed distribution, by definition, is supported on lower dimensional manifolds (except for $k=0$), so is the distribution of $d(z,\varepsilon)$. Thus it is unlikely for such two distributions to have a non-empty intersection of their respective support. This makes it difficult to achieve \eqref{eq:dist_recon}  with classical approaches such as maximum likelihood estimation, as the KL divergence is often infinite. We will propose a new approach for this objective. 

In addition to distributional reconstruction as its primary objective, a distributional decoder also enables optimisation for the encoder. Proposition~\ref{prop:two_terms_es_equal} indicates the equivalence between the optimisation problem in \eqref{eq:obj_enc} and choosing an encoder $e$ that minimises the following objective,  for any $\lambda<1$, 
\begin{equation}\label{eq:obj_enc1}
	\bbE_{X}\bbE_{Y\sim P^*_{e,X}}\big[\|X - Y\|^{\beta}\big] - \lambda\bbE_{X}\bbE_{Y,Y'\overset{\rm iid}\sim P^*_{e,X}}\big[\|Y - Y'\|^{\beta}\big],
\end{equation}
We cannot directly minimise this objective function yet, though, as we do not have access to  the oracle reconstructed distribution $P^*_{e,X}$. To this end, we need our decoder to enable sampling from $P^*_{e,X}$.  

We propose to replace the oracle $P^*_{e,X}$ in \eqref{eq:obj_enc1} by the distribution induced by a decoder. Specifically, for any fixed encoder $e$, we define the corresponding decoder by the following optimisation problem:
\begin{equation}\label{eq:obj_dec}
	\min_d \left\{\bbE_{X}\bbE_{Y\sim P_{d,e(X)}}\big[\|X - Y\|^{\beta}\big] - \lambda\bbE_{X}\bbE_{Y,Y'\overset{\rm iid}\sim P_{d,e(X)}}\big[\|Y - Y'\|^{\beta}\big]\right\},
\end{equation} 
where $P_{d,z}$ denotes the distribution of $d(z,\varepsilon)$ for any $z\in\bbR^k$. 

For the result below and what follows, we make the following expressivity assumption on the decoder class.

\begin{assumption}[Decoder expressivity]\label{ass:dec_express}
For every $e\in\cE$, there exists $d\in\cD$ such that $d(e(x),\varepsilon)\sim P^*_{e,x}$ for all $x$, with $\varepsilon$ a standard Gaussian noise input.
\end{assumption}

The following proposition suggests that when optimised, the minimal value of \eqref{eq:obj_dec} is the same as the objective function \eqref{eq:obj_enc1}.
\begin{proposition}\label{prop:opt_d}
	Under Assumption~\ref{ass:dec_express}, for any encoder $e$ let $d^*\in\cD$ be such that $P^*_{e,X}=P_{d^*,e(X)}$. Then when taking $\lambda=1/2$ and any $\beta\in(0,2)$, the minimum of \eqref{eq:obj_dec} is achieved if $d=d^*$ and the minimal value is equal to \eqref{eq:obj_enc1}.
\end{proposition}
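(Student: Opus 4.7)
The plan is to recognise that with $\lambda=1/2$ the objective in \eqref{eq:obj_dec} is precisely the expected \emph{energy score} of the decoder-induced conditional distribution $P_{d,e(X)}$ scored against the data $X$. Recall that the energy score of a distribution $F$ on $\bbR^p$ at an observation $x$ is
\[
    S_\beta(F,x) \;=\; \bbE_{Y\sim F}[\|x-Y\|^\beta] \;-\; \tfrac{1}{2}\bbE_{Y,Y'\overset{\rm iid}\sim F}[\|Y-Y'\|^\beta],
\]
so that \eqref{eq:obj_dec} equals $\bbE_X[S_\beta(P_{d,e(X)},X)]$. The key tool is the classical result of \citet{gneiting2007strictly} that $S_\beta$ is a \emph{strictly proper} scoring rule for $\beta\in(0,2)$; strict propriety breaks at the boundary $\beta=2$, which is exactly why the proposition requires $\beta\in(0,2)$.

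Next, I would condition on the latent value $Z=e(X)$ and rewrite the objective as
\[
    \bbE_{X}\big[S_\beta(P_{d,e(X)},X)\big] \;=\; \bbE_Z\Big[\,\bbE_{X\mid e(X)=Z}\big[S_\beta(P_{d,Z},X)\big]\Big].
\]
For each fixed $z$ in the support of $e(X)$, the inner conditional expectation is the expected energy score of $P_{d,z}$ when the scoring observation is drawn from the oracle conditional distribution $X\mid e(X)=z$, which by Definition~\ref{def:matching} is $P^*_{e,x}$ for any $x$ with $e(x)=z$. Strict propriety of $S_\beta$ then says this inner expectation is minimised uniquely at $P_{d,z}=P^*_{e,x}$. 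By the assumption of the proposition, this minimiser is attained by $d=d^*$ for every $z$ simultaneously, so $d=d^*$ minimises \eqref{eq:obj_dec}.

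Finally, substituting $d=d^*$ into the objective and using $P_{d^*,e(X)}=P^*_{e,X}$ yields
\[
    \bbE_{X}\bbE_{Y\sim P^*_{e,X}}\big[\|X-Y\|^\beta\big] \;-\; \tfrac{1}{2}\bbE_{X}\bbE_{Y,Y'\overset{\rm iid}\sim P^*_{e,X}}\big[\|Y-Y'\|^\beta\big],
\]
which is exactly \eqref{eq:obj_enc1} evaluated at $\lambda=1/2$. The only nontrivial step is the invocation of strict propriety of the energy score; everything else is bookkeeping with the tower property of conditional expectation. The main obstacle, if one wanted a fully self-contained argument, would be reproving strict propriety of $S_\beta$ for $\beta\in(0,2)$, which follows from the fact that $-\|y-y'\|^\beta$ is a conditionally negative definite kernel and induces a metric on probability measures; here I would simply cite \citet{gneiting2007strictly} rather than redo this derivation.
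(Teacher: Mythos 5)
Your proof is correct and follows essentially the same route as the paper: both condition on the latent value, recognise the objective with $\lambda=1/2$ as an expected energy score against the oracle reconstructed distribution, and invoke strict propriety of the energy score for $\beta\in(0,2)$ to conclude that $d^*$ attains the minimum, with the minimal value reducing to \eqref{eq:obj_enc1}. The only cosmetic differences are that you cite \citet{gneiting2007strictly} directly where the paper packages strict propriety as a separate lemma based on the energy-distance results of Sz\'ekely, and your final substitution of $d=d^*$ reaches \eqref{eq:obj_enc1} slightly more directly than the paper's detour through Proposition~\ref{prop:two_terms_es_equal}.
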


Note that the objective function in \eqref{eq:obj_dec} with $\lambda= \frac 1 2$ is in fact the expected negative \emph{energy score} between $X$ and the distributional fit $P_{d,e(X)}$. 
Formally, given an observation $x$ and a distribution $P$, the {energy score}, introduced by \citet{gneiting2007strictly} as a popular proper scoring rule for evaluating multivariate distributional predictions, is defined as
$
	\bbE_{X,X'\sim P}[\|X-X'\|^\beta]/2 - \bbE_{X\sim P}[\|X-x\|^\beta],
$ where $X,X'$ are two independent draws from $P$ and $\beta\in(0,2)$. The respective distance for two distributions $P$ and $Q$ is the energy distance~\citep{szekely2003statistics}, which can equivalently be viewed as a maximum mean discrepancy with a distance-induced kernel \citep{sriperumbudur2010hilbert,sejdinovic2013equivalence}, defined as
\begin{equation}\label{eq:energy_distance}
	D(P,Q)=2\bbE\|X-Y\|^\beta-\bbE\|X-X'\|^\beta-\bbE\|Y-Y'\|^\beta
\end{equation}
where $X,X'\overset{\rm iid}\sim P$, $Y,Y'\overset{\rm iid}\sim Q$ and $\beta\in(0,2)$.

\subsection{Joint formulation for encoder and decoder}
The above results suggest to use \eqref{eq:obj_dec} as the objective equivalent to the original objectives \eqref{eq:obj_enc1} or \eqref{eq:obj_enc} for the encoder, which leads to a joint formulation for both the encoder and decoder.
Concretely, we define the population version of DPA for a fixed $k$ as the solution to the joint optimisation problem that combines \eqref{eq:obj_enc1} and \eqref{eq:obj_dec} with $\lambda=1/2$:
\begin{equation}\label{eq:obj_joint_fix_k}
	(e^*,d^*)\in\argmin_{e,d} \left\{\bbE_{X}\bbE_{Y\sim P_{d,e(X)}}\big[\|X - Y\|^{\beta}\big] - \frac{1}{2}\bbE_{X}\bbE_{Y,Y'\overset{\rm iid}\sim P_{d,e(X)}}\big[\|Y - Y'\|^{\beta}\big]\right\},
\end{equation}
where $\beta\in(0,2)$. 

The following theorem characterises the DPA solution at the population level.

\begin{theorem}\label{thm:dpa_onek}
	Under Assumption~\ref{ass:dec_express}, the DPA solution $(e^*,d^*)$ defined in \eqref{eq:obj_joint_fix_k} satisfies the following.
\begin{enumerate}[(i)]
\item The DPA encoder $e^*$ minimises the unexplained variability in the sense of \eqref{eq:obj_enc}:
\begin{equation*}
 	e^* \in \argmin_e \bbE_{X\sim P^*}\bbE_{Y,Y'\overset{\rm iid}\sim P^*_{e,X}}\big[\|Y - Y'\|^\beta\big].
 \end{equation*}
\item The DPA decoder $d^*$ induces the oracle reconstructed distribution and hence satisfies for all $x$
$$d^*(e^*(x),\varepsilon)\sim P^*_{e^*,x}.$$
\end{enumerate}
\end{theorem}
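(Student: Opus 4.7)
The plan is to decouple the joint minimization in \eqref{eq:obj_joint_fix_k} by first solving the inner optimization over $d$ for each fixed encoder $e$, and then reducing the outer problem to the original encoder objective \eqref{eq:obj_enc}. The two building blocks are already in place: Proposition~\ref{prop:opt_d} characterizes the inner minimum with $\lambda=1/2$, and Proposition~\ref{prop:two_terms_es_equal} collapses the expression \eqref{eq:obj_enc1} to a single term.

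Write $J(e,d)$ for the objective in \eqref{eq:obj_joint_fix_k}. For any fixed $e\in\cE$, the assumption supplies some $d_e\in\cD$ with $P_{d_e,e(x)}=P^*_{e,x}$ for all $x$, and Proposition~\ref{prop:opt_d} then identifies $\min_{d\in\cD}J(e,d)$ with the value of \eqref{eq:obj_enc1} at $e$. Applying Proposition~\ref{prop:two_terms_es_equal} to the first term of \eqref{eq:obj_enc1} collapses this to $\tfrac{1}{2}\bbE_{X}\bbE_{Y,Y'\overset{\rm iid}\sim P^*_{e,X}}[\|Y-Y'\|^{\beta}]$. Consequently, for any $(e^*,d^*)\in\argmin_{e,d}J(e,d)$ and any $e\in\cE$,
\[
\tfrac{1}{2}\bbE_X\bbE_{Y,Y'\overset{\rm iid}\sim P^*_{e^*,X}}\bigl[\|Y-Y'\|^{\beta}\bigr] \;\le\; J(e^*,d^*) \;\le\; J(e,d_e) \;=\; \tfrac{1}{2}\bbE_X\bbE_{Y,Y'\overset{\rm iid}\sim P^*_{e,X}}\bigl[\|Y-Y'\|^{\beta}\bigr],
\]
which is exactly part (i).

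For part (ii), the inequalities above must hold with equality at $e=e^*$, so $d^*$ is itself an inner minimizer: $J(e^*,d^*)=\min_{d\in\cD}J(e^*,d)$. The objective $J(e^*,d)$ is the expected negative energy score of the predictive distribution $P_{d,e^*(X)}$ against the random draw $X$, and conditioning on $Z=e^*(X)$ decomposes it into an average over $Z$ of per-$z$ energy scores between $P_{d,z}$ and the conditional law of $X$ given $e^*(X)=z$, which is precisely $P^*_{e^*,x}$ for any such $x$. Strict properness of the energy score for $\beta\in(0,2)$ implies that, for each $z$ in the support of $e^*(X)$, the per-$z$ objective is minimized uniquely at $P_{d,z}=P^*_{e^*,x}$. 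Combining this with the existence of a matching decoder in $\cD$ yields $P_{d^*,e^*(x)}=P^*_{e^*,x}$ for $P^*$-almost every $x$, which is (ii).

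The only delicate step is this last uniqueness argument: Proposition~\ref{prop:opt_d} by itself guarantees only that \emph{some} matching decoder attains the infimum, not that every minimizer matches. Bridging that gap requires strict properness of the energy score applied conditionally on $e^*(X)$, and is also the reason the open range $\beta\in(0,2)$ is used throughout the joint formulation (the endpoint $\beta=2$ fails strict properness).
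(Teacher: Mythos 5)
Your proof is correct and follows essentially the same route as the paper's: decouple the joint problem by applying Proposition~\ref{prop:opt_d} to the inner minimisation over $d$, collapse the resulting value via Proposition~\ref{prop:two_terms_es_equal} to obtain (i), and invoke the characterisation of the inner minimiser at $e=e^*$ for (ii). Your closing remark is a genuine (and correct) refinement: the statement of Proposition~\ref{prop:opt_d} only asserts that a matching decoder attains the minimum, whereas part (ii) needs that \emph{every} inner minimiser matches, which follows from the strict properness of the energy score (Lemma~\ref{lem:es}) applied conditionally on $e^*(X)$ --- exactly the ``if and only if'' established inside the paper's proof of Proposition~\ref{prop:opt_d} but not surfaced in its statement.
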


\begin{remark}\label{rem:expressivity}
Assumption~\ref{ass:dec_express} is a population-level expressivity condition on the chosen decoder class. By the noise outsourcing lemma \citep{austin2015exchangeable}, for each fixed $e$ there exists \emph{some} measurable function that represents $P^*_{e,x}$ as the pushforward of a Gaussian noise input; the assumption strengthens this to existence within $\cD$. In practice $\cD$ is a neural network class, and Theorem~\ref{thm:dpa_onek} should be read as a consistency statement: if the architecture is expressive enough to represent the oracle reconstructed distribution, the joint DPA objective recovers it. The assumption is not directly verifiable, and finite-sample optimisation introduces additional approximation and estimation error not addressed by the theorem.
\end{remark}

After reparametrisation via the decoder, \eqref{eq:obj_joint_fix_k} is equivalent to 
\begin{equation*}
	\argmin_{e,d} \left\{\bbE_{X}\bbE_{\varepsilon}\big[\|X - d(e(X),\varepsilon)\|^{\beta}\big] - \frac{1}{2}\bbE_{X}\bbE_{\varepsilon,\varepsilon'}\big[\|d(e(X),\varepsilon) - d(e(X),\varepsilon')\|^{\beta}\big]\right\},
\end{equation*}
where $\varepsilon$ and $\varepsilon'$ are two independent draws from a standard Gaussian.
Note that for a deterministic decoder $d$ and $\beta=2$, the second term in the objective vanishes and we again recover the AE objective as in \eqref{eq:obj_ae}.

Given an iid sample $\{X_1,\dots, X_n\}$ of data $X$ from $P^*$ and an iid sample $\{\varepsilon_{ij}:i=1,\dots,n,j=1,\dots,m\}$ of noise $\varepsilon$ from $\cN(0,I_l)$, the finite-sample version of DPA is defined via plug-in estimates as
\begin{equation*}
	\argmin_{e,d} \frac{1}{n}\sum_{i=1}^n\left[\frac{1}{m}\sum_{j=1}^m\|X_i - d(e(X_i),\varepsilon_{ij})\|^\beta - \frac{1}{2m(m-1)}\sum_{j,j'=1}^m\|d(e(X_i),\varepsilon_{ij})-d(e(X_i),\varepsilon_{ij'})\|^\beta\right].
\end{equation*}
A more detailed algorithm based on the finite-sample loss is given at the end of this section.

In some applications it is useful to regularise the embedding space to satisfy distributional constraints, for example to encourage the embedding to follow a pre-specified distribution as in VAE. Such constraints can be accommodated by adding to the objective in \eqref{eq:obj_joint_fix_k} the regularisation term
\begin{equation*}
	\lambda_1\left[\bbE\|Z-e(X)\|-\frac12\bbE\|e(X)-e(X')\|\right],
\end{equation*}
where $Z\sim Q_z$ is drawn from the target distribution and $X,X'\overset{\rm iid}\sim P^*$. This is the expected negative energy score between $Z$ and the distribution of $e(X)$, which is minimised if and only if $e(X)\sim Q_z$.

\subsection{DPA for an adaptive latent dimension}\label{subsec:adaptive}
To allow flexibility in selecting the latent dimension, we extend the above DPA procedure for a fixed latent dimension $k$ to be valid for multiple $k\in\{0,1,\dots,p\}$, thus allowing  an adaptive choice of the retained dimension of the data. Applications for which the reconstruction quality is of high priority would favour a larger $k$, while users with a limited computational or memory budget would typically prefer a smaller $k$, but the choice will also depend on how much information is lost when reducing to smaller latent dimensions. Note that we also include the case of $k=0$ which corresponds to an unconditional generation problem where the decoder, a.k.a.\ the generator, takes only noise $\varepsilon$ as its input.

We start with describing how the DPA procedure for a fixed $k$ proposed in the previous section fits into our current model framework with a varying $k$. 
For the following, we first expand the latent space to the full dimension of data and later impose constraints to the subspaces in a way that induces an ordering. 
Specifically, we consider an encoder $e(.):\bbR^p\to\bbR^p$ and a decoder $d(.):\bbR^p\to\bbR^p$ (part of whose arguments can be noise). 
Let noise $\varepsilon\sim\cN(0,I_p)$ which also has the same dimension as data. 
Consider now any $k\in\{0,1,\dots,p\}$. 
 For the encoder, we simply adopt the criterion in \eqref{eq:obj_enc} to the first $k$ components of our current encoder in $\bbR^p$, which becomes
\begin{equation*}
	\argmin_e \bbE_{X}\bbE_{Y,Y'\overset{\rm iid}\sim P^*_{e_{1:k},X}}\big[\|Y - Y'\|^{\beta}\big],
\end{equation*}
while the remaining $p-k$ components of the encoder are left unconstrained for now.
For the decoder, the following proposition inspires an adaptive way that allows the decoder to take a varying number of components from the encoder. 
\begin{proposition}\label{prop:simu_opt_d}
	For any invertible map $e$ such that $e(X)\overset{d}= \varepsilon$ (assume it exists), there exists an optimal decoder $d^*$ such that $d^*(e_{1:k}(x),\varepsilon_{(k+1):p})\sim P^*_{e_{1:k},x}$ for all $k$. 
\end{proposition}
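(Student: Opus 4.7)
My plan is to construct the decoder explicitly by taking $d^\ast = e^{-1}$, viewed as a function whose first $k$ arguments are latent coordinates and whose last $p-k$ arguments play the role of noise. The choice is natural because $e$ is assumed to be invertible between $\bbR^p$ and $\bbR^p$, and the additional hypothesis $e(X) \overset{d}{=} \varepsilon \sim \cN(0,I_p)$ is precisely what will make the same $d^\ast$ work for every $k$.

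The key structural observation is that if $Z := e(X) \sim \cN(0,I_p)$, then the coordinates of $Z$ are mutually independent. Hence for every $k \in \{0,1,\dots,p\}$, the blocks $Z_{1:k}$ and $Z_{(k+1):p}$ are independent, which means that conditional on $Z_{1:k} = e_{1:k}(x)$, the block $Z_{(k+1):p}$ still has its marginal distribution $\cN(0,I_{p-k})$. I would spell this out as a short lemma at the beginning of the proof.

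Next, I would unwind the definition of the oracle reconstructed distribution. By Definition~\ref{def:matching}, $P^\ast_{e_{1:k},x}$ is the law of $X$ conditional on $e_{1:k}(X)=e_{1:k}(x)$. Writing $X = e^{-1}(Z_{1:k}, Z_{(k+1):p})$ and conditioning on $Z_{1:k} = e_{1:k}(x)$, the previous step shows that this conditional law equals the pushforward of $\cN(0,I_{p-k})$ under the map $z \mapsto e^{-1}(e_{1:k}(x), z)$. On the other hand, since $\varepsilon_{(k+1):p} \sim \cN(0,I_{p-k})$, the random variable
\begin{equation*}
d^\ast\big(e_{1:k}(x),\, \varepsilon_{(k+1):p}\big) \;=\; e^{-1}\big(e_{1:k}(x),\, \varepsilon_{(k+1):p}\big)
\end{equation*}
has exactly the same pushforward distribution, which gives the claimed equality in distribution.

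I do not anticipate a genuine obstacle: the whole argument is driven by the independence of the coordinates of an isotropic Gaussian together with the invertibility of $e$. The only point that warrants care is verifying that a single $d^\ast$ works simultaneously for all $k$ rather than requiring a different decoder for each $k$; this is immediate once one notes that the decomposition $\varepsilon = (\varepsilon_{1:k},\varepsilon_{(k+1):p})$ yields an independent pair for every $k$, so the same inverse map $e^{-1}$ consumes $k$ latent coordinates and $p-k$ fresh noise coordinates consistently across $k$. Edge cases $k=0$ (pure generation from noise) and $k=p$ (full latent, no noise) fall out as special cases of the same formula.
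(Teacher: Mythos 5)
Your proposal is correct and follows essentially the same route as the paper's proof: both take $d^\ast=e^{-1}$ and use the independence of the coordinates of $\varepsilon$ (so that $e_{(k+1):p}(X)$ given $e_{1:k}(X)$ has the marginal law of $\varepsilon_{(k+1):p}$) to conclude that the same inverse map works simultaneously for every $k$. Your version merely spells out the pushforward step more explicitly than the paper does.
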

It indicates that for any encoder $e$ whose push-forward of $X$ coincides with the noise distribution, there exists a joint optimal decoder that takes as arguments the first $k$ components of the encoder $e_{1:k}(x)$ together with additional $(p-k)$-dimensional noise $\varepsilon_{(k+1):p}$ and that matches the oracle reconstructed distribution $P^*_{e_{1:k},X}$ for all $k$ simultaneously. The condition $e(X)\overset{d}=\varepsilon$ is not a real restriction: any encoder can be composed with a normalising transformation to satisfy it without changing the subspace it identifies, and in practice the neural network classes we use are expressive enough for this to hold.

Motivated by this result, we devise the following procedure to sample from a reconstructed distribution for any $k\in\{0,1,\dots,p\}$:
\begin{enumerate}[(i)]
	\item For a data sample $X$, obtain the latent variable from the encoder, i.e.\ $Z=e(X)$.
	\item Keep only the first $k$ components of the latent variable while filling the remaining $p-k$ dimensions with i.i.d.\ standard Gaussian noise entries, i.e.\ $\tilde{Z}=(Z_1,\dots,Z_k,\varepsilon_{k+1},\dots,\varepsilon_p)$. 
	\item Obtain a reconstructed sample from the decoder, i.e.\ $Y=d(\tilde{Z})$.
\end{enumerate}
To learn the oracle reconstructed distribution $P^*_{e_{1:k},X}$, the decoder formulation in \eqref{eq:obj_dec} becomes
\begin{equation}\label{eq:obj_dec_k}
	\argmin_d \left\{\bbE_{X}\bbE_{Y\sim P_{d,e_{1:k}(X)}}\big[\|X - Y\|^{\beta}\big] - \frac{1}{2}\bbE_{X}\bbE_{Y,Y'\overset{\rm iid}\sim P_{d,e_{1:k}(X)}}\big[\|Y - Y'\|^{\beta}\big]\right\},
\end{equation}
where $P_{d,e_{1:k}(x)}$ denotes the distribution of $d(e_{1:k}(x),\varepsilon_{(k+1):p})$ for any $x$, and $\beta\in(0,2)$. Proposition~\ref{prop:simu_opt_d} suggests that $d^*$ defined therein is a solution to \eqref{eq:obj_dec_k}, simultaneously for all $k\in\{0,1,\dots,p\}$.

Next, to accommodate for multiple $k$'s simultaneously, we propose to minimise a weighted average of the loss functions for all $k$ with respect to a common encoder:
\begin{equation}\label{eq:obj_enc_all_k}
	\argmin_e \sum_{k=0}^p\omega_k\bbE_{X}\bbE_{Y,Y'\overset{\rm iid}\sim P^*_{e_{1:k},X}}\big[\|Y - Y'\|^{\beta}\big],
\end{equation}
where the weights $\omega_k\in[0,1]$ for all $k$ and $\sum_{k=0}^p\omega_k=1$; for example, one can take uniform weights $\omega_k\equiv 1/(p+1)$. Note that DPA with a fixed $k=\tilde{k}$ introduced in the previous section is a special case of \eqref{eq:obj_enc_all_k} with $\omega_k=1\{k=\tilde{k}\}$. 
In the following proposition, we connect this formulation to PCA.
\begin{proposition}\label{prop:lin_gauss2}
	In the setting of Example~\ref{ex:lin_gauss}, the solution to optimisation problem \eqref{eq:obj_enc_all_k} with $\beta=2$ is uniquely given by $e^*(x)=Q^\top x$, independent of the choice of weights. 
\end{proposition}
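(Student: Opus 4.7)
My approach is to reduce the weighted-sum objective in \eqref{eq:obj_enc_all_k} to a sum of conditional-variance terms in the Gaussian setting, and then invoke Proposition~\ref{prop:lin_gauss1} simultaneously for every $k$. The key structural property I want to exploit is that the $k$-th summand depends only on the first $k$ columns of the orthogonal encoder matrix $M$, so an $M$ whose $k$-column prefix solves each summand individually also solves the weighted sum.

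First I would rewrite each summand in closed form. For $Y, Y' \overset{\mathrm{iid}}\sim P^*_{e_{1:k}, X}$ conditionally on $M_{:k}^\top X$, expanding the squared norm and using conditional independence gives $\bbE\bigl[\|Y - Y'\|^2 \mid M_{:k}^\top X\bigr] = 2\,\tr\,\cov(X \mid M_{:k}^\top X)$. Under the Gaussian assumption of Example~\ref{ex:lin_gauss}, $\cov(X \mid M_{:k}^\top X)$ is non-random, so the outer expectation is trivial, and the $k$-th term reduces to a deterministic function $L_k(M_{:k}) := 2\,\tr\,\cov(X \mid M_{:k}^\top X)$ of the first $k$ columns of $M$ alone.

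Next I would invoke Proposition~\ref{prop:lin_gauss1} separately for each $k \in \{0, \dots, p\}$: among semi-orthogonal $p \times k$ matrices, $L_k$ is minimized exactly at $M_{:k} = Q_{:k}\Pi$ for some permutation matrix $\Pi$. The single choice $M = Q$ achieves $M_{:k} = Q_{:k}$ simultaneously for every $k$, so it attains the termwise lower bound $\sum_k \omega_k \min L_k$ of the weighted objective. Hence $M = Q$ is a minimizer irrespective of $\{\omega_k\}$.

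For uniqueness, I would argue inductively on $k$ (with all $\omega_k > 0$ and distinct eigenvalues, which I read as implicit in the statement): any minimizer $M$ must also minimize each $L_k$, forcing $\mathrm{span}(m_1, \dots, m_k) = \mathrm{span}(q_1, \dots, q_k)$ for every $k$, where $m_j$ denotes the $j$-th column of $M$. Combined with orthonormality, this successively forces $m_j = \pm q_j$, giving $e^*(x) = Q^\top x$ up to the same sign ambiguity already implicit in Proposition~\ref{prop:lin_gauss1}. The main obstacle is conceptual rather than computational: one must recognize that a single orthogonal $M$ can simultaneously minimize every $L_k$, a nontrivial compatibility condition on the nested prefixes $M_{:1} \subset M_{:2} \subset \cdots \subset M_{:p}$ that is satisfied precisely because the top-$k$ eigenspaces $\mathrm{span}(q_1, \dots, q_k)$ form a nested flag under the eigenbasis $Q$. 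Without this alignment, the weighted problem would force a tradeoff across $k$ depending on $\{\omega_k\}$, contradicting the "independent of weights" conclusion.
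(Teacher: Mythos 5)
Your proof is correct and follows essentially the same route as the paper's: reduce each summand via Proposition~\ref{prop:lin_gauss1} to a termwise minimum attained by a prefix of $Q$, observe that $M=Q$ attains all these minima simultaneously because the top-$k$ eigenspaces are nested, and pin down the ordering (hence weight-independence) from the requirement that every prefix span the corresponding leading eigenspace. You are in fact somewhat more careful than the paper's own brief argument, correctly flagging that strict positivity of the weights, distinctness of the eigenvalues, and the residual column-sign ambiguity are needed for the uniqueness claim as stated.
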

Compared to the result in Proposition~\ref{prop:lin_gauss1}, with \eqref{eq:obj_enc_all_k}, we can recover not only the same subspace as the one obtained by PCA, but also the ordering of the principal directions. Moreover, in this case, the solution to \eqref{eq:obj_enc_all_k} does not depend on the choice of weights as long as all weights are strictly positive, and each term within the summation can be minimised simultaneously. It remains an open problem whether such simultaneous optimality holds more generally. If it cannot hold, the weights will influence the trade-off among different $k$'s in the optimal solution. We will suggest a default choice of the weights that work well in all our numerical experiments in Section~\ref{sec:empirical}. In addition, DPA for a varying latent dimension could help guide the choice of the most appropriate latent dimension, a long-standing question for dimension reduction techniques in general. 

We complete the algorithm for solving \eqref{eq:obj_enc_all_k} via a decoder to match the oracle reconstructed distributions $P^*_{e_{1:k},X}$ for any given $e(.)$ and all $k$, defined as follows:
\begin{equation}\label{eq:obj_d_avg}
	\argmin_d \sum_{k=0}^p\omega_k\left[\bbE_{X}\bbE_{Y\sim P_{d,e_{1:k}(X)}}\big[\|X - Y\|^{\beta}\big] - \frac{1}{2}\bbE_{X}\bbE_{Y,Y'\overset{\rm iid}\sim P_{d,e_{1:k}(X)}}\big[\|Y - Y'\|^{\beta}\big]\right].
\end{equation}
Based on Proposition~\ref{prop:simu_opt_d}, we know that there exists a $d^*$ that minimises all the terms for all $k$ in the above objective function.

Finally, similar to \eqref{eq:obj_joint_fix_k}, we define the population version of DPA for varying latent dimensions $k=0,1,\dots,p$ by  a joint formulation that combines \eqref{eq:obj_enc_all_k} and \eqref{eq:obj_d_avg}:
\begin{equation}\label{eq:obj_joint}
	\argmin_{e,d}\sum_{k=0}^p\omega_k\left[\bbE_{X}\bbE_{Y\sim P_{d,e_{1:k}(X)}}\big[\|X - Y\|^{\beta}\big] - \frac{1}{2}\bbE_{X}\bbE_{Y,Y'\overset{\rm iid}\sim P_{d,e_{1:k}(X)}}\big[\|Y - Y'\|^{\beta}\big]\right].
\end{equation}
Similar to Theorem~\ref{thm:dpa_onek}, the solution to \eqref{eq:obj_joint} consists of an optimal encoder and decoder that satisfy \eqref{eq:obj_enc_all_k} and \eqref{eq:obj_d_avg}, respectively.

In practice, enumerating over $\{0,1,\dots,p\}$ is sometimes unnecessary and computationally expensive, especially when $p$ is very large. A more efficient and practical way is to consider a subset $\cK\subseteq\{0,1,\dots,p\}$ consisting of all possible numbers of components that one may want to keep. 

We parametrize our encoder $e$ and decoder $d$ by neural networks and solve problem \eqref{eq:obj_joint} by gradient descent algorithms.
We summarise the finite-sample procedure for DPA for varying latent dimensions in Algorithm~\ref{alg:dpa}. As noted above, the algorithm includes DPA for a fixed $k$ as a special case. For ease of presentation, we present here a gradient descent algorithm with a full-batch of $X$ but a mini-batch of $\varepsilon$ of size 2 for each $X$ sample. In practice, one can also use a mini-batch of $X$ at each gradient step. For the exponent in the objective function, we take $\beta=1$ by default.

{\centering
\begin{minipage}{\linewidth}
\vskip 0.1in
\begin{algorithm}[H]
\DontPrintSemicolon
\KwInput{Sample $\{X_1,\dots,X_n\}$, set $\cK$, $K=\max(\cK)$, weights $\omega_k$, initial $e,d$}
\For{each iteration}{
\For{$k\in\cK$}{
\For{$i\in\{1,\dots,n\}$}{
Compute $Z_i=e(X_i)$ and sample $\varepsilon_i,\varepsilon'_i\overset{\rm iid}\sim \cN(0,I_p)$\\
Draw  samples $\hat{X}_i=d(Z_{i,1:k},\varepsilon_{i,(k+1):p})$ and $\hat{X}'_i=d(Z_{i,1:k},\varepsilon'_{i,(k+1):p})$ }
Compute loss $L_k=\frac{1}{2n}\sum_{i=1}^n\big[\|X_i-\hat{X}_i\|+\|X_i-\hat{X}'_i\|-\|\hat{X}_i-\hat{X}'_i\|\big]$
}
Compute loss $\sum_{k\in\cK}\omega_k L_k$ and update parameters of $e,d$ by descending its gradients
}
\KwReturn{$e,d$}
\caption{Distributional Principal Autoencoder (DPA)}
\label{alg:dpa}
\end{algorithm}
\end{minipage}
\vskip 0.1in
\par
}

The choice of $m=2$ noise draws per sample in Algorithm~\ref{alg:dpa} matches the estimator used in the regression setting by \citet{shen2023engression}, where its stability, including the absence of the variance-collapse failure mode one might expect from such a small $m$, is studied both theoretically and empirically. The same analysis carries over to the unsupervised setting considered here.

\section{Empirical results}\label{sec:empirical}

We present empirical studies on data sets across various domains. We first consider two benchmark image data sets: \textsc{mnist}~\citep{lecun1998mnist}, a data set of $28\times28$ hand-written digits, and \textsc{disk}~\citep{ho2023information} where each sample is a $32\times32$ image consisting of two disks with randomly generated radiuses and x/y-positions for each disk, leading to an intrinsic dimension of 6. We then turn our focus to two scientific scenarios: climate data and single-cell RNA sequencing data. Specifically, we consider monthly regional temperature (\textsc{r-temp}) and precipitation (\textsc{r-precip}) data in central Europe with dimensions of $128\times128$, and global precipitation fields (\textsc{g-precip}) with a dimension of $360\times180$. Additionally, we consider 8 publicly available single-cell data sets (\textsc{sc1-8}) with dimensions ranging from 1k to 6k. See Appendix~\ref{app:expe_details} for more details of data and all experimental settings. 
\subsection{Reconstructions}\label{subsec:recon}
We investigate the reconstruction performance through visualisations and quantitative metrics. The empirical results below show two things: (a) DPA reconstructions follow the same distribution as the original data at every retained latent dimension $k$, and (b) the latent ordering induced by the encoder objective is meaningful, with the first few components carrying the most informative directions of variation. We compare DPA with methods that produce ordered embeddings, namely PCA and the PCA-variants of AE given in \eqref{eq:iob} with the same $\cK$ as DPA, which we still call `AE'. Throughout all experiments, we use uniform weights $\omega_k\equiv1/|\cK|$. Comparisons against VAE and WAE are deferred to Appendix~\ref{app:add_exp}, where we report results at a fixed $k=2$ since these methods do not produce a single embedding valid simultaneously across multiple latent dimensions.

Figures~\ref{fig:mnist_recon}-\ref{fig:rcmp_recon} demonstrate visually the reconstructions for the 4 image data sets on the held-out test set (results on \textsc{g-precip} were illustrated in Figure~\ref{fig:gcm_illus}). 
For $k=0$, DPA produces samples from the unconditional data distribution, while PCA and AE degenerate into a point mass at the mean. As shown in the panels of the last column, DPA samples are realistic digits, disks, and regional maps for temperature or precipitation, while PCA and AE reconstructions are just the unconditional means of the full data set, with little informative content. Similar behaviour persists for small but positive $k$: DPA generates reasonable samples regardless of $k$, whereas PCA and AE tend to blur out the original data.

Even for sufficiently large $k$ (panels on the left), PCA still fails to produce high-quality results due to its linearity restriction (e.g.\ on \textsc{mnist} and \textsc{disk}), while AE sometimes overfits (e.g.\ in Figure~\ref{fig:disk_recon}). In contrast, DPA reconstructions become fairly close to the raw data as $k$ grows, benefiting from its expressive model capacity (versus PCA) without suffering severely from overfitting (versus AE).

\begin{figure}
\centering
\begin{tabular}{c@{}cccc}
	& \small $k=32$ & \small $k=8$ & \small $k=2$ & \small $k=0$\\
	\rotatebox[origin=c]{90}{\small{true}}\hspace{4pt}\smallskip &
	\includegraphics[width=0.2\textwidth,align=c]{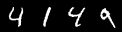} &
	\includegraphics[width=0.2\textwidth,align=c]{fig/mnist/true_i1.png} &
	\includegraphics[width=0.2\textwidth,align=c]{fig/mnist/true_i1.png} &
	\includegraphics[width=0.2\textwidth,align=c]{fig/mnist/true_i1.png} \\
	\rotatebox[origin=c]{90}{\small{PCA}}\hspace{4pt}\smallskip &
	\includegraphics[width=0.2\textwidth,align=c]{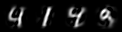} &
	\includegraphics[width=0.2\textwidth,align=c]{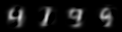} &
	\includegraphics[width=0.2\textwidth,align=c]{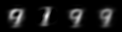} &
	\includegraphics[width=0.2\textwidth,align=c]{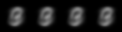}\\
	\rotatebox[origin=c]{90}{\small{AE}}\hspace{4pt}\smallskip &
	\includegraphics[width=0.2\textwidth,align=c]{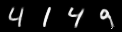} &
	\includegraphics[width=0.2\textwidth,align=c]{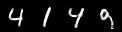} &
	\includegraphics[width=0.2\textwidth,align=c]{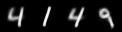} &
	\includegraphics[width=0.2\textwidth,align=c]{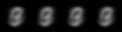} \\
	\rotatebox[origin=c]{90}{\small{DPA}}\hspace{4pt} &
	\includegraphics[width=0.2\textwidth,align=c]{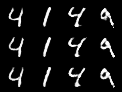} &
	\includegraphics[width=0.2\textwidth,align=c]{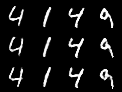} &
	\includegraphics[width=0.2\textwidth,align=c]{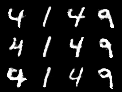} &
	\includegraphics[width=0.2\textwidth,align=c]{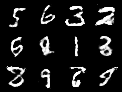} 
\end{tabular}
\caption{Reconstructions for \textsc{mnist}.}\label{fig:mnist_recon}
\end{figure}

\begin{figure}
\centering
\begin{tabular}{@{}c@{}DDDDD}
	& \small $k=8$ & \small $k=6$ & \small $k=4$ & \small $k=2$ & \small $k=0$\\
	\rotatebox[origin=c]{90}{\small{true}}\hspace{4pt}\smallskip &
	\includegraphics[width=0.18\textwidth,align=c]{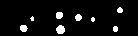} &
	\includegraphics[width=0.18\textwidth,align=c]{fig/kdisk/true_i0.png} &
	\includegraphics[width=0.18\textwidth,align=c]{fig/kdisk/true_i0.png} &
	\includegraphics[width=0.18\textwidth,align=c]{fig/kdisk/true_i0.png} &
	\includegraphics[width=0.18\textwidth,align=c]{fig/kdisk/true_i0.png} \\
	\rotatebox[origin=c]{90}{\small{PCA}}\hspace{4pt}\smallskip &
	\includegraphics[width=0.18\textwidth,align=c]{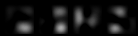} &
	\includegraphics[width=0.18\textwidth,align=c]{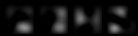} &
	\includegraphics[width=0.18\textwidth,align=c]{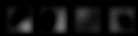} &
	\includegraphics[width=0.18\textwidth,align=c]{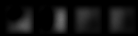} &
	\includegraphics[width=0.18\textwidth,align=c]{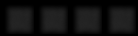} \\
	\rotatebox[origin=c]{90}{\small{AE}}\hspace{4pt}\smallskip &
	\includegraphics[width=0.18\textwidth,align=c]{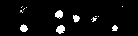} &
	\includegraphics[width=0.18\textwidth,align=c]{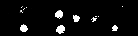} &
	\includegraphics[width=0.18\textwidth,align=c]{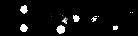} &
	\includegraphics[width=0.18\textwidth,align=c]{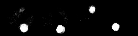} &
	\includegraphics[width=0.18\textwidth,align=c]{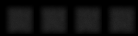} \\
	\rotatebox[origin=c]{90}{\small{DPA}}\hspace{4pt}\smallskip &
	\includegraphics[width=0.18\textwidth,align=c]{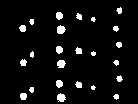} &
	\includegraphics[width=0.18\textwidth,align=c]{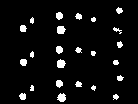} &
	\includegraphics[width=0.18\textwidth,align=c]{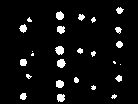} &
	\includegraphics[width=0.18\textwidth,align=c]{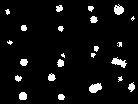} &
	\includegraphics[width=0.18\textwidth,align=c]{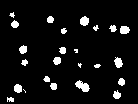} \\
\end{tabular}
\caption{Reconstructions for \textsc{disk}.}\label{fig:disk_recon}
\end{figure}

\begin{figure}
\centering
\begin{tabular}{c@{}BBBB}
	& \small $k=32$ & \small $k=8$ & \small $k=2$ & \small $k=0$\\
	\rotatebox[origin=c]{90}{\small{true}}\hspace{4pt}\smallskip &
	\includegraphics[width=0.23\textwidth,align=c]{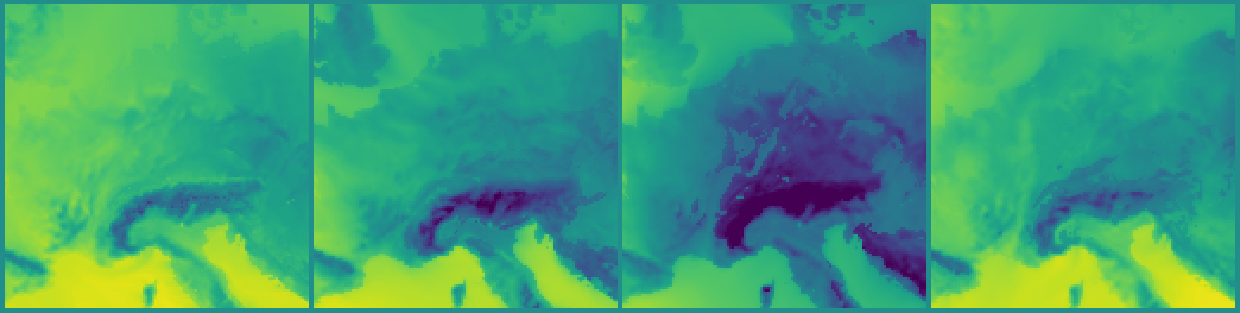} &
	\includegraphics[width=0.23\textwidth,align=c]{fig/rcm_t/true_i7.png} &
	\includegraphics[width=0.23\textwidth,align=c]{fig/rcm_t/true_i7.png} &
	\includegraphics[width=0.23\textwidth,align=c]{fig/rcm_t/true_i7.png} \\
	\rotatebox[origin=c]{90}{\small{PCA}}\hspace{4pt}\smallskip &
	\includegraphics[width=0.23\textwidth,align=c]{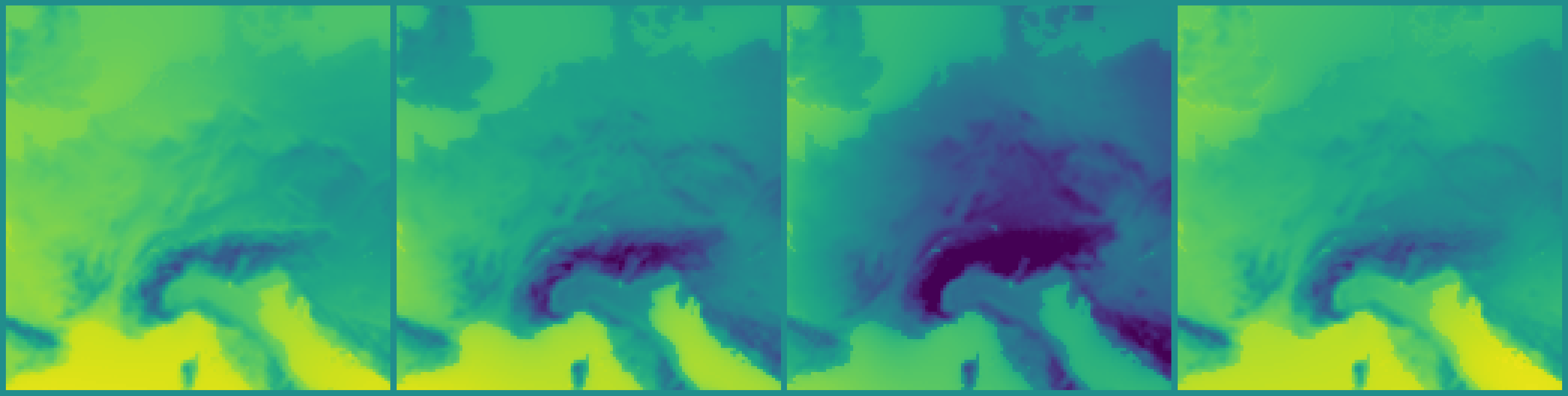} &
	\includegraphics[width=0.23\textwidth,align=c]{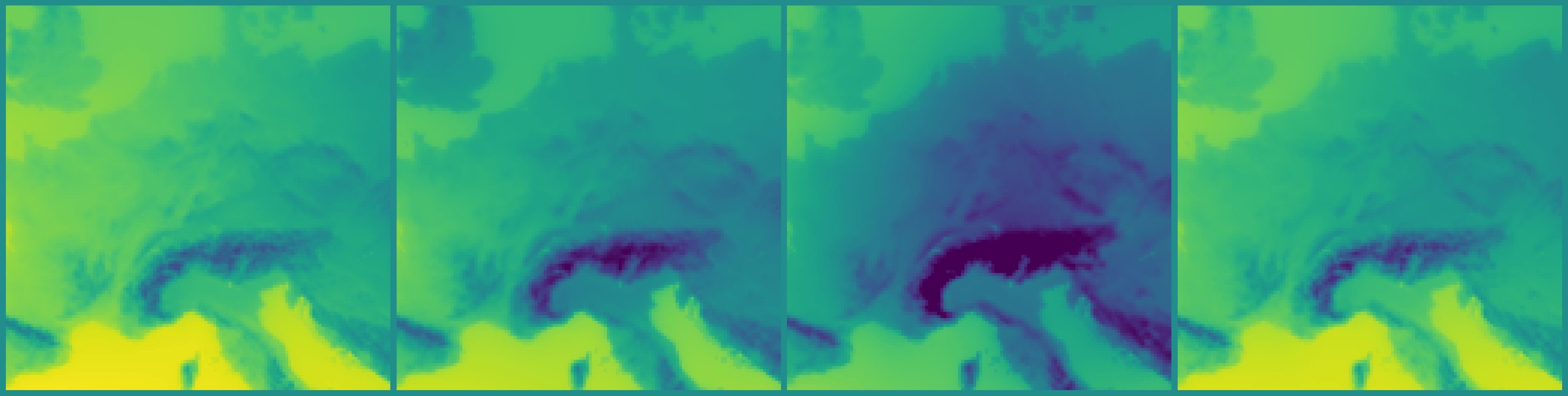} &
	\includegraphics[width=0.23\textwidth,align=c]{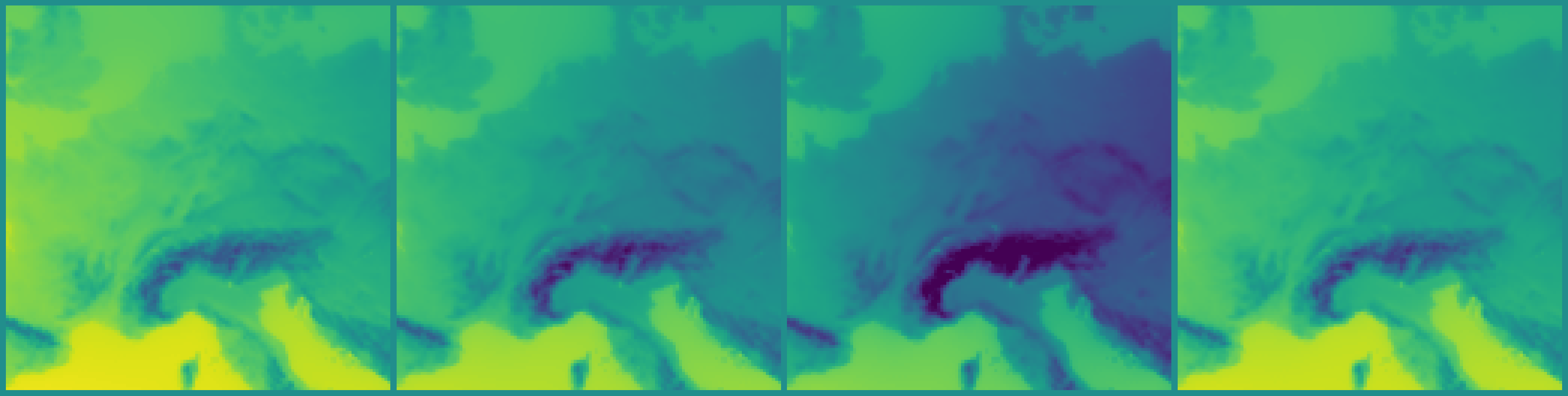} &
	\includegraphics[width=0.23\textwidth,align=c]{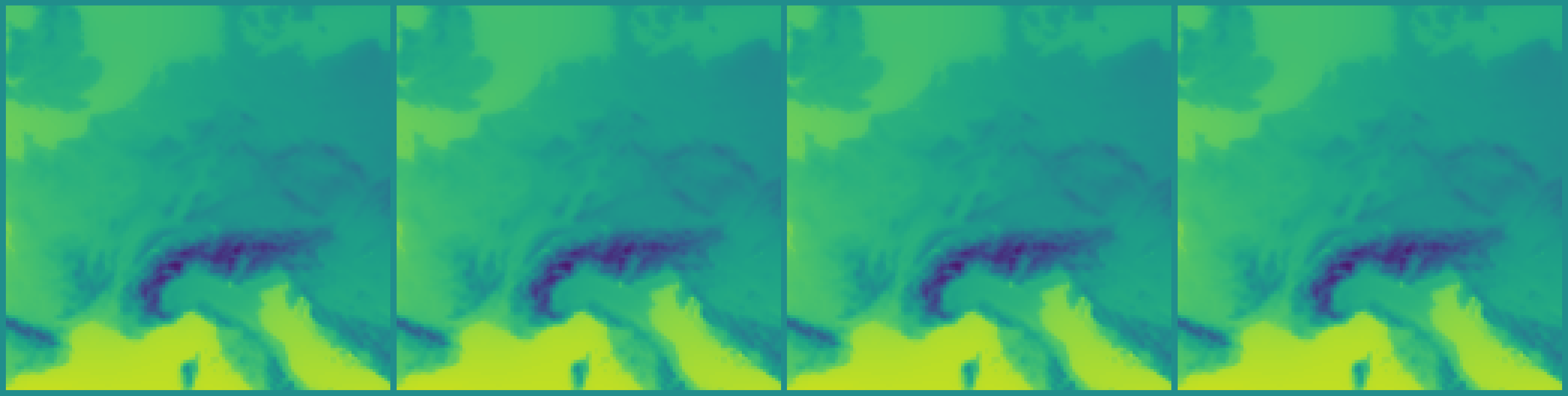}\\
	\rotatebox[origin=c]{90}{\small{AE}}\hspace{4pt}\smallskip &
	\includegraphics[width=0.23\textwidth,align=c]{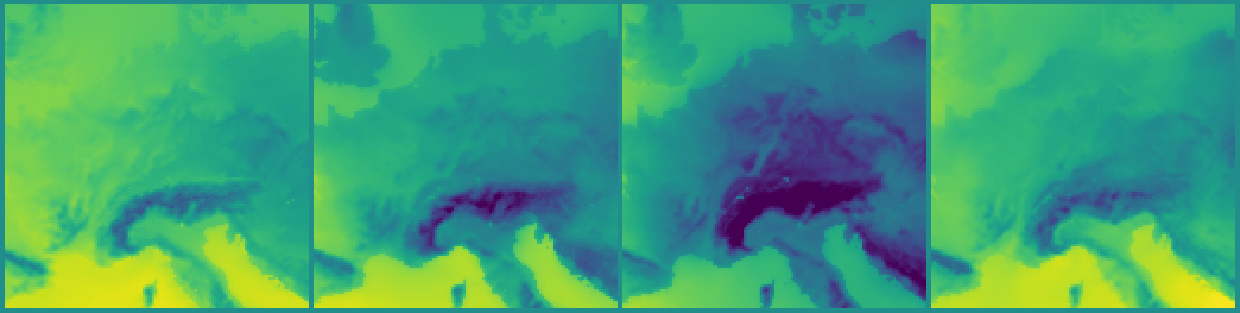} &
	\includegraphics[width=0.23\textwidth,align=c]{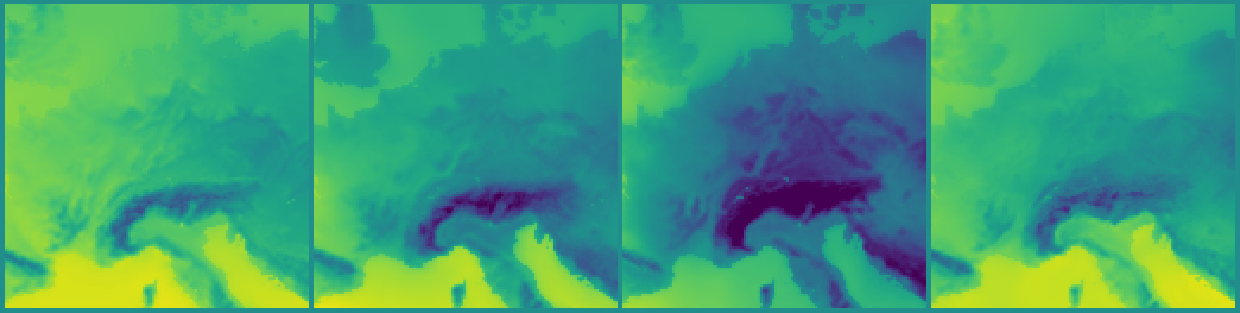} &
	\includegraphics[width=0.23\textwidth,align=c]{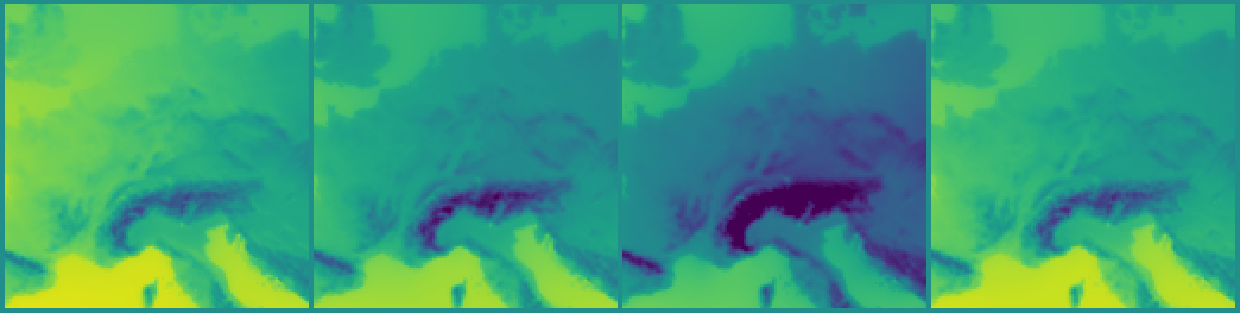} &
	\includegraphics[width=0.23\textwidth,align=c]{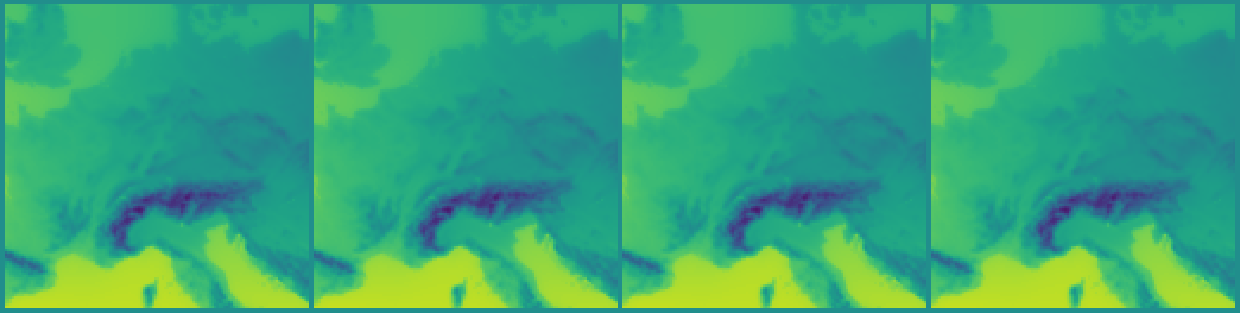} \\
	\rotatebox[origin=c]{90}{\small{DPA}}\hspace{4pt} &
	\includegraphics[width=0.23\textwidth,align=c]{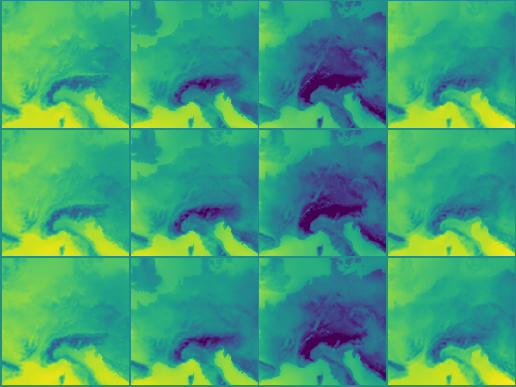} &
	\includegraphics[width=0.23\textwidth,align=c]{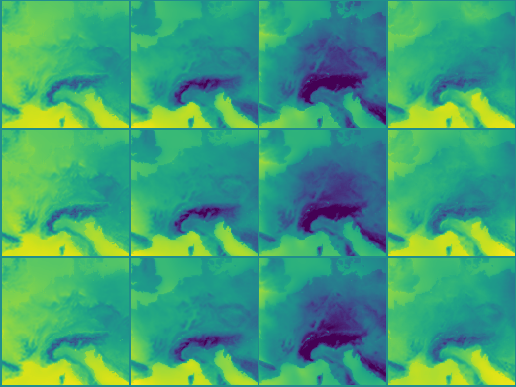} &
	\includegraphics[width=0.23\textwidth,align=c]{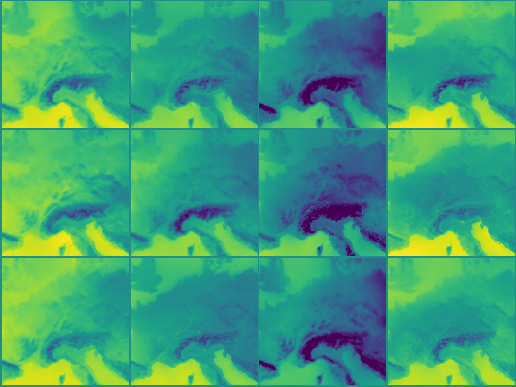} &
	\includegraphics[width=0.23\textwidth,align=c]{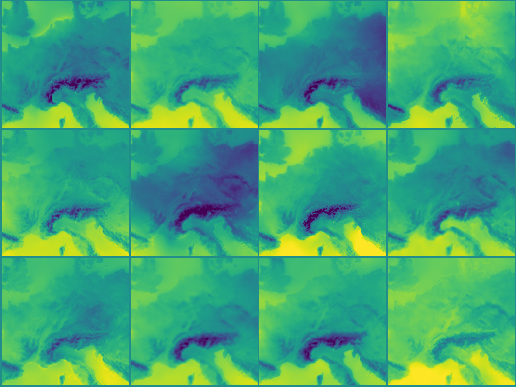} 
\end{tabular}
\caption{Reconstructions for \textsc{r-temp}.}\label{fig:rcmt_recon}
\end{figure}

\begin{figure}
\centering
\begin{tabular}{c@{}BBBB}
	& \small $k=32$ & \small $k=8$ & \small $k=2$ & \small $k=0$\\
	\rotatebox[origin=c]{90}{\small{true}}\hspace{4pt}\smallskip &
	\includegraphics[width=0.23\textwidth,align=c]{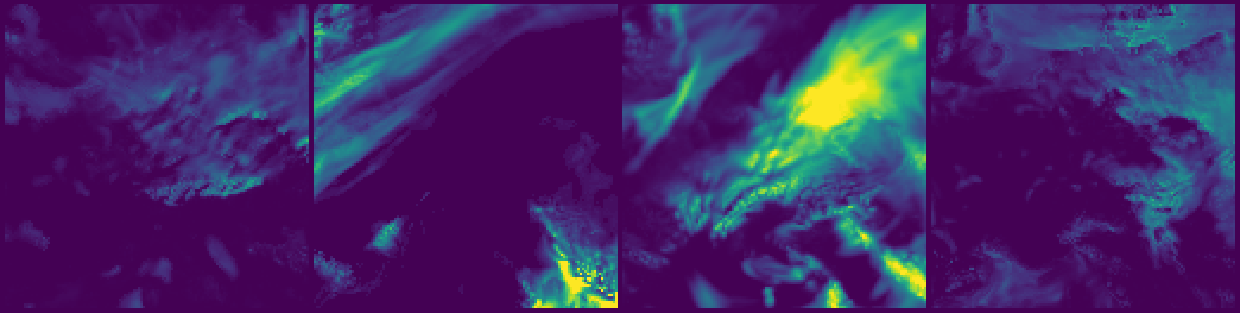} &
	\includegraphics[width=0.23\textwidth,align=c]{fig/rcm_p/true_i0.png} &
	\includegraphics[width=0.23\textwidth,align=c]{fig/rcm_p/true_i0.png} &
	\includegraphics[width=0.23\textwidth,align=c]{fig/rcm_p/true_i0.png} \\
	\rotatebox[origin=c]{90}{\small{PCA}}\hspace{4pt}\smallskip &
	\includegraphics[width=0.23\textwidth,align=c]{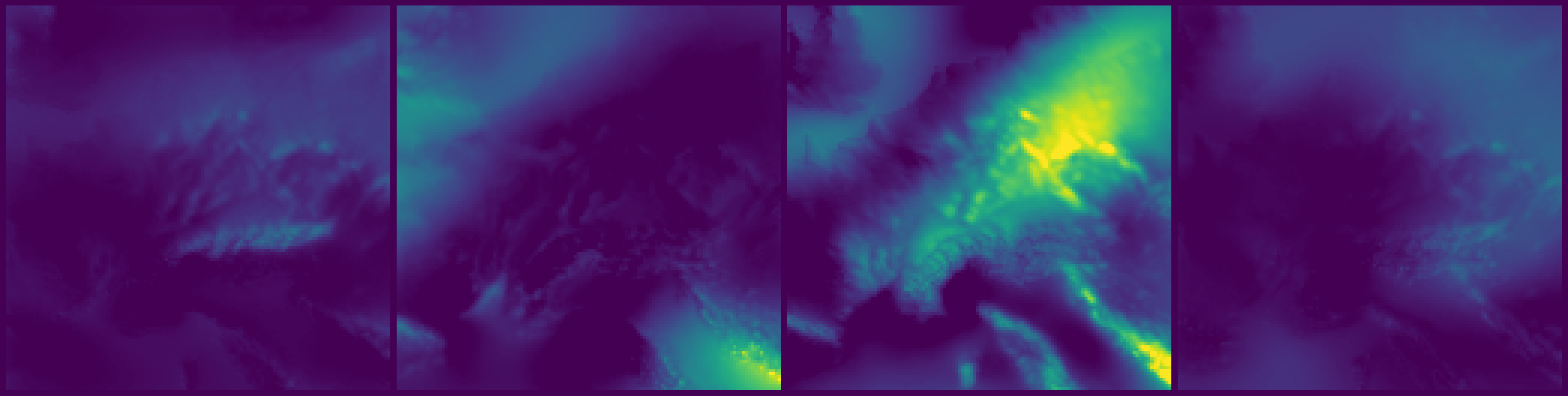} &
	\includegraphics[width=0.23\textwidth,align=c]{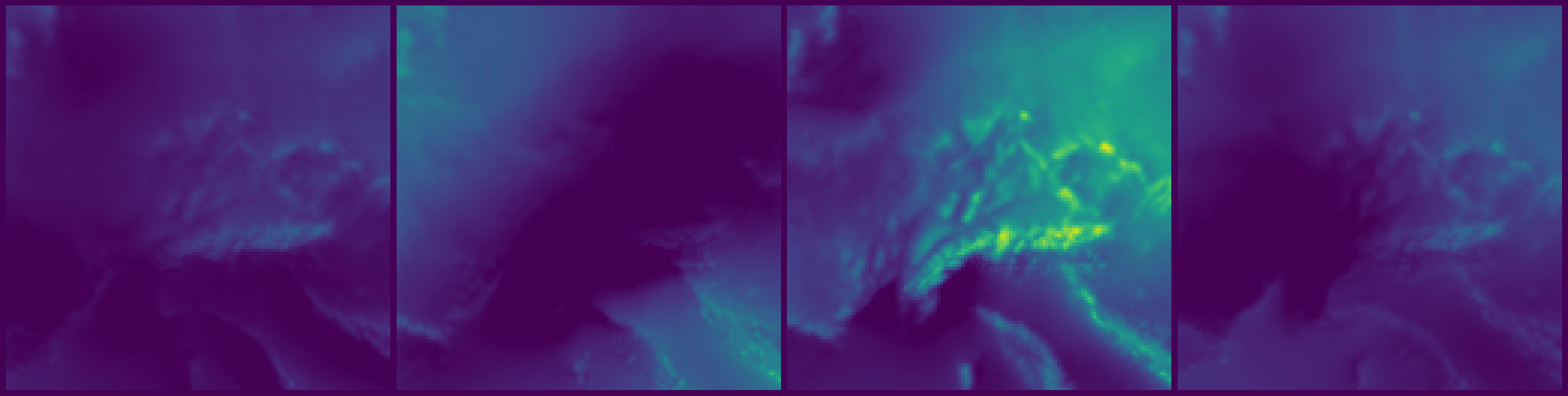} &
	\includegraphics[width=0.23\textwidth,align=c]{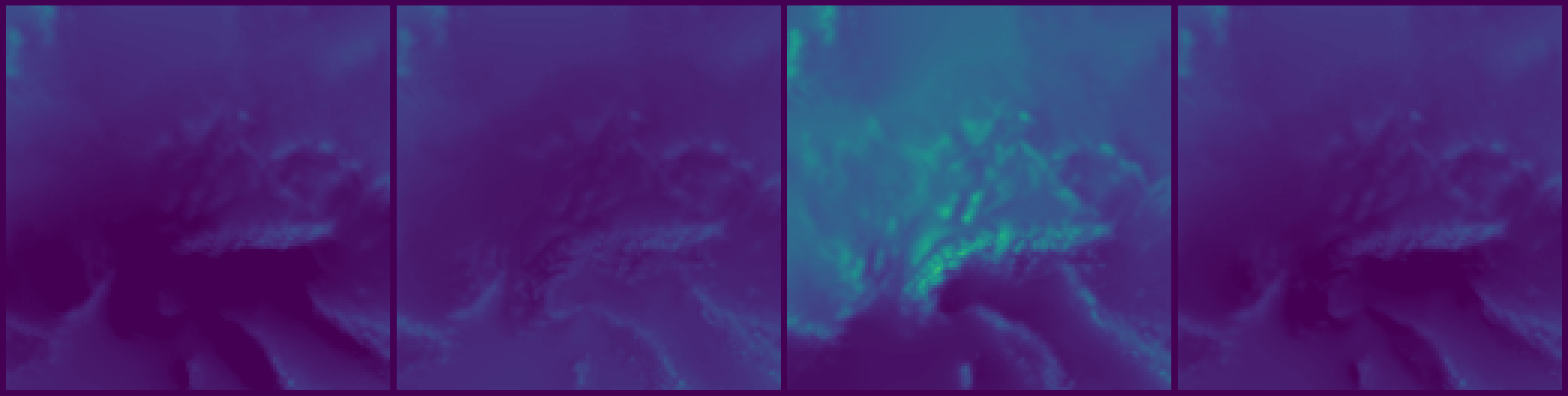} &
	\includegraphics[width=0.23\textwidth,align=c]{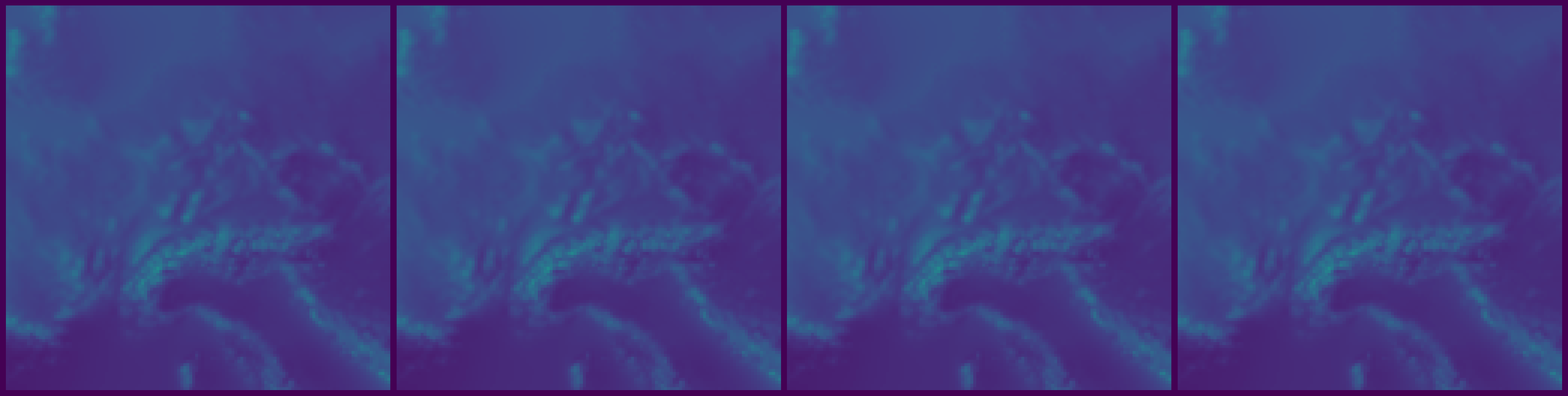}\\
	\rotatebox[origin=c]{90}{\small{AE}}\hspace{4pt}\smallskip &
	\includegraphics[width=0.23\textwidth,align=c]{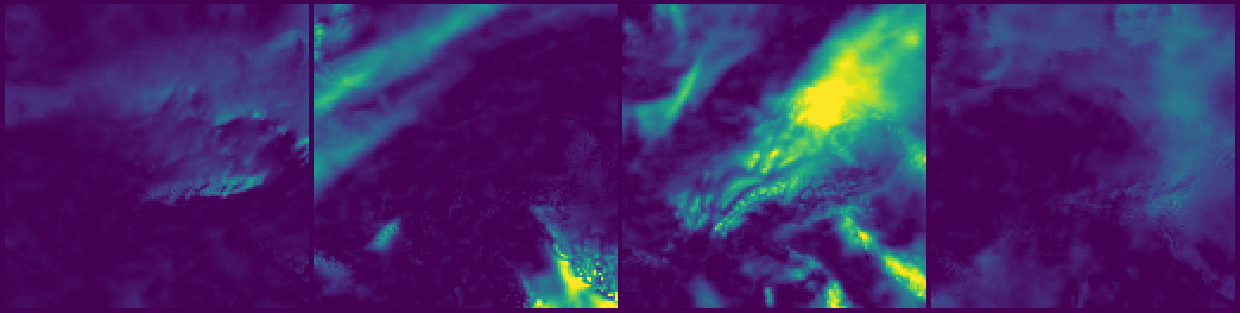} &
	\includegraphics[width=0.23\textwidth,align=c]{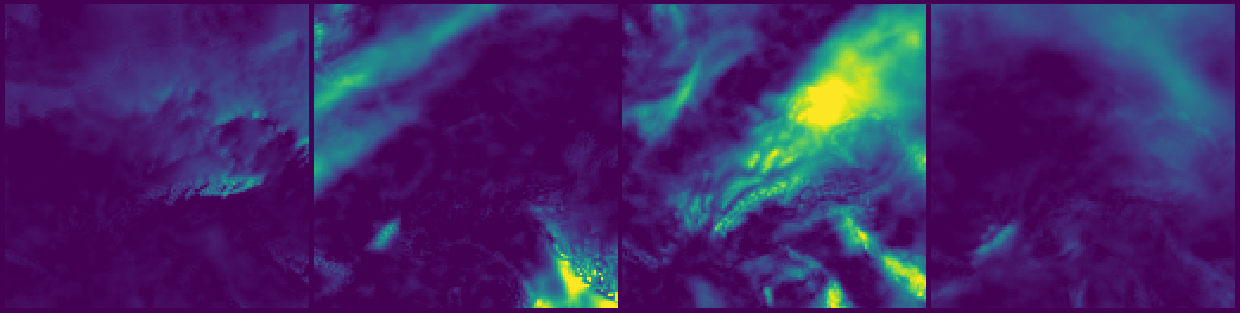} &
	\includegraphics[width=0.23\textwidth,align=c]{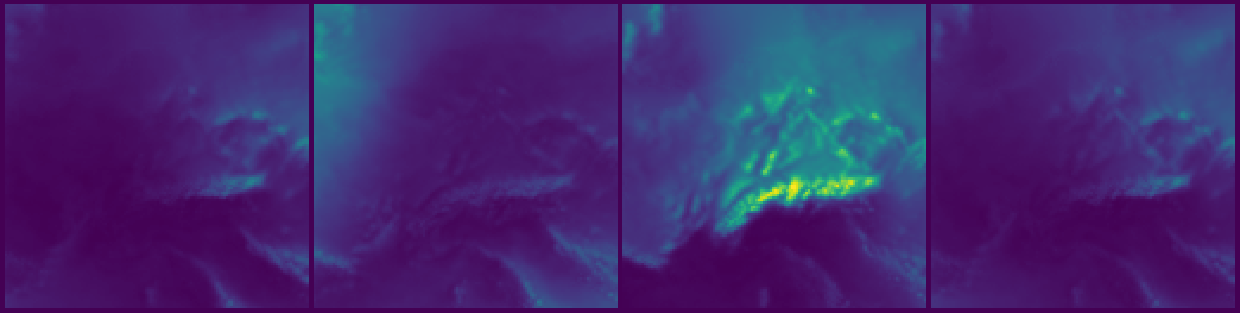} &
	\includegraphics[width=0.23\textwidth,align=c]{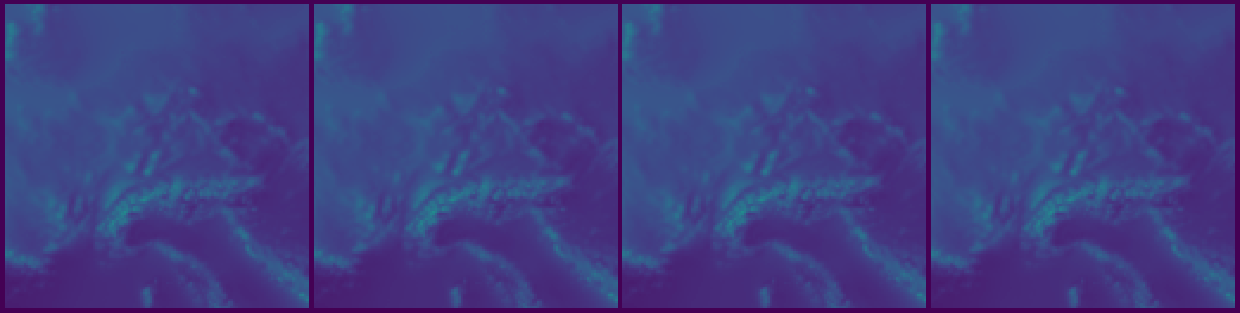} \\
	\rotatebox[origin=c]{90}{\small{DPA}}\hspace{4pt} &
	\includegraphics[width=0.23\textwidth,align=c]{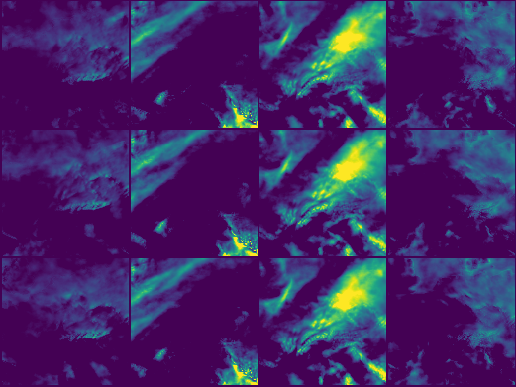} &
	\includegraphics[width=0.23\textwidth,align=c]{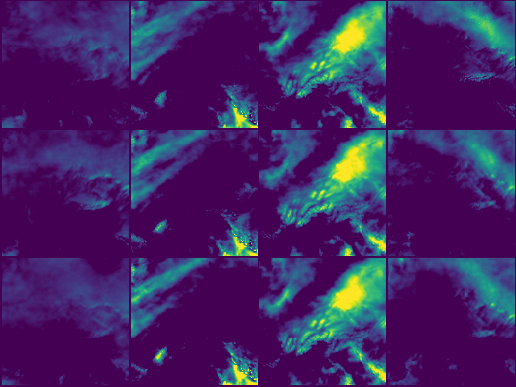} &
	\includegraphics[width=0.23\textwidth,align=c]{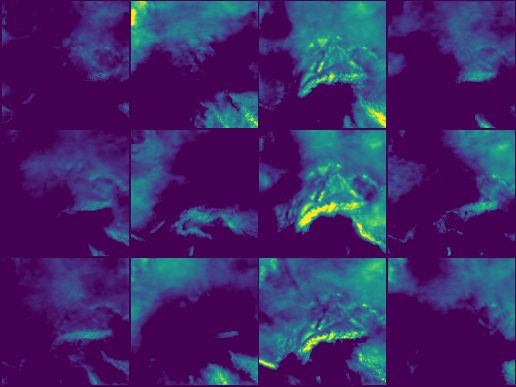} &
	\includegraphics[width=0.23\textwidth,align=c]{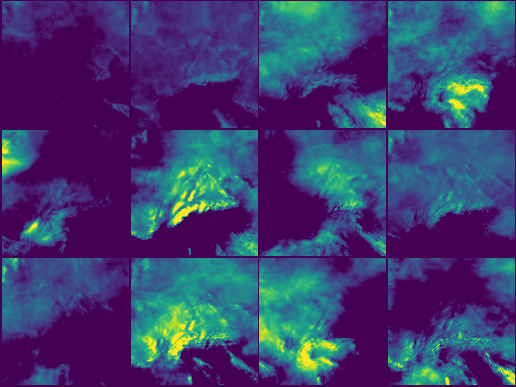} 
\end{tabular}
\caption{Reconstructions for \textsc{r-precip}.}\label{fig:rcmp_recon}
\end{figure}


Figure~\ref{fig:metrics_recon} presents several evaluation metrics as functions of the latent dimension $k$. These include 
\begin{enumerate}[(i)] 
\item two metrics for conditional reconstruction \eqref{eq:dist_recon}:
\begin{enumerate}[(a)] \item conditional energy loss $\bbE[\|X-d(e(X),\varepsilon)\|-\|d(e(X),\varepsilon)-d(e(X),\varepsilon')\|/2]$ and \item mean squared reconstruction error $\bbE[\|X-d(e(X),\varepsilon)\|^2]$, and
\end{enumerate}  
\item two metrics for the unconditional distribution $\hat{P}$ of the reconstructions to evaluate how well it matches the original data distribution $P^*$, i.e.\ \eqref{eq:eq_marg_dist}:
\begin{enumerate}[(a)] \item the energy distance between these two multivariate distributions, $D(P^*,\hat{P})$, as defined in \eqref{eq:energy_distance} with $\beta=1$, and \item the average Wasserstein 1-distance of the marginal distributions at a random pixel/location, i.e.\ $\sum_{i=1}^pW_1(P_i, \hat{P}_i)/p$, where $P_i$ and $\hat{P}_i$ denote the marginal distributions of the $i$-th component of $P^*$ and $\hat{P}$, respectively.
\end{enumerate}
\end{enumerate}

We highlight three observations.

First, as expected, the conditional metrics improve monotonically with increasing latent dimension $k$. This aligns with the visual results, which show more variability among DPA samples (conditional on the same input) when $k$ is small. The conditional metrics typically reach a plateau once $k$ exceeds the intrinsic dimension of the data. For example, the conditional metrics for \textsc{disk} decrease until $k=6$, which matches exactly the intrinsic dimension of this data set (due to the 6 generative factors).

Second, AE and PCA use the mean squared reconstruction error as their loss whereas DPA aims for the full distribution. Ideally, when the model class is expressive enough, AE and DPA should achieve the same minimal conditional MSE, while PCA may incur a small loss due to its linearity restriction. As shown in the second column, this is indeed the case for the three climate data sets with large enough sample sizes. For the remaining three data sets, whose sample sizes are smaller, DPA exhibits smaller MSEs than AE, especially on \textsc{disk}, possibly because DPA suffers less from overfitting than AE. These findings indicate that DPA performs at least as well as AE in terms of mean reconstruction (the objective of AE), even though its objective is distributional reconstruction.

Third, DPA outperforms AE and PCA substantially on the unconditional metrics, since only DPA is guaranteed to produce reconstructions that are identically distributed to the original data. This guarantee holds for all $k$, and the unconditional metrics indeed stay almost constant in $k$.

To further investigate how well our reconstructed distributions match the original ones, in Figure~\ref{fig:qq_cdf}, we show Q--Q plots and empirical cdfs of the true or estimated marginal distributions at a random location. We observe that DPA captures the distribution the best regardless of the choice of $k$. This is especially true for the tail behaviour. In contrast, the tails of the distribution get shifted towards the mean with PCA and AE,  especially for smaller $k$'s and as expected due to their reconstructions trying to minimise the mean squared error.

\begin{figure}
\centering
\begin{tabular}{@{}c@{}c@{}c@{}c@{}c@{}}
	& \scriptsize ~conditional energy loss & \scriptsize ~~conditional MSE & \scriptsize ~uncond energy distance & \scriptsize ~~marginal W distance \\
	\rotatebox[origin=c]{90}{\small{\textsc{mnist}}}\hspace{4pt} &
	\includegraphics[width=0.23\textwidth,align=c]{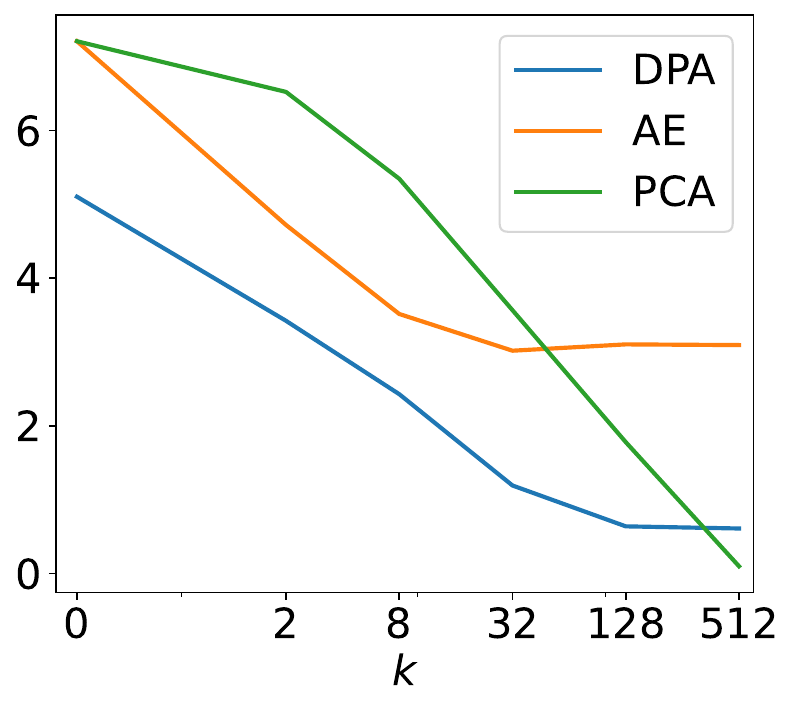} &
	\includegraphics[width=0.23\textwidth,align=c]{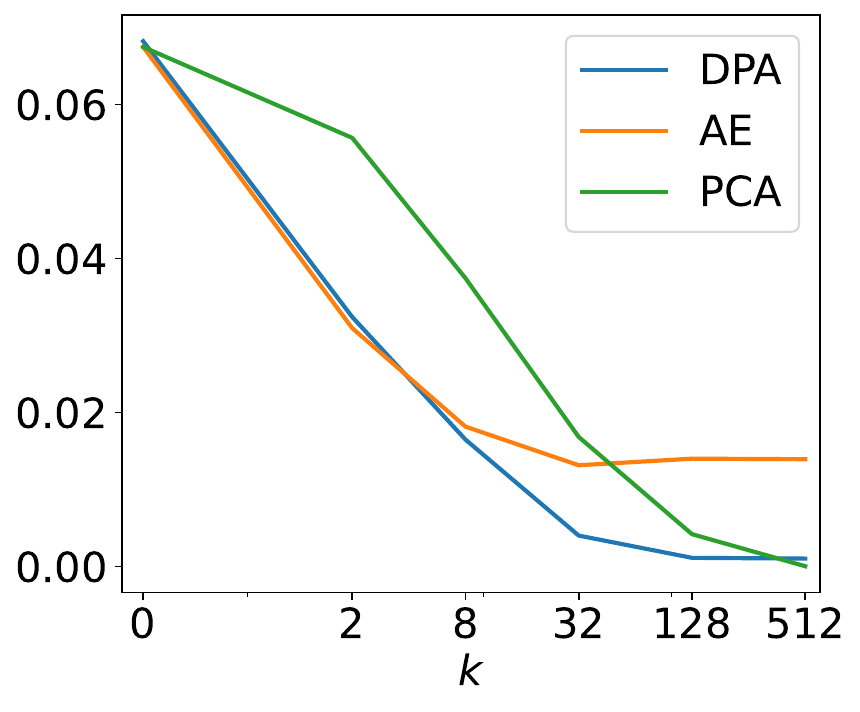} &
	\includegraphics[width=0.23\textwidth,align=c]{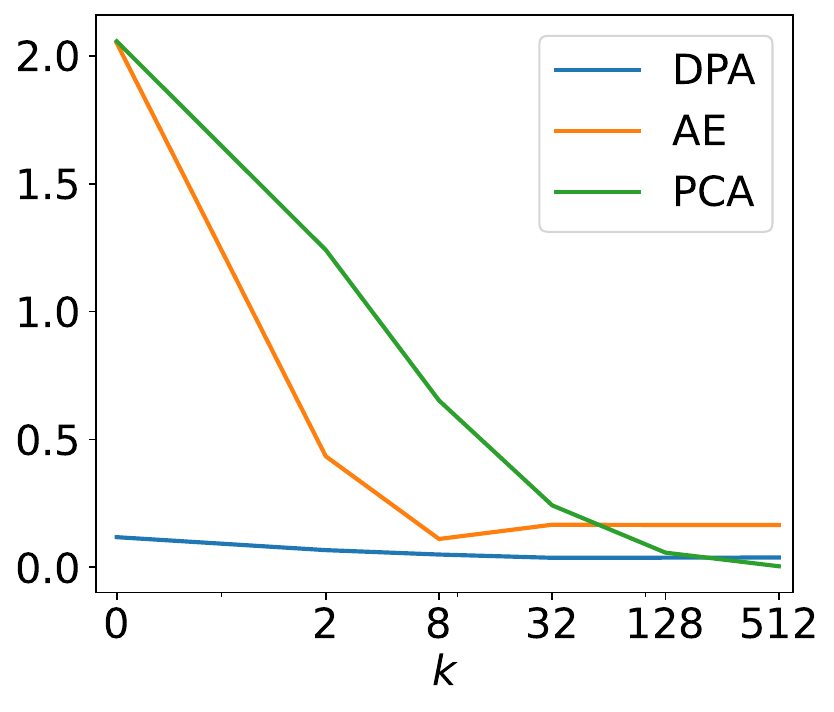} &
	\includegraphics[width=0.23\textwidth,align=c]{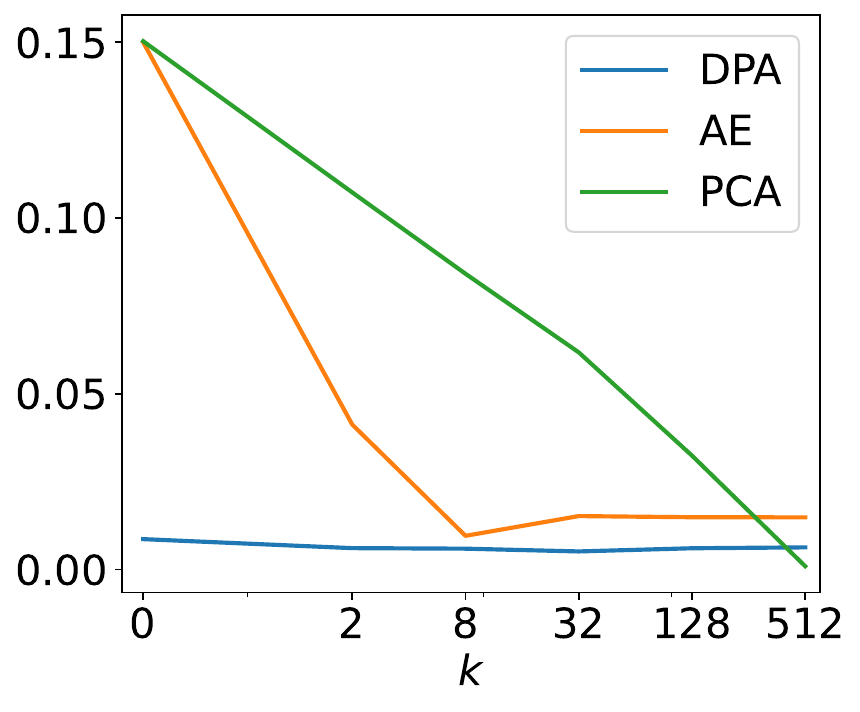} \\
	\rotatebox[origin=c]{90}{\small{\textsc{disk}}}\hspace{4pt} &
	\includegraphics[width=0.23\textwidth,align=c]{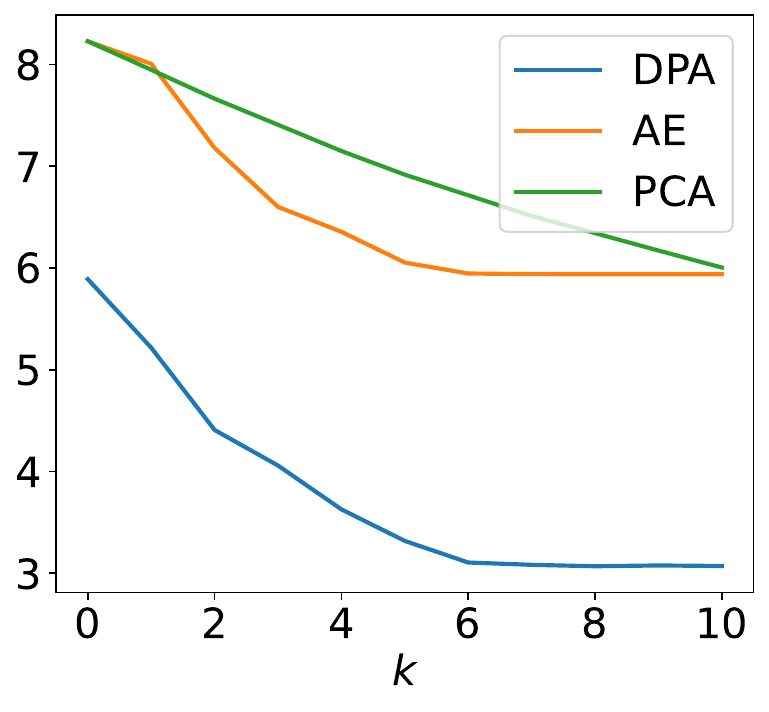} &
	\includegraphics[width=0.23\textwidth,align=c]{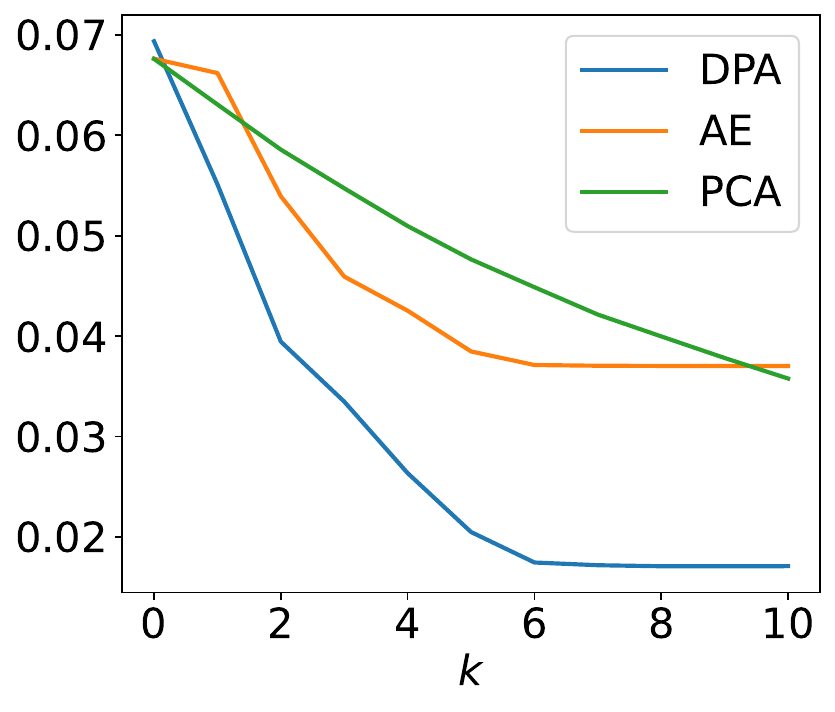} &
	\includegraphics[width=0.23\textwidth,align=c]{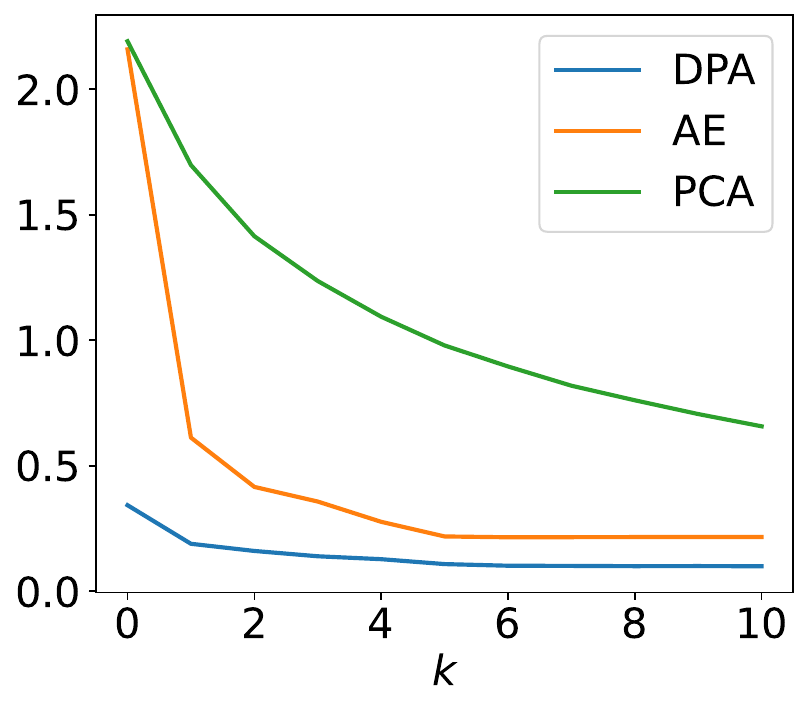} &
	\includegraphics[width=0.23\textwidth,align=c]{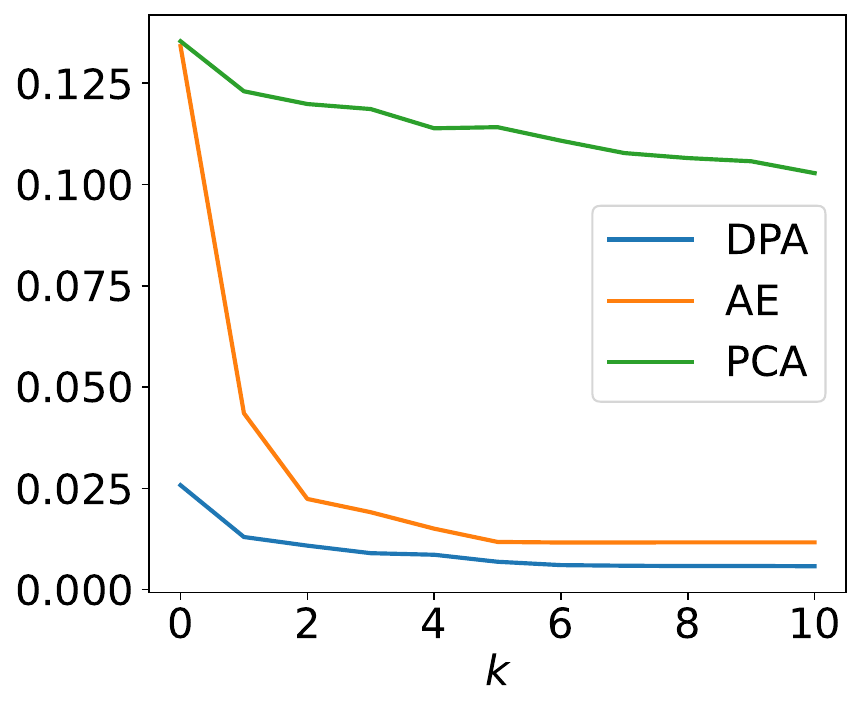} \\
	\rotatebox[origin=c]{90}{\small{\textsc{r-temp}}}\hspace{4pt} &
	\includegraphics[width=0.23\textwidth,align=c]{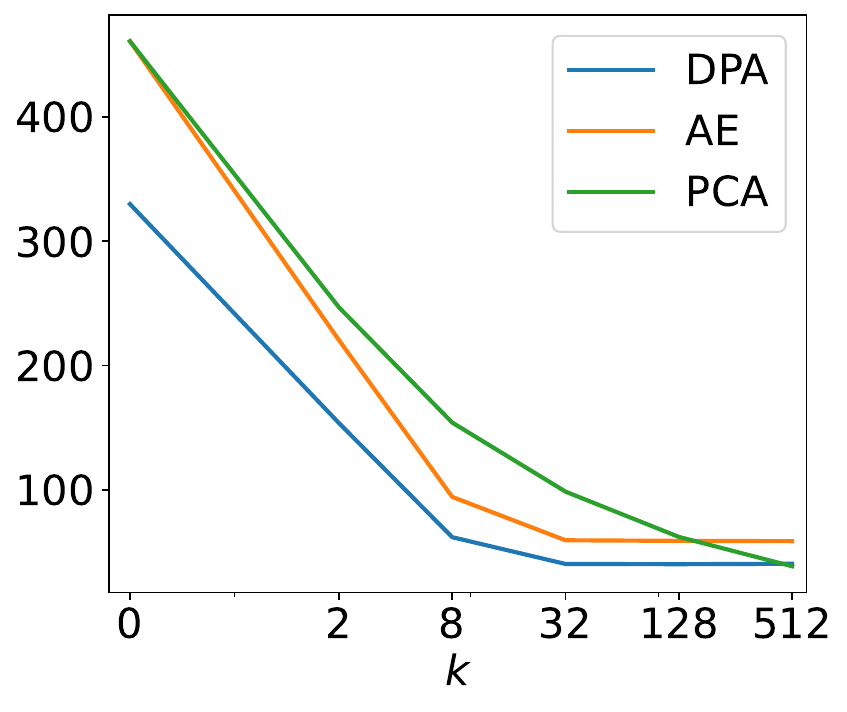} &
	\includegraphics[width=0.23\textwidth,align=c]{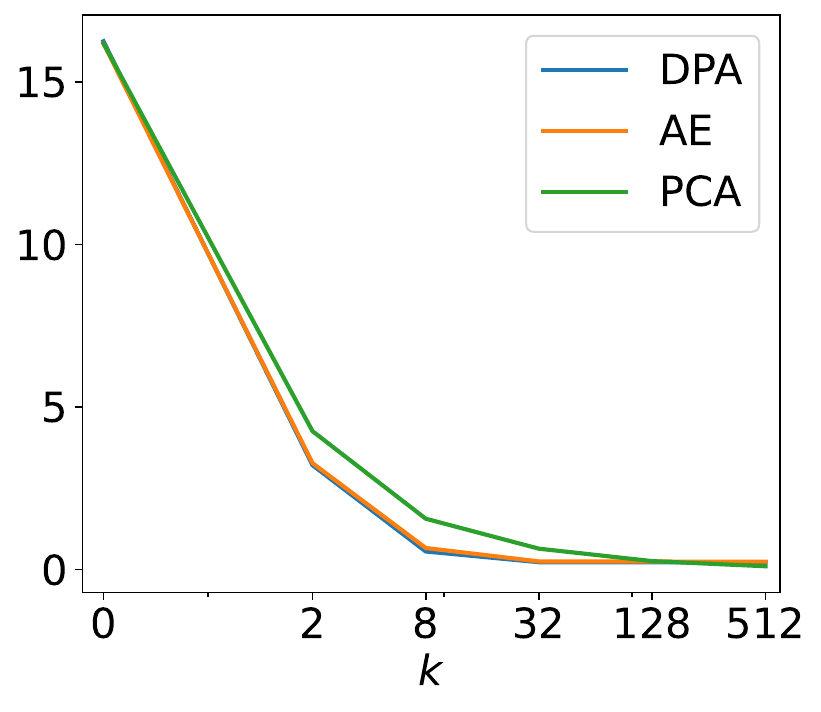} &
	\includegraphics[width=0.23\textwidth,align=c]{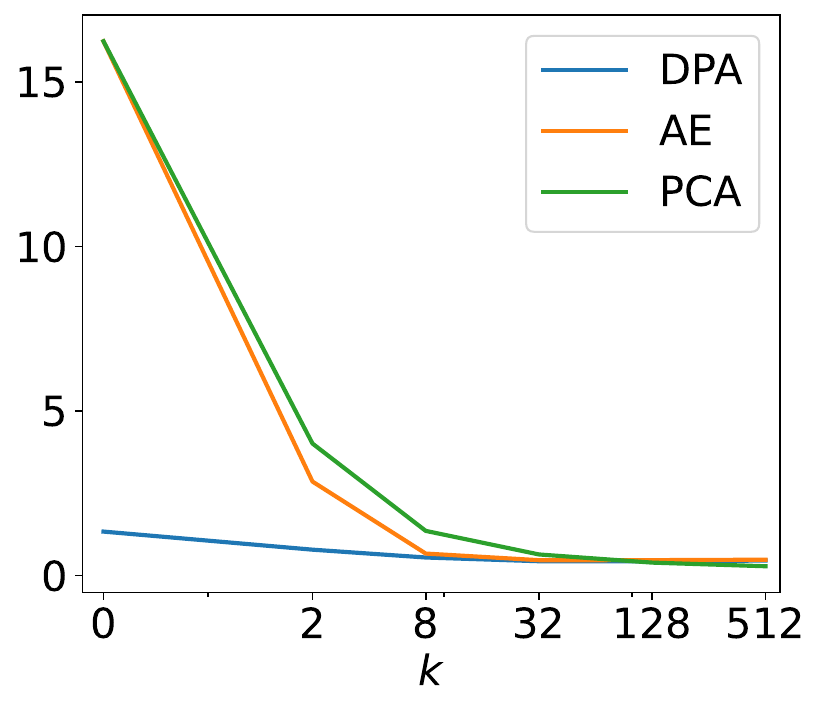} &
	\includegraphics[width=0.23\textwidth,align=c]{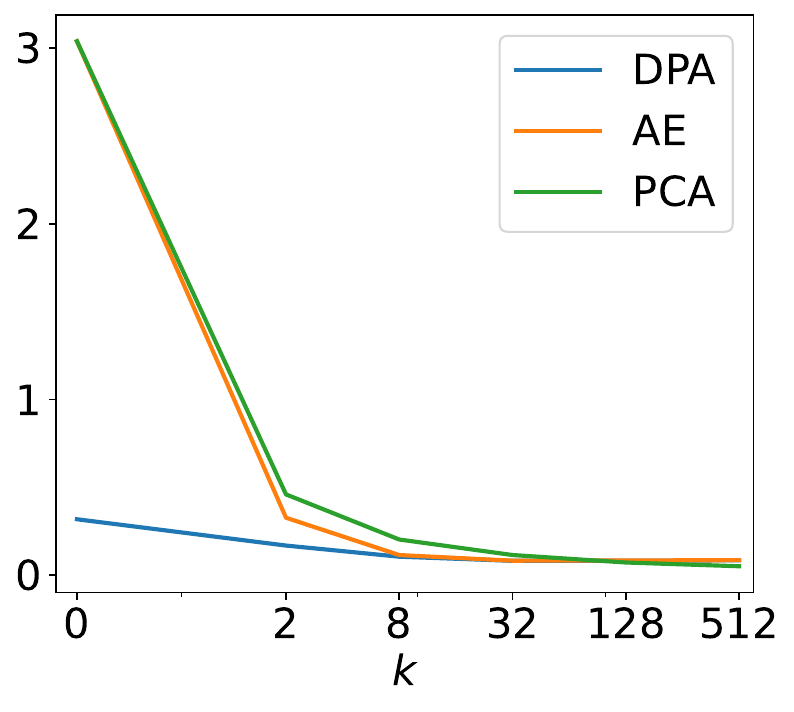} \\
	\rotatebox[origin=c]{90}{\small{\textsc{r-precip}}}\hspace{4pt} &
	\includegraphics[width=0.23\textwidth,align=c]{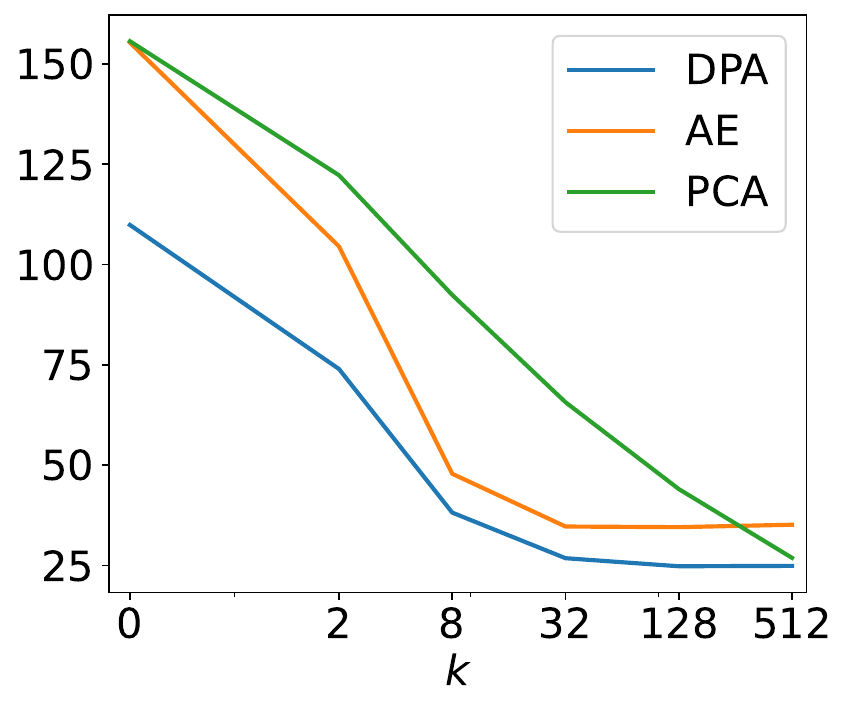} &
	\includegraphics[width=0.23\textwidth,align=c]{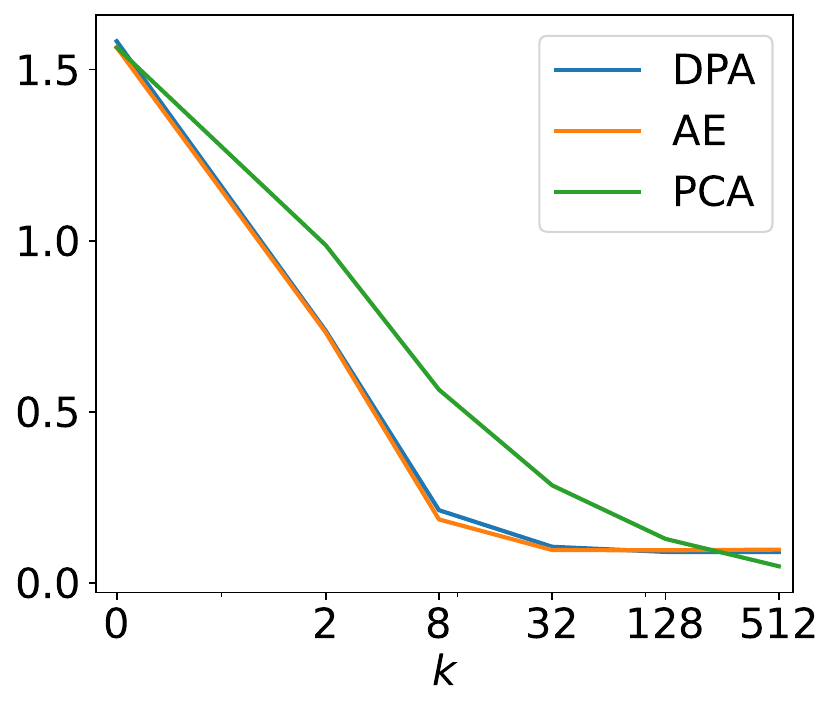} &
	\includegraphics[width=0.23\textwidth,align=c]{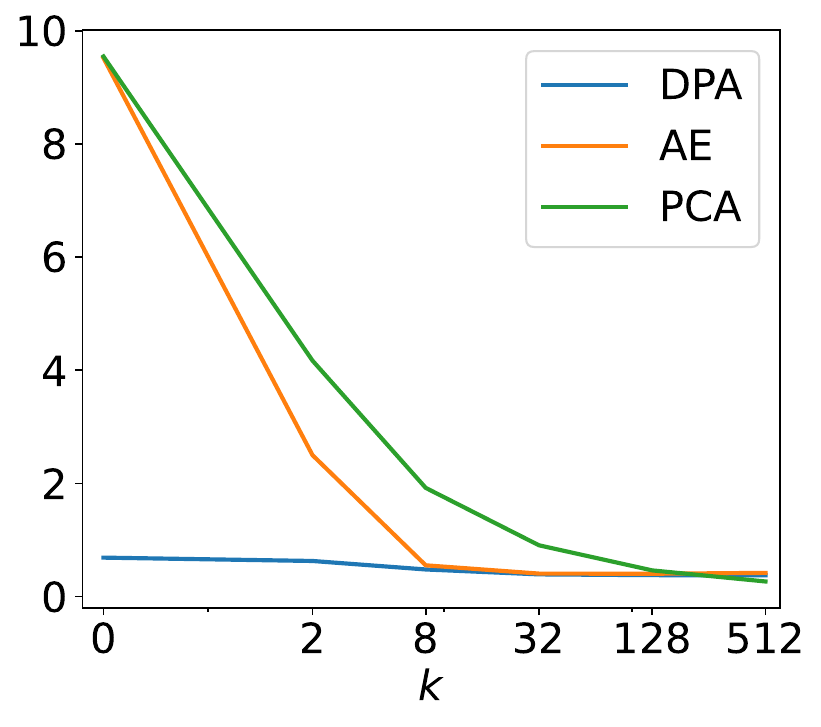} &
	\includegraphics[width=0.23\textwidth,align=c]{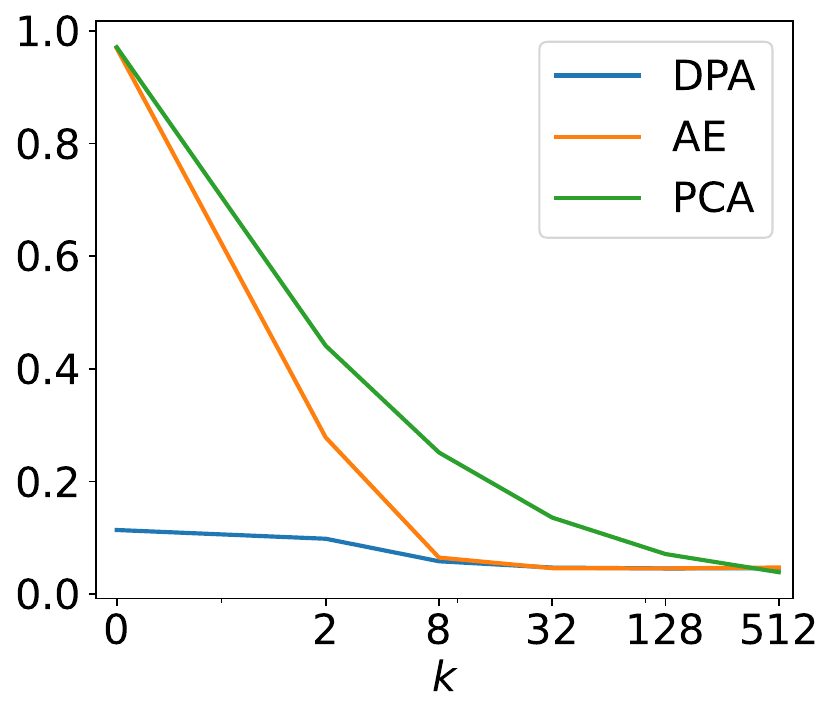} \\
	\rotatebox[origin=c]{90}{\small{\textsc{g-precip}}}\hspace{4pt} &
	\includegraphics[width=0.23\textwidth,align=c]{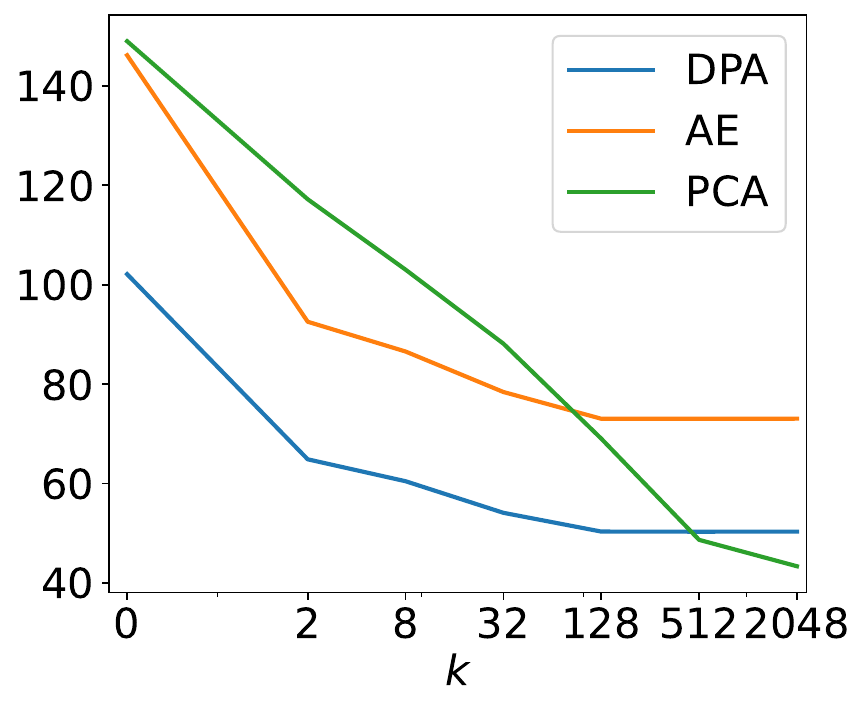} &
	\includegraphics[width=0.23\textwidth,align=c]{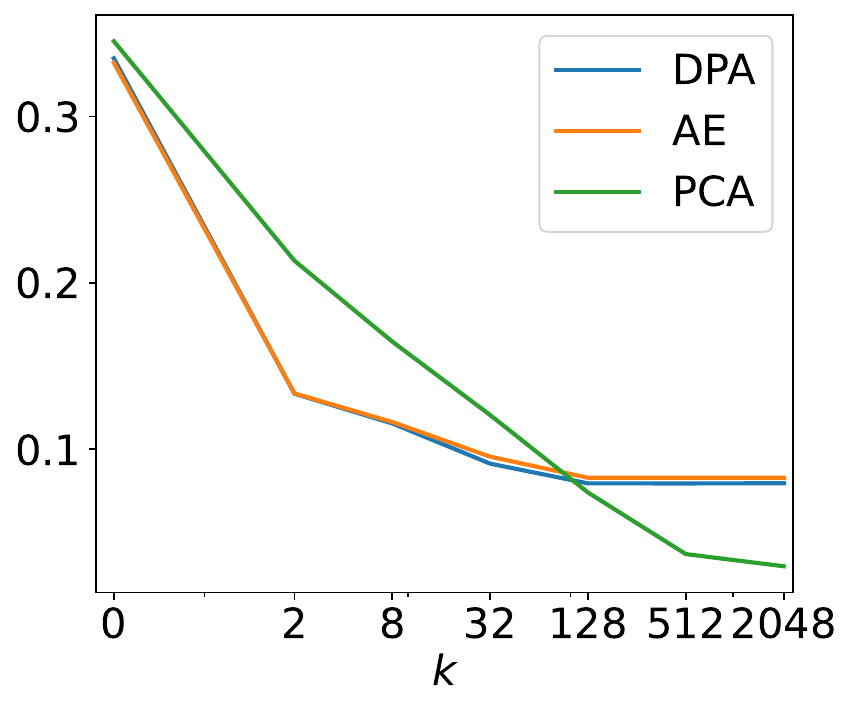} &
	\includegraphics[width=0.23\textwidth,align=c]{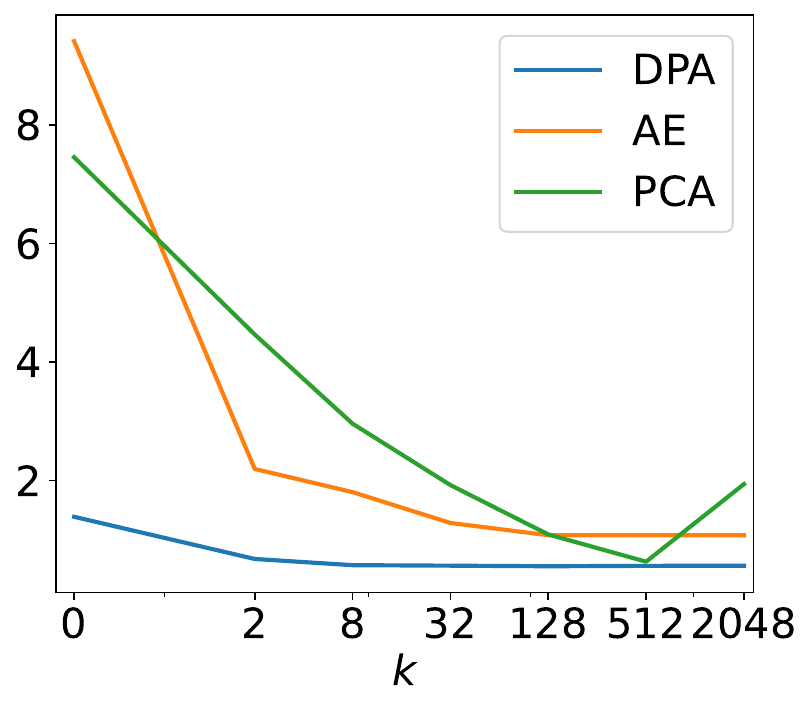} &
	\includegraphics[width=0.23\textwidth,align=c]{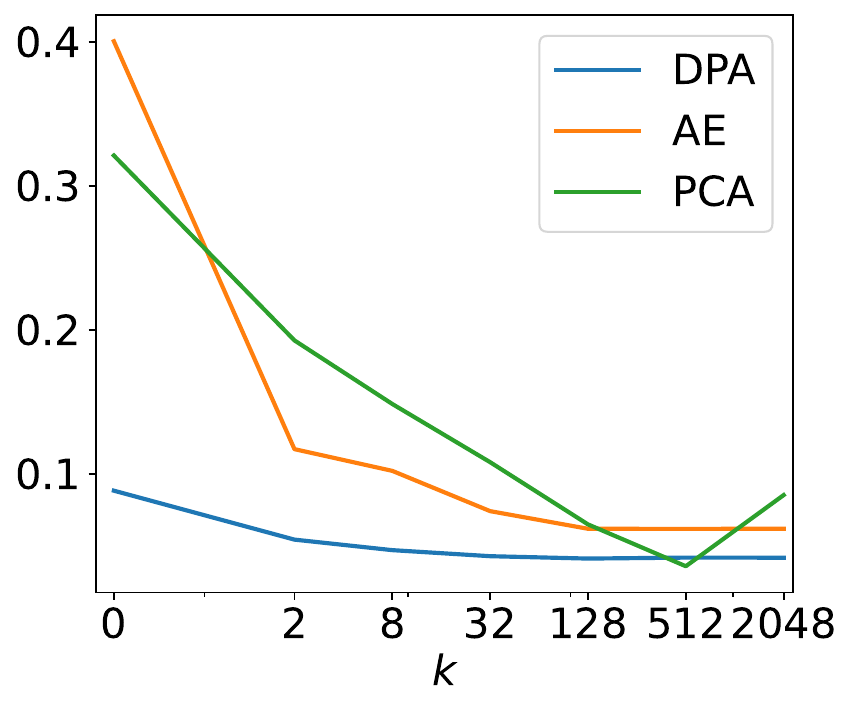} \\
	\rotatebox[origin=c]{90}{\small{\textsc{sc-8}}}\hspace{4pt} &
	\includegraphics[width=0.23\textwidth,align=c]{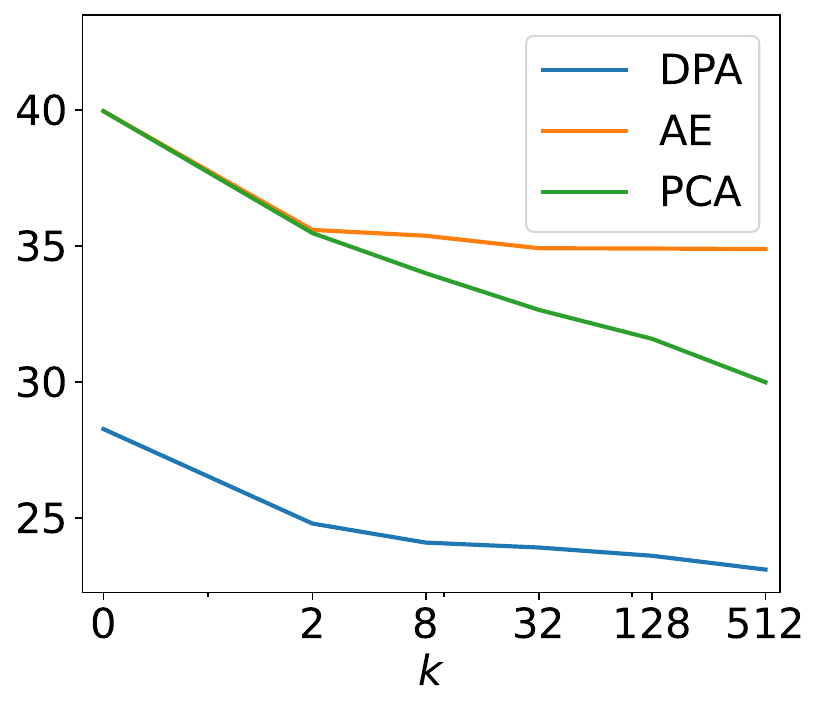} &
	\includegraphics[width=0.23\textwidth,align=c]{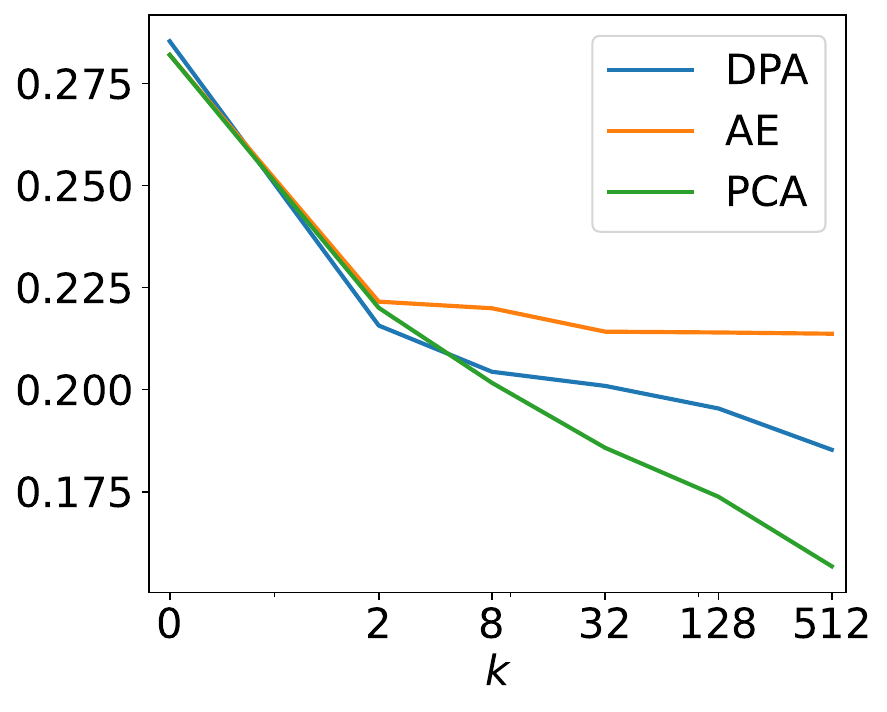} &
	\includegraphics[width=0.23\textwidth,align=c]{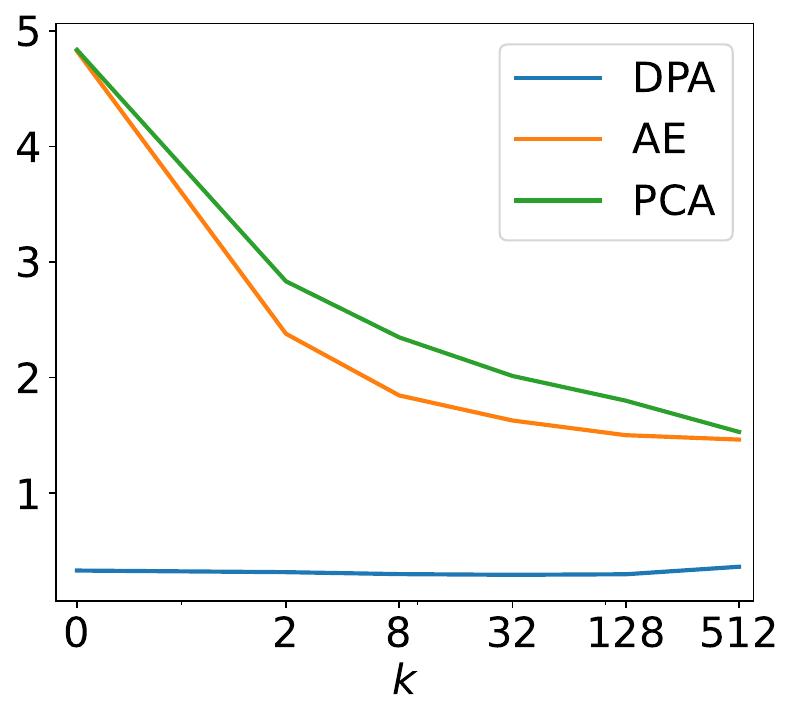} &
	\includegraphics[width=0.23\textwidth,align=c]{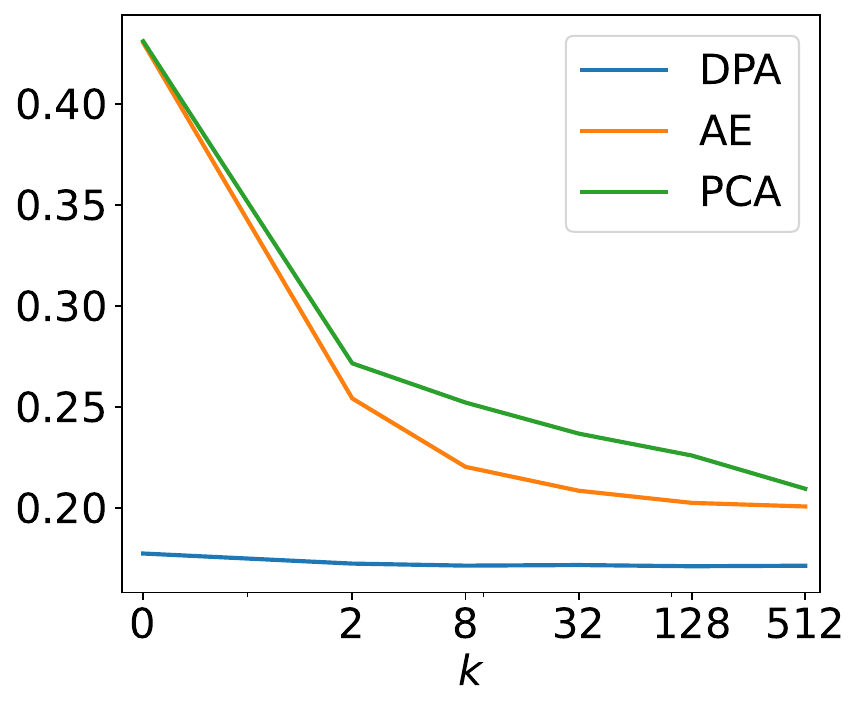} 
\end{tabular}
\caption{Metrics for reconstructions as functions of the latent dimension $k$.}\label{fig:metrics_recon}
\end{figure}

\begin{figure}
\centering
\begin{tabular}{@{}c@{}c@{}c@{}c@{}c@{}c@{}c@{}}
	& \multicolumn{3}{c}{Q--Q plots} & \multicolumn{3}{c}{empirical cdfs}\\
	\rotatebox[origin=c]{90}{\small\textsc{r-temp}}
	& \includegraphics[width=0.16\textwidth,align=c]{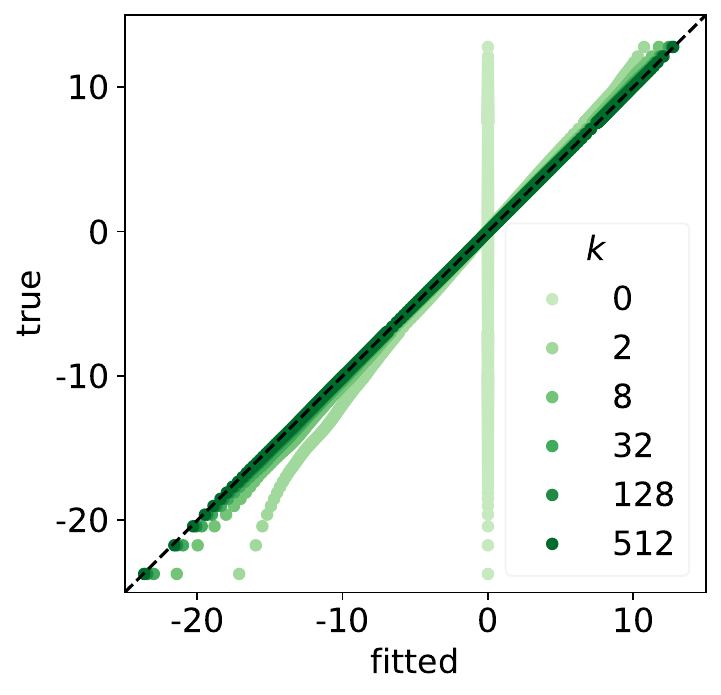} &
	\includegraphics[width=0.16\textwidth,align=c]{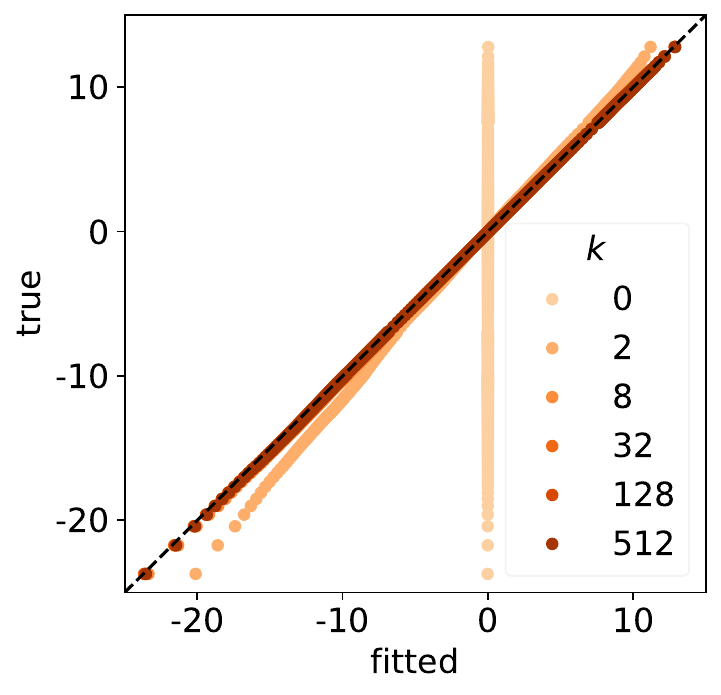} &
	\includegraphics[width=0.16\textwidth,align=c]{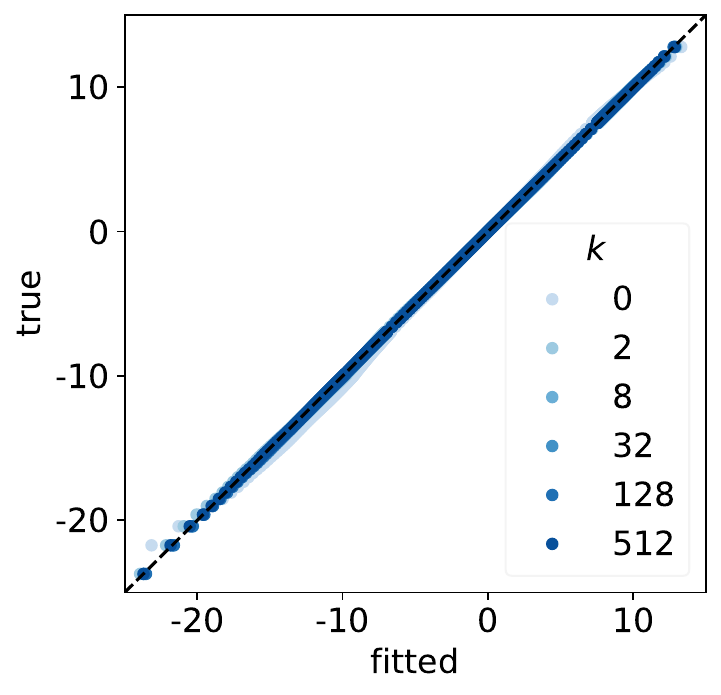} &
	\includegraphics[width=0.16\textwidth,align=c]{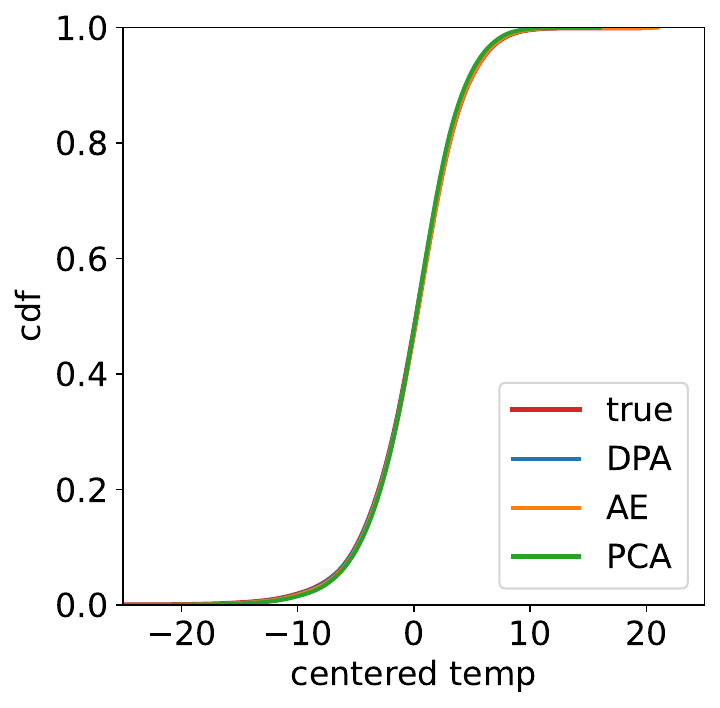} &
	\includegraphics[width=0.16\textwidth,align=c]{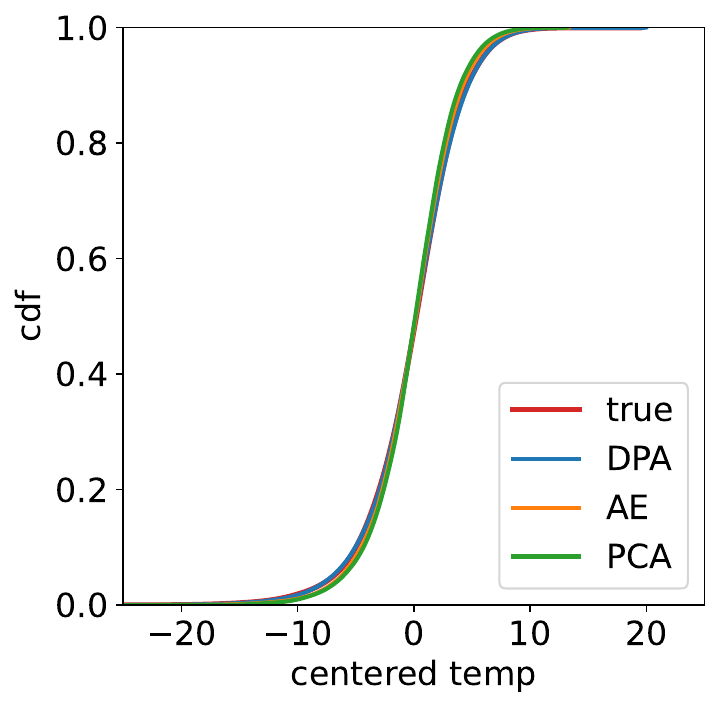} &
	\includegraphics[width=0.16\textwidth,align=c]{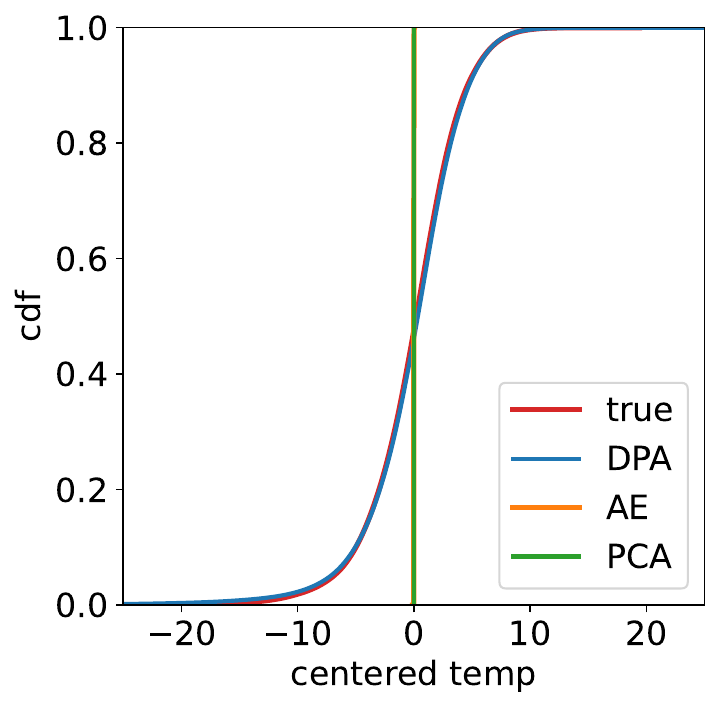} \\
	\rotatebox[origin=c]{90}{\small\textsc{r-precip}}
	& \includegraphics[width=0.16\textwidth,align=c]{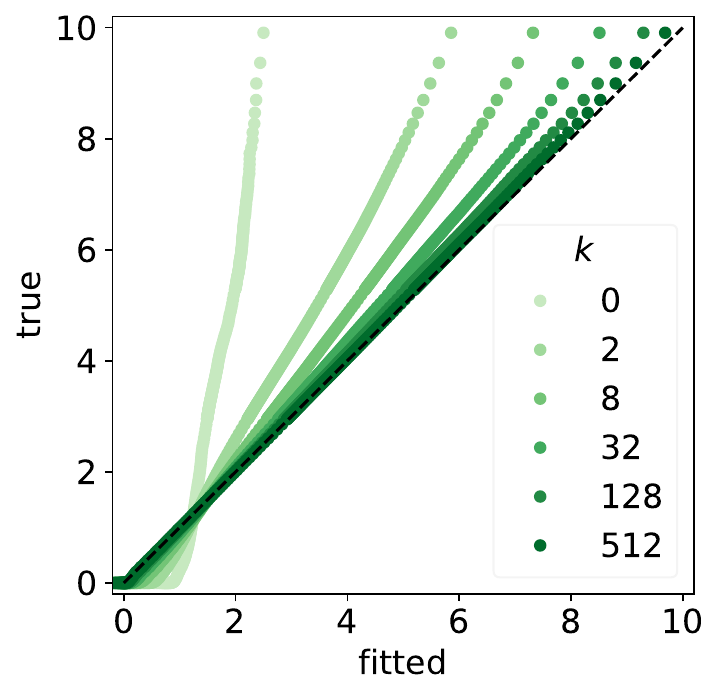} &
	\includegraphics[width=0.16\textwidth,align=c]{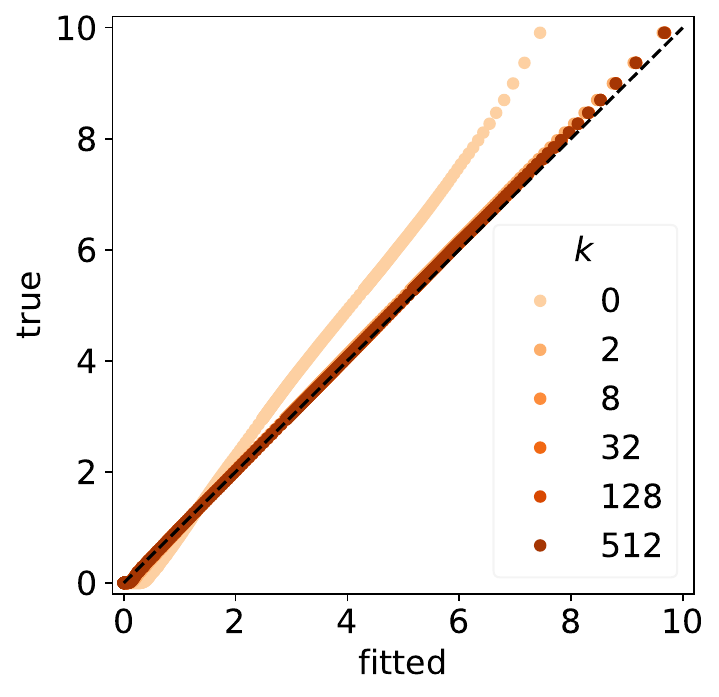} &
	\includegraphics[width=0.16\textwidth,align=c]{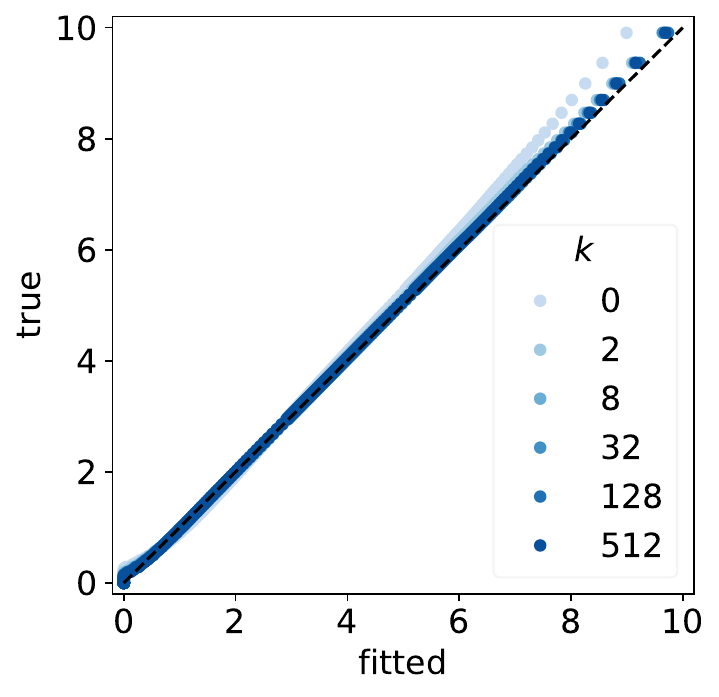} &
	\includegraphics[width=0.16\textwidth,align=c]{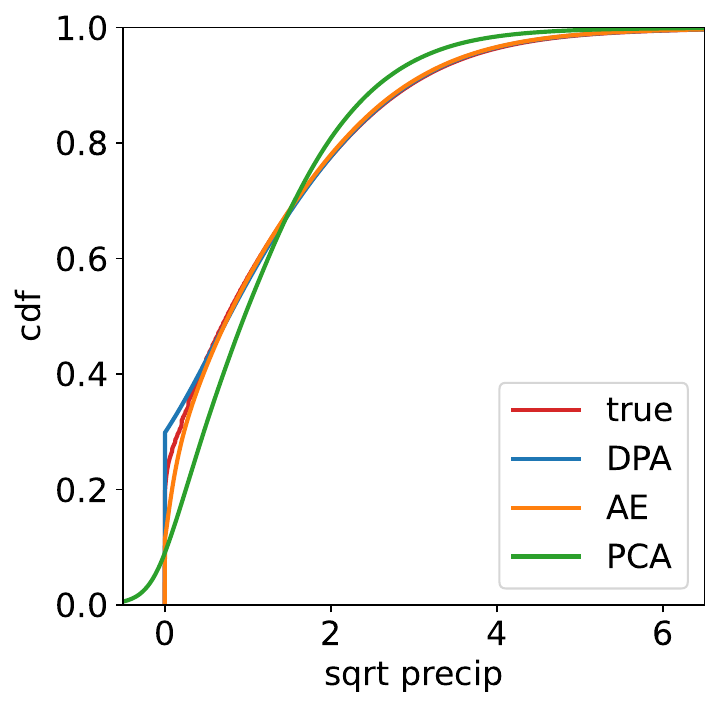} &
	\includegraphics[width=0.16\textwidth,align=c]{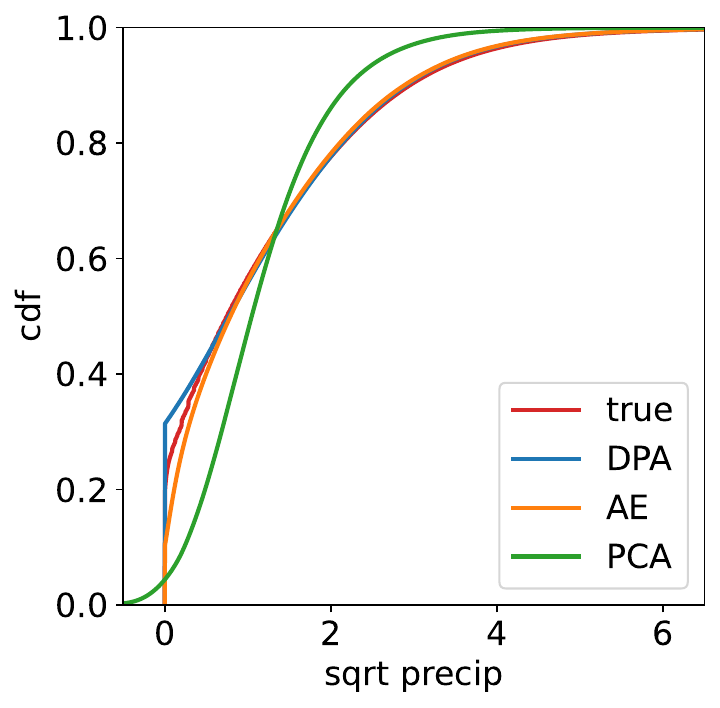} &
	\includegraphics[width=0.16\textwidth,align=c]{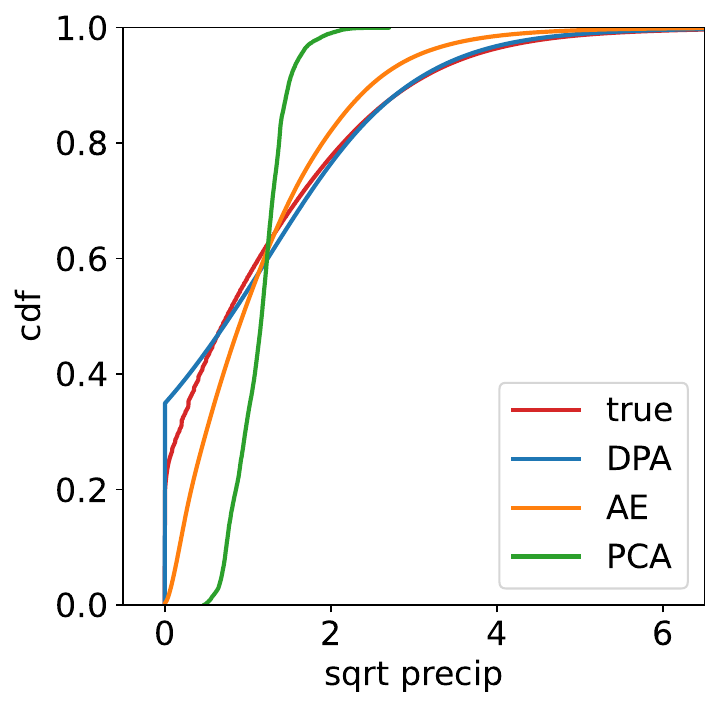} \\
	\rotatebox[origin=c]{90}{\small\textsc{sc-8}}
	& \includegraphics[width=0.16\textwidth,align=c]{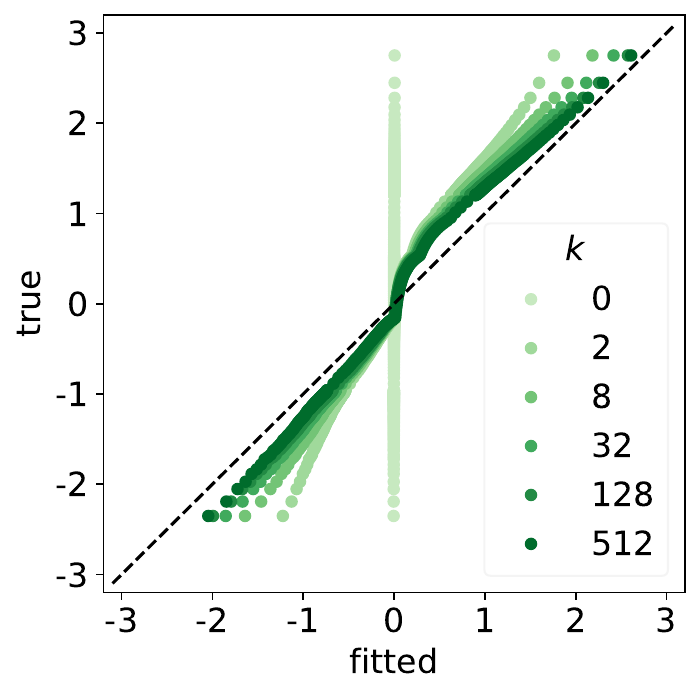} &
	\includegraphics[width=0.16\textwidth,align=c]{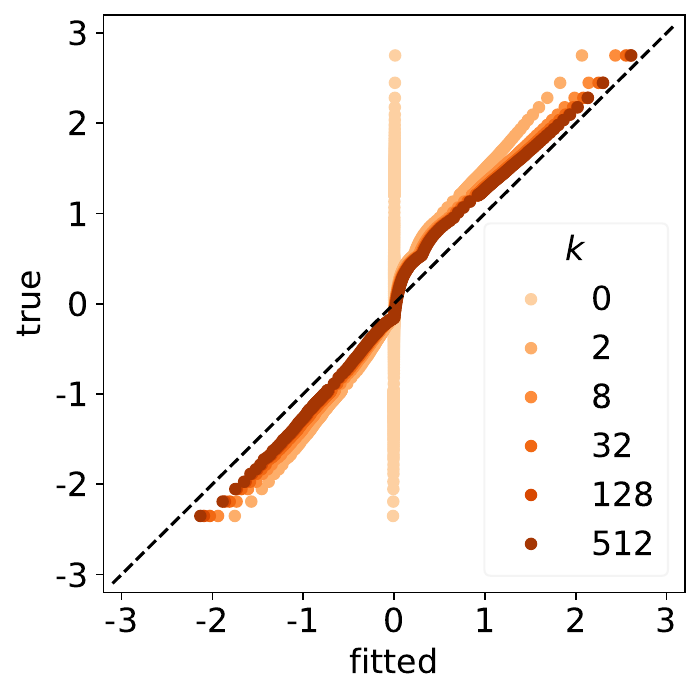} &
	\includegraphics[width=0.16\textwidth,align=c]{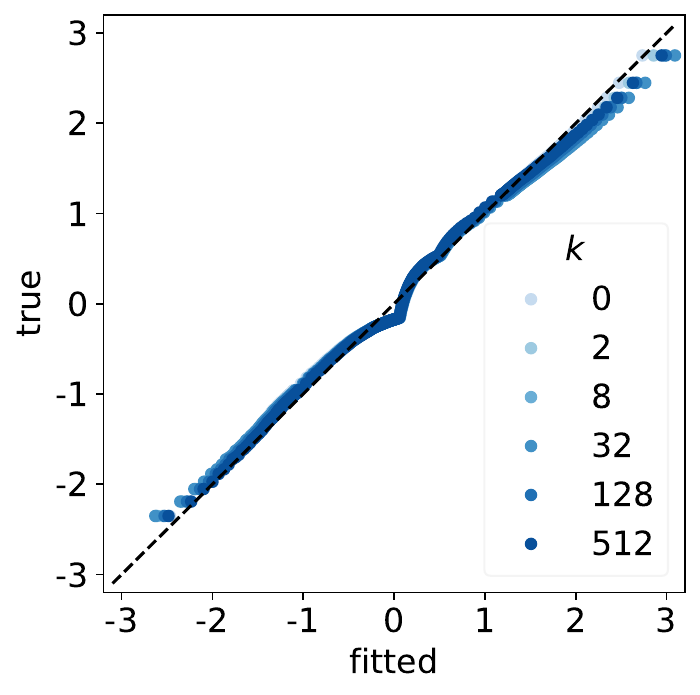} &
	\includegraphics[width=0.16\textwidth,align=c]{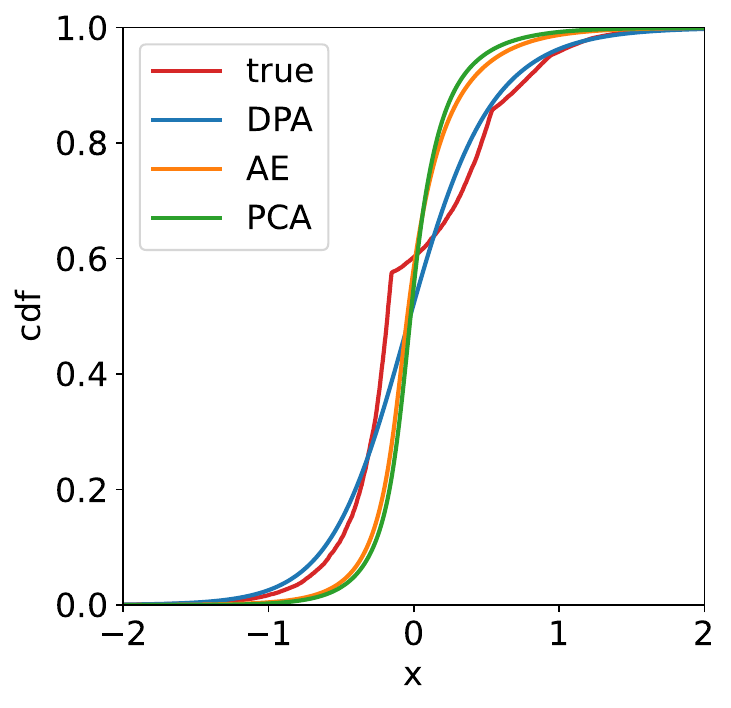} &
	\includegraphics[width=0.16\textwidth,align=c]{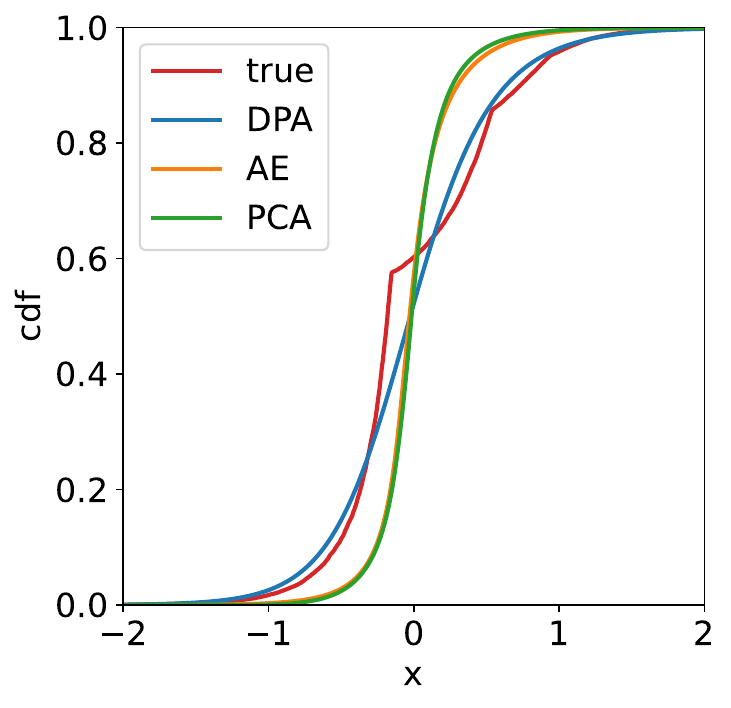} &
	\includegraphics[width=0.16\textwidth,align=c]{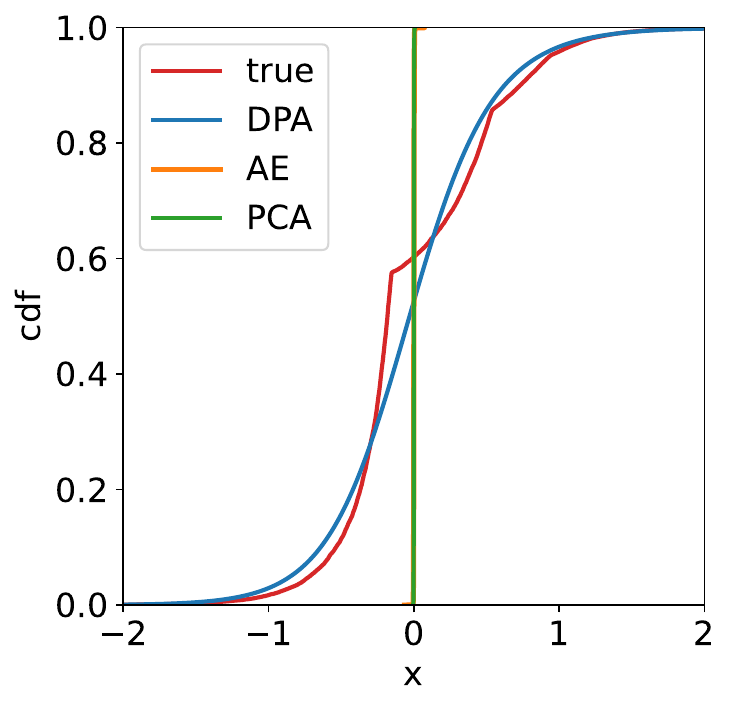} \\
 	&	\small PCA & \small AE & \small DPA & \small $k=8$ & \small $k=2$ & \small $k=0$\vspace{-2pt} \\
\end{tabular}
\caption{Q--Q plots and empirical cdfs of marginal distributions at a random location for test data versus fitted distributions. The vertical lines for PCA and AE with $k=0$ are due to centered data.}\label{fig:qq_cdf}
\end{figure}

\subsection{Embeddings}\label{subsec:embedding}
In many scientific applications, it is useful to visualise high-dimensional data in a 2-dimensional latent space. Here, we investigate how well the 2D principal components obtained by DPA can preserve the underlying structure in data. We consider global precipitation data and 8 single-cell data sets; nonlinear manifold methods tailored to single-cell RNA-seq data, such as those of \citet{verma2020robust}, target a similar low-dimensional embedding goal in this domain. For comparison, we use PCA, AE, VAE, WAE, and the widely used data-visualisation methods t-SNE~\citep{van2008visualizing} and UMAP~\citep{McInnes2018}.

In Figure~\ref{fig:2d_visual}, we visualise the 2D embeddings for global precipitation fields obtained by different methods. The training data consists of 12 months and are simulated by different general circulation models each corresponding to a slightly different physical mechanism and thus a distinct distribution. In the top-left panel, DPA embeddings preserve such underlying structures very well: (i) the seasonal cycle is precisely captured through the angle and (ii) data from different models are well separated according to the radius in the latent space. The embeddings obtained by all other methods share at most one of these two properties. Specifically, PCA and WAE embeddings reflect some seasonal cycle but tend to confuse winter months and do not fully separate data from different models. With t-SNE,  different models and months are well separated, but there is no structural pattern preserved in its 2D latent space. UMAP overly disperses the patterns while AE and VAE embeddings also do not provide much meaningful information. 

In Table~\ref{tab:knn}, we report for the single cell data sets the k-nearest neighbour accuracy of classifying the cell type based on the 2D embeddings. We use five neighbours throughout. 
For all eight data sets, DPA achieves the highest or second-highest accuracy, indicating that the 2D DPA embeddings preserve the local structure of high-dimensional gene expressions. Among existing methods, AE and WAE are the closest competitors but typically fall short of DPA.  
Note that t-SNE and UMAP do not produce an explicit mapping to the low-dimensional latent space and can hence not be evaluated on new test data. For evaluation, we need to run them on test data and hence exclude them in the ranking in Table~\ref{tab:knn}. Despite this, the k-NN accuracy of DPA is often close to those of t-SNE and UMAP, and sometimes even better, e.g.\ on \textsc{sc1}, \textsc{sc6}, and \textsc{sc7}.

\begin{figure}
\centering
\begin{tabular}{@{}c@{}c@{}cc@{}}
	\small ~~~~~DPA & \small ~~~~~AE & \small ~~~~~WAE\vspace{-4pt} \\
	 
	\includegraphics[width=0.29\textwidth]{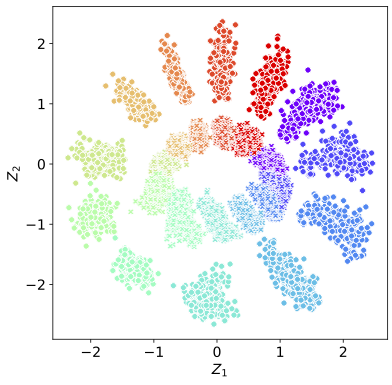} &
	\includegraphics[width=0.29\textwidth]{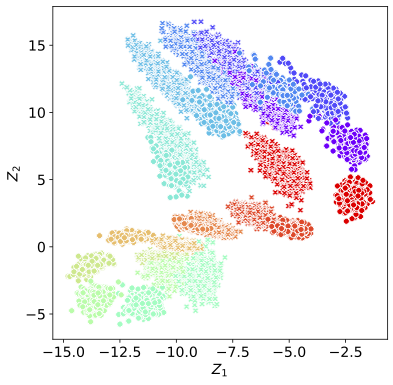} &
	\includegraphics[width=0.29\textwidth]{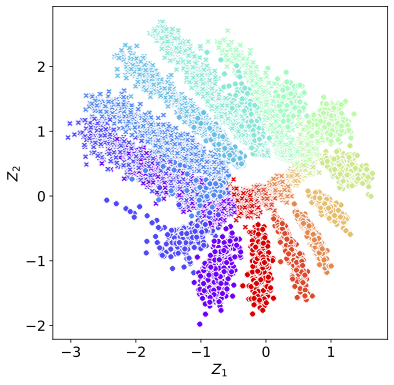} &
	\includegraphics[height=0.28\textwidth]{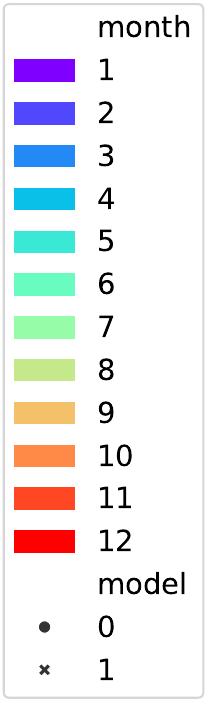}	
\end{tabular}
\begin{tabular}{@{}c@{}c@{}c@{}c@{}}
	\small ~~~~~PCA & \small ~~~~~t-SNE & \small ~~~~~UMAP & \small ~~~~~VAE \vspace{-4pt}\\
	\includegraphics[width=0.25\textwidth]{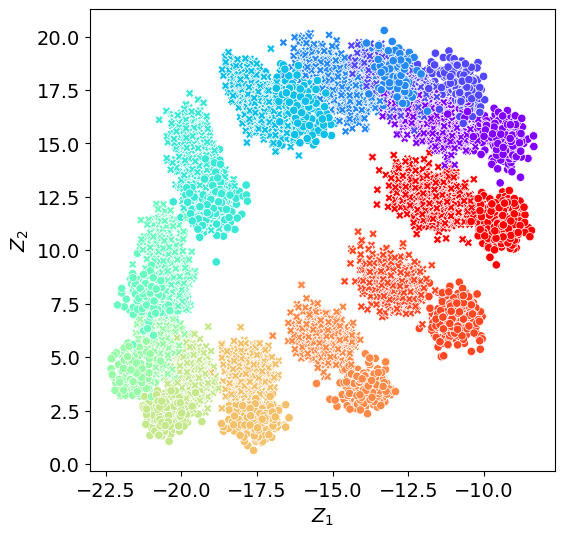} &
	\includegraphics[width=0.25\textwidth]{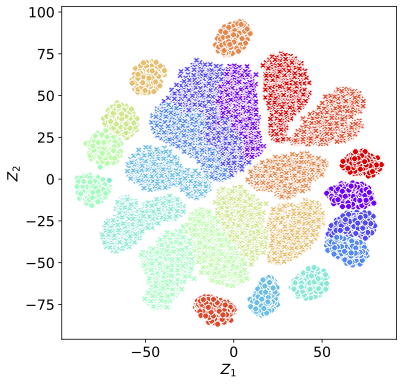} &
	\includegraphics[width=0.25\textwidth]{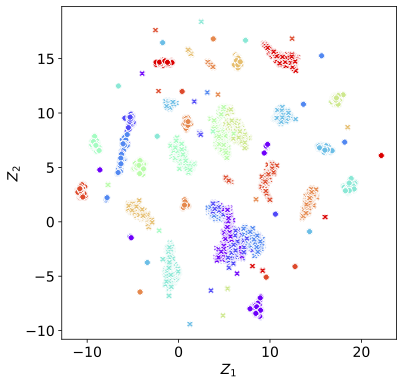} 	&
	\includegraphics[width=0.25\textwidth]{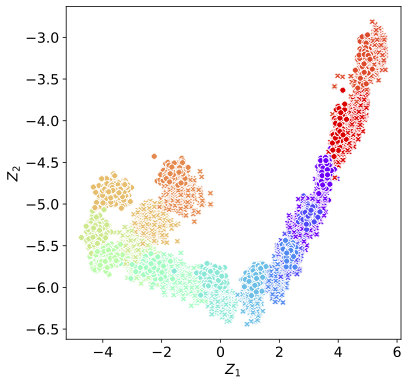}
\end{tabular}\vspace{-0.1in}
\caption{2D visualisation for global precipitation fields with a spatial dimension of $360\times180$. Each color represents a month and each shape stands for a general circulation model. }\label{fig:2d_visual}
\end{figure}

\begin{table}
\caption{\label{tab:knn}k-NN classification accuracy for the cell type using 2D embeddings on 8 single-cell data. All methods in the first panel were evaluated on test data that were not used for training, while t-SNE and UMAP were run and evaluated on the test data.}\vspace{4pt}
\centering
\begin{tabular}{ccccccccc}
\toprule
 & \textsc{sc1} & \textsc{sc2}  & \textsc{sc3} & \textsc{sc4} & \textsc{sc5} & \textsc{sc6} & \textsc{sc7} & \textsc{sc8} \\\midrule
DPA  & \bf 0.659  & \bf 0.882 & \bf 0.285            & \bf 0.830 & \bf 0.858 & \bf 0.802  & \bf 0.792   & \bf 0.521        \\
AE   & \bf 0.643  & \bf 0.877 & 0.219            & \bf 0.793 & \bf 0.850 & \bf 0.764  & 0.790   & 0.513        \\
VAE  & 0.570  & 0.868 & 0.137            & 0.730 & 0.714 & 0.734  & 0.752   & 0.501        \\
WAE  & 0.623  & 0.849 & \bf 0.246            & 0.787 & 0.819 & 0.722  & \bf 0.804   & \bf 0.535        \\
PCA     & 0.483  & 0.839 & 0.170            & 0.670 & 0.540 & 0.513  & 0.300   & 0.440        \\\midrule
t-SNE   & 0.554  & 0.901 & 0.537            & 0.970 & 0.920 & 0.757  & 0.662   & 0.594        \\
UMAP    & 0.393  & 0.890 & 0.430            & 0.960 & 0.892 & 0.757  & 0.562   & 0.559    
\\\bottomrule   
\end{tabular}
\end{table}

\section{Stochastic encoder as a posterior}\label{sec:eb}
An extension of the DPA framework with a stochastic encoder enables inference of latent variables. In latent variable modelling, the observed data $X$ is generated through a two-step process
\begin{equation}\label{eq:twostep}
	p(x)=\int p(z)p(x|z) dz,
\end{equation}
with a prior $p(z)$ on the latent variable $Z$ and a likelihood $p(x|z)$. Here we focus on an empirical Bayes (EB)~\citep{efron2019bayes} setting and demonstrate through the derivations below how the DPA objective can be adapted flexibly for different statistical problems. We defer more systematic studies of the method, including comparisons to other deep EB approaches, and other inference settings to future work. 

In EB, the likelihood is assumed known and fixed, denoted by $\bar{p}(x|z)$. Let $\bar{d}(z,\varepsilon)$ denote the generator that induces this known likelihood, i.e.\ $\bar{d}(z,\varepsilon)\sim \bar{p}(x|z)$, where $\varepsilon$ follows the standard Gaussian. 
The core of EB is the (unknown) prior $\bar{p}(z)$ which depends on the data distribution such that \eqref{eq:twostep} holds for the observed data distribution $\bar{p}(x)$ and known likelihood $\bar{p}(x|z)$, i.e.
\begin{equation*}
	\bar{p}(x)=\int \bar{p}(z)\bar{p}(x|z) dz.
\end{equation*}
This also defines the corresponding posterior distribution for the latent variable
\begin{equation}\label{eq:posterior}
	\bar{p}(z|x)\propto \bar{p}(z)\bar{p}(x|z),
\end{equation}
which is the target of EB inference.

To model a posterior over latent variables, the encoder must be stochastic. We replace the deterministic encoder by a general stochastic mapping $e(x,\eta):\bbR^p\times\bbR^q\to\bbR^k$, where $\eta$ is a multivariate standard Gaussian. The stochastic encoder is unrestricted in distributional form, so $e(x,\eta)$ can induce any posterior over the latent variable given $X=x$.

For a stochastic encoder, we define the DPA objective $\cL_{\rm DPA}(e,d)$ similarly to \eqref{eq:obj_joint_fix_k}, fixing the exponent $\beta=1$ throughout this section:
\begin{equation*}
	\cL_{\rm DPA}(e,d)=\bbE\|X-d(e(X,\eta),\varepsilon)\| - \frac12\bbE\|d(e(X,\eta),\varepsilon) - d(e(X,\eta),\varepsilon')\|,
\end{equation*}
where the expectation is over $X\sim P^*$, the encoder noise $\eta$, and independent decoder noises $\varepsilon,\varepsilon'$. When the encoder is deterministic, this reduces to \eqref{eq:obj_joint_fix_k} with $\beta=1$.

We propose the following formulation for learning the empirical Bayes posterior:
\begin{equation}\label{eq:eb}
	\tilde{e}\in\argmin_e\left\{\cL_{\rm DPA}(e,\bar{d}) - \min_d\cL_{\rm DPA}(e,d)\right\},
\end{equation}
where the first term is the DPA objective with the likelihood generator, and the second term is the minimal value of the DPA objective with the optimal decoder for the current encoder. The following proposition interprets this objective as an energy distance between the modelled likelihood and true likelihood. 
\begin{proposition}\label{prop:eb_obj}
	The objective in \eqref{eq:eb} is equal to
	\begin{equation*}
		\frac{1}{2}\bbE\big[D(P(X|e(X,\eta)),P(\bar{d}(e(X,\eta),\varepsilon)|e(X,\eta)))\big],
	\end{equation*}
	where $D(P,Q)$ is the energy distance defined in \eqref{eq:energy_distance} with $\beta=1$. 
\end{proposition}
This means minimising the objective in fact matches the following two distributions
\begin{equation*}
	X|Z\ \sim\ \bar{d}(Z,\varepsilon)|Z,
\end{equation*}
where $Z:=\tilde{e}(X,\eta)$, which implies $X|Z=z\sim\bar{p}(x|z)$. Then the following result shows the learned encoder recovers the EB posterior distribution.
\begin{theorem}\label{thm:eb}
	Suppose that, given the likelihood, the prior is uniquely identifiable from the observed data marginal, i.e., there exists a unique $\bar{p}(z)$ such that $\int \bar{p}(x|z)\bar{p}(z)dz=\bar{p}(x)$. Then for all $x$, $$\tilde{e}(x,\eta)\sim\bar{p}(z|x),$$ as defined in \eqref{eq:posterior}.
\end{theorem}

In practice, we solve the empirical version of \eqref{eq:eb} using a nested gradient descent algorithm, where we perform multiple gradient updates (5 is used in our experiments) for the inner problem with respect to $d$ before each update of $e$. 
We conclude this section with numerical studies of this approach in classical settings and a climate-denoising application. 

\medskip
\noindent\textbf{Classical setting}\quad 
We consider two simple examples: 
\begin{itemize}\vspace{-4pt}
	\setlength{\itemsep}{0pt}
	\setlength{\parskip}{0pt}
	\item linear-Gaussian: $Z\sim\cN(1,1)$ and $X|Z=z\sim\cN(z,1)$;
	\item Gamma-exponential: $Z\sim\mathrm{Gamma}(5,1)$ and $X|Z=z\sim\mathrm{Exponential}(z)$
\end{itemize}\vspace{-4pt}
and three methods, each adopting a different way to learn the posterior model $e(x,\eta)$:
\begin{itemize}\vspace{-4pt}
	\setlength{\itemsep}{0pt}
	\setlength{\parskip}{0pt}
	\item oracle: (supervised) engression~\citep{shen2023engression} with the latent variable observed from the true prior, i.e.
	\[\min_e \left\{\bbE\|Z-e(X,\eta)\|-\frac12\bbE\|e(X,\eta)-e(X,\eta')\|\right\};\]
	\item fixed prior: engression with the latent variable observed from a fixed, misspecified prior $p'(z)$;
	\item DPA-EB: the proposed method. 
\end{itemize}

Figure~\ref{fig:eb_simu} shows the predicted posterior means for the three methods, and Table~\ref{tab:eb_simu} reports the MSE and expected negative energy scores of the predicted posteriors against the ground truth. The advantage of DPA-EB is substantial over a postulated fixed prior, highlighting the value of empirical Bayes when the prior is unknown, and DPA-EB matches the oracle results obtained when the true prior is known.

\begin{figure}
\centering
	\begin{tabular}{@{}c@{}c@{}c@{}}
		\includegraphics[width=0.3\textwidth]{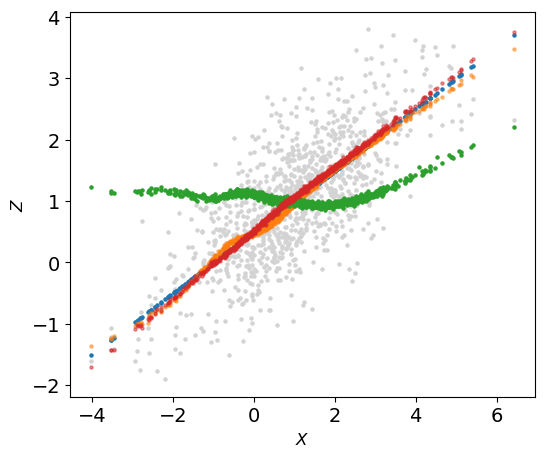}&
		\includegraphics[width=0.3\textwidth]{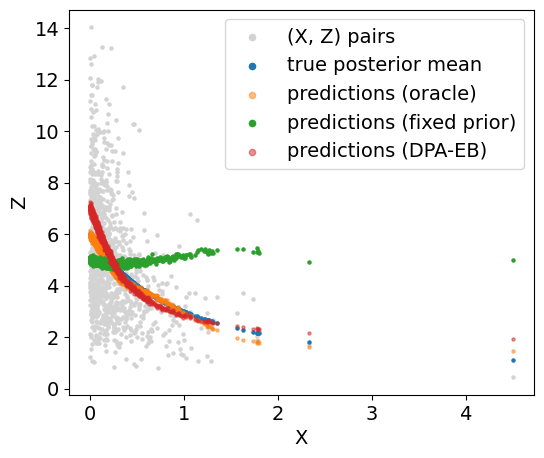}&
		\includegraphics[width=0.4\textwidth]{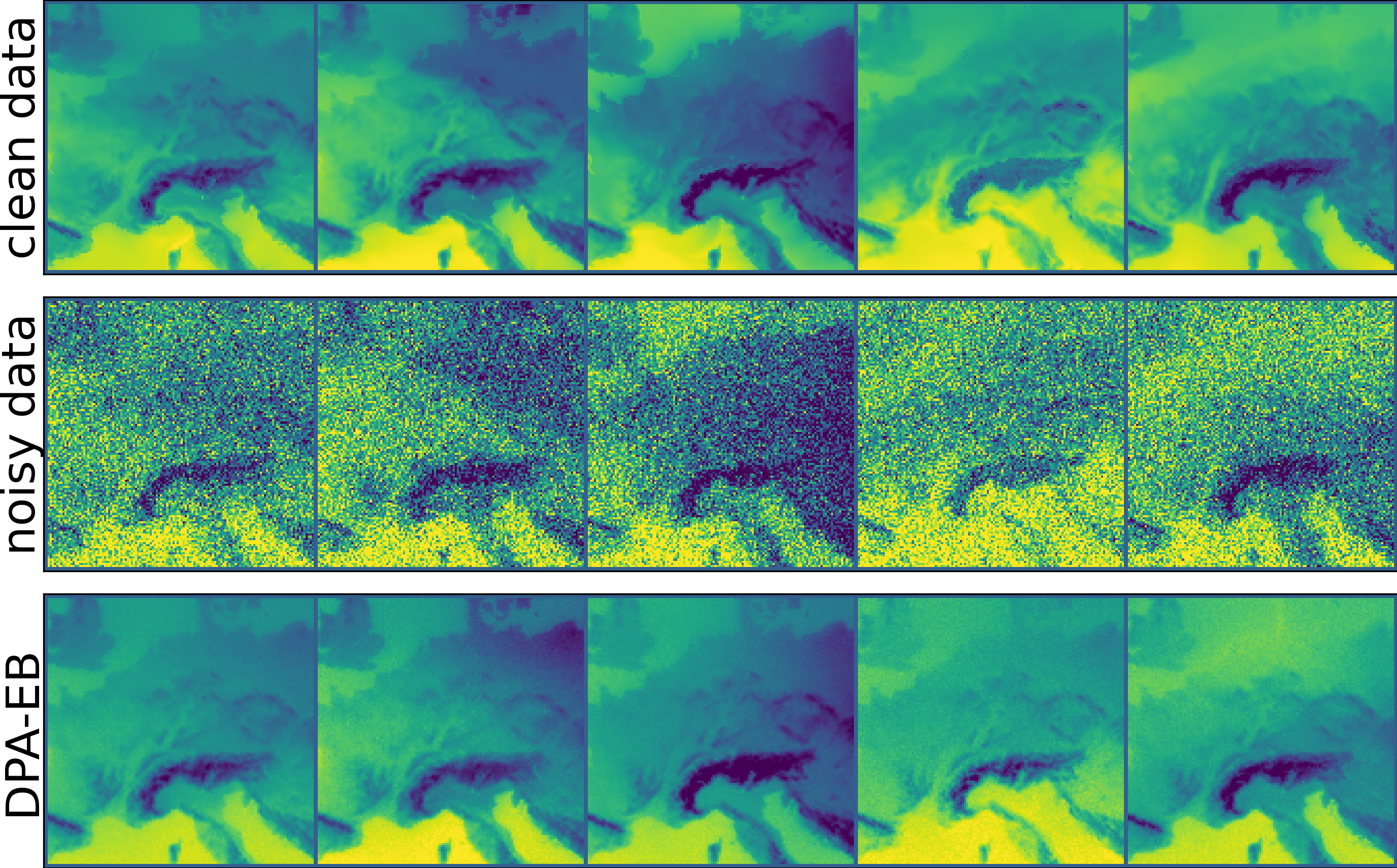}\\
		(a) linear-Gaussian & (b) Gamma-exponential & (c) climate denoising\\
	\end{tabular}
	\caption{(a,b): prediction of posterior means by different approaches compared to the ground truth. (c): climate denoising; from top to bottom are clean temperature fields, noisy fields, and denoised fields by DPA-EB.}
	\label{fig:eb_simu}
\end{figure}

\begin{table}
\caption{MSE and energy score for posterior predictions.}
\centering
\begin{tabular}{lcccc}
\hline
 & \multicolumn{2}{c}{\textbf{linear-Gaussian}} & \multicolumn{2}{c}{\textbf{Gamma-exponential}} \\
\cline{2-3} \cline{4-5}
\textbf{} & \textbf{MSE} & \textbf{energy score} & \textbf{MSE} & \textbf{energy score} \\
\hline
oracle & 0.4784 & 0.3897 & 4.6519 & 1.1808 \\
fixed prior & 0.9548 & 0.5556 & 5.5192 & 1.3083 \\
DPA-EB & 0.4798 & 0.3918 & 4.7758 & 1.2120 \\
\hline
\end{tabular}
\label{tab:eb_simu}
\end{table}

\medskip
\noindent\textbf{Climate denoising}\quad
We consider denoising of regional temperature fields (\textsc{r-temp}). On the original $128\times128$ temperature fields, we add iid Gaussian noise, resulting in noisy fields, as shown in the top two rows in Figure~\ref{fig:eb_simu}(c). The goal is denoising, i.e.\ inferring the clean data given the noisy one. It is essentially an empirical Bayes setting, where we view noisy images as observed data, clean ones as latent variables, and the process of adding noise as the known likelihood. We apply our DPA-EB approach to learn the posterior distribution of the clean data given the noisy one. Note that for training DPA, clean data are unobserved. In Figure~\ref{fig:eb_simu}(c) in the bottom row, we show samples drawn from our DPA-EB posterior given the noisy data in the middle row, which match well the clean data. The MSE between 100 test clean and noisy images is 25.0140, whereas the MSE between clean and predicted posterior mean of DPA-EB is 2.1974.

\section{Discussion}\label{sec:discuss}
We proposed DPA as a dimension reduction technique with the following properties:
\begin{enumerate}[(i)]
    \item As in Principal Component Analysis (PCA), the latent space is ordered according to how much variability of the data is explained, so users can keep few latent components (high compression with more unexplained variability) or many (lower compression with less remaining variability).
    \item As in autoencoders (AE), the encoder and decoder are nonlinear maps parametrised by deep neural networks.
    \item Unlike PCA, AE, or variants such as VAE and WAE, the DPA decoder produces samples from the oracle reconstructed distribution, namely the conditional distribution of the original data given the embedding value, using an energy-score based loss.
\end{enumerate}
The last property ensures that the reconstructed data follow in population the same distribution as the original data, yielding `distributionally lossless' compression irrespective of the latent dimension used.

We further illustrated an extension of DPA with a stochastic encoder for latent variable inference in an empirical Bayes setting, which suggests the flexibility of the DPA framework. Several other directions are worth exploring.

Arguably the main characteristic of DPA is that it reconstructs the full distribution from the latent embeddings: encoding into a low-dimensional latent space and decoding back does not change the distribution in population. In contrast, the same procedure for PCA, AE, and their variants alters the data distribution, especially in the tails, because the decoder is aiming for a conditional mean rather than the full oracle reconstructed distribution. The distribution-preserving property suggests applications in characterising distribution shifts, transport between high-dimensional distributions, and prediction problems with very high-dimensional responses.

One example is transport between high-dimensional distributions through the DPA latent space. For instance, the data is a mixture of two high-dimensional distributions $P_1$ and $P_2$ and the goal is to transport from $P_1$ to $P_2$, for which the optimal transport in the high-dimensional space is often intractable. Now with DPA, the latent variables also follow a mixture of distributions; denote $e(X)\sim Q_i$ when $X\sim P_i$, for $i=1,2$. The optimal transport from $Q_1$ to $Q_2$ in the low-dimensional (e.g.\ 2D) latent space becomes much more tractable. More importantly, the distributional reconstruction ability of DPA guarantees that after transportation from $Q_1$ to $Q_2$ in the latent space and decoding back to the original space, the decoded distribution of transported latents will match the target high-dimensional distribution $P_2$. 
In the context of the climate data examples mentioned, this opens up the possibility of correcting the bias of climate models ($P_1$) with respect to the observational data ($P_2$) by transportation from model data to match the full distribution of the observational data, rather than just to match the first one or two moments as in conventional bias correction techniques. The DPA embeddings in Figure~\ref{fig:2d_visual} further suggest that a transport from model 0 to model 1 in the 2D latent space could be as simple as roughly doubling the radius.

In addition, the latent subspace identified by DPA is generally different from that of existing methods such as PCA and AE. The numerical results for climate and single-cell data in Section~\ref{subsec:embedding} show how DPA improves the embeddings of PCA and AE on some metrics. For example, the DPA embeddings exhibit better separation between different temporal patterns or cell types. As another example, consider bivariate isotropic data where one component follows a uniform distribution and the other a t-distribution. PCA picks an arbitrary direction as its first principal subspace, whereas DPA with a linear encoder would favour the light-tailed component over the heavy-tailed one. This distinction connects to identification results for the latent variables, relevant to causal representation learning \citep{khemakhem2020variational,kivva2022identifiability}.

DPA can also be applied to the regression setting in which a response variable $Y$ is to be predicted from a predictor $X$. Replacing the data $X$ in the encoder objective \eqref{eq:obj_enc} by the response $Y$, while letting the encoder act on the predictor, yields a nonparametric formulation of sufficient dimension reduction in which the embedding $e(X)$ is chosen to minimise the unexplained variability of $Y|e(X)$. The DPA setting considered in this paper is the special case $Y=X$.

\bibliography{ref.bib}
\bibliographystyle{apalike}

\newpage
\appendix
\section{Proofs}\label{app:proof}

\begin{proof}[Proof of Proposition~\ref{prop:two_terms_es_equal}]
Let $Z=e(X)$ with a joint density $p_Z$ and denote by $p_{e(X)}(x|z)$ the conditional density of $X$ given $e(X)=z$. 
Write the density of $P^*$ as
$
	p(x) = \int p_Z(z)p_{e(X)}(x|z) dz.
$
Then we have
\begin{align*}
	\bbE_{X}\bbE_{Y\sim P^*_{e,X}}\|X - Y\|^{\beta} &= \int p_Z(z)p_{e(X)}(x|z) p_{e(X)}(y|e(x)) \|x-y\|^{\beta}dxdydz \\
	&\overset{(a)}= \int p_Z(z)p_{e(X)}(x|z) p_{e(X)}(y|z) \|x-y\|^{\beta}dxdydz\\
	&= \bbE_{Z\sim p_Z}\bbE_{X,Y\sim P^*_{e(X)=Z}}\big[\|X - Y\|^{\beta}\big]\\
	&= \bbE_{X\sim P^*}\bbE_{Y,Y'\sim P^*_{e,X}}\big[\|Y - Y'\|^{\beta}\big],
\end{align*}
where $(a)$ comes from the fact that any samples $Y$ from $P^*_{e,X}$ satisfy $e_i(Y)=e_i(X)$ almost surely for all $i=1,\dots,k$; thus $p_{e(X)}(x|z) p_{e(X)}(y|e(x))\neq0$ only when $e(y)=e(x)=z$.
\end{proof}

\begin{proof}[Proof of Proposition~\ref{prop:lin_gauss1}]
According to \citet[Theorem~4]{majumdar2019conditional}, the conditional distribution of $X$ given $e(X)=M^\top X$, denoted by $P^*_{e,X}$, is $\cN(\nu(X), G)$, where 
\begin{equation*}
	\nu(X)=\Sigma^*M(M^\top \Sigma^*M)^{-1}M^\top X,\quad G=\Sigma^* - \Sigma^*M(M^\top \Sigma^*M)^{-1}M^\top \Sigma^*.
\end{equation*}
Then by Proposition~\ref{prop:two_terms_es_equal}, the objective function in \eqref{eq:obj_enc} becomes
\begin{equation*}
	\bbE_{X}\bbE_{Y\sim P^*_{e,X}}\|X - Y\|^{2} = \bbE_{X}\bbE_{Y,Y'\sim P^*_{e,X}}\|Y - Y'\|^2 = \bbE\|G^{1/2}(\varepsilon-\varepsilon')\|^2 \propto \tr(G)
\end{equation*}
where $\varepsilon$ and $\varepsilon'$ are two independent draws from $\cN(0,I_k)$. Now formulation \eqref{eq:obj_enc} is equivalent to  
\begin{equation*}
	\begin{split}
		&\max_M\quad\tr\big((M^\top \Sigma^*M)^{-1}(M^\top \Sigma^*\Sigma^* M)\big)\\
		&\text{subject to}\quad M^\top M=I_k
	\end{split}
\end{equation*}
According to \citet[Theorem~2.3]{yu2011kernel}, the solution to the above problem is given by $M=Q_{:k}$ up to column permutations. Then we conclude the desired result. 
\end{proof}

\begin{proof}[Proof of Proposition~\ref{prop:mean_recon}]
By Proposition~\ref{prop:two_terms_es_equal}, we have 
	\begin{equation*}
		\bbE_{X}\bbE_{Y\sim P^*_{e,X}}\|X - Y\|^2 = \bbE_{X}\bbE_{Y,Y'\sim P^*_{e,X}}\|Y - Y'\|^2.
	\end{equation*}
Also note
\begin{align*}
	\bbE_{X}\bbE_{Y,Y'\sim P^*_{e,X}}\|Y - Y'\|^2 &= \bbE_{X}\bbE_{Y,Y'\sim P^*_{e,X}}\|Y - \bbE[Y]  + \bbE[Y'] - Y'\|^2\\
	&= 2\bbE_{X}\bbE_{Y\sim P^*_{e,X}}\|Y - \bbE[Y]\|^2,
\end{align*}
and 
\begin{align*}
	\bbE_{X}\bbE_{Y\sim P^*_{e,X}}\|X - Y\|^2 = \bbE_X\|X - \bbE_{Y\sim P^*_{e,X}}[Y]\|^2 + \bbE_X\bbE_{Y\sim P^*_{e,X}}\|Y - \bbE[Y]\|^2.
\end{align*}
Therefore,
\begin{align*}
	\bbE_X\|X - \bbE_{Y\sim P^*_{e,X}}[Y]\|^2 &= \bbE_{X}\bbE_{Y\sim P^*_{e,X}}\|X - Y\|^2 - \frac12\bbE_{X}\bbE_{Y,Y'\sim P^*_{e,X}}\|Y - Y'\|^2\\
	&= \frac12\bbE_{X}\bbE_{Y\sim P^*_{e,X}}\|X - Y\|^2,
\end{align*}
which leads to the desired equivalence. 
\end{proof}

The following lemma based on the results in \citet{szekely2003statistics} and \citet{szekely2023energy} states that the energy score is a strictly proper scoring rule. 
\begin{lemma}\label{lem:es}
	For any distribution $P'$, we have $$\bbE_{X^*\sim P^*,X\sim P}\big[\|X^*-X\|^\beta\big]-\frac{1}{2}\bbE_{X,X'\sim P}\big[\|X-X'\|^\beta\big] \ge \frac{1}{2}\bbE_{X,X'\sim P^*}\big[\|X-X'\|^\beta\big], $$ where the equality holds if and only if $P$ and $P^*$ are identical.
\end{lemma}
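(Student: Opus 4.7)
The plan is to recognize the quantity in the lemma as (half of) the energy distance between $P^*$ and $P$, and then reduce the claim to the classical non-negativity of the energy distance. Specifically, multiplying the stated inequality by $2$ and rearranging gives the equivalent statement
\begin{equation*}
	\mathcal{E}_\beta(P^*,P) \;:=\; 2\,\bbE\|X^*-X\|^\beta - \bbE_{X,X'\sim P}\|X-X'\|^\beta - \bbE_{X^*,X^{*\prime}\sim P^*}\|X^*-X^{*\prime}\|^\beta \;\ge\; 0,
\end{equation*}
with equality iff $P=P^*$. So my task reduces to proving that $\mathcal{E}_\beta(P^*,P)\ge 0$ with equality characterising $P=P^*$.

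The key tool is the Fourier representation of $\|\cdot\|^\beta$ for $\beta\in(0,2)$: there is a positive constant $c_{p,\beta}$ (depending only on the dimension $p$ and on $\beta$) such that
\begin{equation*}
	\|u\|^\beta \;=\; c_{p,\beta}\int_{\bbR^p}\frac{1-\cos\langle t, u\rangle}{\|t\|^{p+\beta}}\,dt, \qquad u\in\bbR^p.
\end{equation*}
Substituting $u=x-y$, expanding $1-\cos\langle t,x-y\rangle$ using the product of characteristic functions $e^{i\langle t,x\rangle}\overline{e^{i\langle t,y\rangle}}$, and swapping expectation with the $dt$-integral by Fubini, one can show the identity
\begin{equation*}
	\mathcal{E}_\beta(P^*,P) \;=\; c_{p,\beta}\int_{\bbR^p}\frac{|\phi_{P^*}(t)-\phi_P(t)|^2}{\|t\|^{p+\beta}}\,dt,
\end{equation*}
where $\phi_P$ and $\phi_{P^*}$ denote the characteristic functions of $P$ and $P^*$. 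Non-negativity of the right-hand side is then immediate. For the equality case, the integrand vanishes a.e.\ iff $\phi_{P^*}=\phi_P$ everywhere by continuity of characteristic functions, which by the uniqueness theorem is equivalent to $P^*=P$.

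The main obstacle is the technical justification of the Fourier representation and of the Fubini exchange, since for generic probability measures the integrand in the singular integral above is not obviously integrable near the origin; a standard symmetrisation using the difference $|\phi_{P^*}(t)-\phi_P(t)|^2$ (which vanishes to second order at $t=0$) is required, together with a mild moment condition such as $\bbE\|X\|^\beta<\infty$ and $\bbE\|X^*\|^\beta<\infty$ (implicit in the finiteness of the expectations appearing in the statement). Since this is precisely what is carried out in \citet{szekely2003statistics} and \citet{Szkely2023TheEO}, I would invoke those references for the representation and the integrability verification, and present only the two displayed identities above explicitly, concluding both the inequality and the characterisation of equality in one line from the Fourier formula.
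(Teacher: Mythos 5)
Your proposal is correct and is essentially the same route as the paper's: the paper gives no proof of this lemma at all, simply asserting it ``based on the results in'' \citet{szekely2003statistics} and \citet{Szkely2023TheEO}, and your sketch reconstructs precisely the standard characteristic-function argument underlying those references (the singular Fourier representation of $\|u\|^\beta$ for $\beta\in(0,2)$ and the resulting identity $\mathcal{E}_\beta(P^*,P)=c_{p,\beta}\int|\phi_{P^*}(t)-\phi_P(t)|^2\|t\|^{-(p+\beta)}\,dt$), correctly deferring the Fubini/integrability technicalities to the cited works.
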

The same property underpins the consistency analysis of \citet{shen2023engression} in the regression setting.

\begin{proof}[Proof of Proposition~\ref{prop:opt_d}]
When $\lambda=1/2$ and $\beta\in(0,2)$, the objective function in \eqref{eq:obj_dec} is equal to
\begin{equation}\label{eq:obj_dec1}
\begin{split}
	&\bbE_{X}\bbE_{Y\sim P_{d,e(X)}}\big[\|X - Y\|^{\beta}\big] - \frac{1}{2}\bbE_{X}\bbE_{Y,Y'\overset{\rm iid}\sim P_{d,e(X)}}\big[\|Y - Y'\|^{\beta}\big] \\
	=\ &\bbE_{X}\bbE_{Y^*\sim P^*_{e,X}, Y\sim P_{d,e(X)}}\big[\|Y^* - Y\|^{\beta}\big] - \frac{1}{2}\bbE_{X}\bbE_{Y,Y'\overset{\rm iid}\sim P_{d,e(X)}}\big[\|Y - Y'\|^{\beta}\big].
\end{split}
\end{equation} 
Given $X=x$, it is the expected energy score between the oracle reconstructed distribution $P^*_{e,X}$ and the reconstructed distribution induced by the decoder $P_{d,e(x)}$:
\begin{equation*}
	\bbE_{Y^*\sim P^*_{e,X}, Y\sim P_{d,e(x)}}\big[\|Y^* - Y\|^{\beta}\big] - \frac{1}{2}\bbE_{Y,Y'\overset{\rm iid}\sim P_{d,e(x)}}\big[\|Y - Y'\|^{\beta}\big].
\end{equation*}
By Lemma~\ref{lem:es}, we know that the above objective is minimised if and only if $P_{d,e(x)}$ and $P^*_{e,X}$ are identical with the minimal value 
\begin{equation*}
	\frac{1}{2}\bbE_{Y,Y'\overset{\rm iid}\sim P^*_{e,X}}\big[\|Y - Y'\|^{\beta}\big],
\end{equation*}
for all $x$. The assumption guarantees the existence of such optimal decoders $d^*$ that $P_{d^*,e(x)}=P^*_{e,X}$ for any $e$. Thus $d^*$ also minimises \eqref{eq:obj_dec1} with the minimal value \eqref{eq:obj_dec} given by
\begin{equation*}
\frac{1}{2}\bbE_{X}\bbE_{Y,Y'\overset{\rm iid}\sim P^*_{e,X}}\big[\|Y - Y'\|^{\beta}\big]\\
	=\bbE_{X}\bbE_{Y\sim P^*_{e,X}}\big[\|X - Y\|^{\beta}\big] - \frac{1}{2}\bbE_{X}\bbE_{Y,Y'\overset{\rm iid}\sim P^*_{e,X}}\big[\|Y - Y'\|^{\beta}\big],
\end{equation*}
where the equality is due to Proposition~\ref{prop:two_terms_es_equal} and the right-hand side is exactly \eqref{eq:obj_enc1} with $\lambda=1/2$.
\end{proof}

\begin{proof}[Proof of Theorem~\ref{thm:dpa_onek}]
	For any given encoder $e$, as $\cD$ is rich enough, Proposition~\ref{prop:opt_d} shows that 
\begin{equation*}
	\min_d \left\{\bbE_{X}\bbE_{Y\sim P_{d,e(X)}}\big[\|X - Y\|^{\beta}\big] - \frac{1}{2}\bbE_{X}\bbE_{Y,Y'\overset{\rm iid}\sim P_{d,e(X)}}\big[\|Y - Y'\|^{\beta}\big]\right\} = \frac{1}{2}\bbE_{X}\bbE_{Y,Y'\overset{\rm iid}\sim P^*_{e,X}}\big[\|Y - Y'\|^{\beta}\big].
\end{equation*}
Hence the optimisation problem in \eqref{eq:obj_joint_fix_k} in terms of $e$ is equivalent to 
\begin{equation*}
	\min_e\bbE_{X}\bbE_{Y,Y'\overset{\rm iid}\sim P^*_{e,X}}\big[\|Y - Y'\|^{\beta}\big].
\end{equation*}
Thus (i) holds.
Moreover, for $e=e^*$, Proposition~\ref{prop:opt_d} shows the optimal decoder $d^*$ that minimises
\begin{equation*}
	\bbE_{X}\bbE_{Y\sim P_{d,e^*(X)}}\big[\|X - Y\|^{\beta}\big] - \frac{1}{2}\bbE_{X}\bbE_{Y,Y'\overset{\rm iid}\sim P_{d,e^*(X)}}\big[\|Y - Y'\|^{\beta}\big]
\end{equation*}
satisfies $P_{d^*,e^*(X)}=P^*_{e^*,X}$, which leads to (ii).
\end{proof}

\begin{proof}[Proof of Proposition~\ref{prop:simu_opt_d}]
	Since $e$ is invertible, define $d^*:\bbR^p\to\bbR^p$ as the inverse of $e$, so that $d^*(e(x))=x$ for all $x\in\cX$. We use $d^*$ as a two-argument map by splitting its $p$-dimensional input as $d^*(z_{1:k},z_{(k+1):p}):=d^*((z_{1:k},z_{(k+1):p}))$. As $e(X)\overset{d}= \varepsilon$ with independent components, we have $(e_{(k+1):p}(X)|e_{1:k}(X))\overset{d}=\varepsilon_{(k+1):p}$ for all $k$. This implies that the conditional distribution of $X=d^*(e(X))$ given $e_{1:k}(X)$ is equal to the conditional distribution of $d^*(e_{1:k}(X),\varepsilon_{(k+1):p})$ given $e_{1:k}(X)$.
\end{proof}

\begin{proof}[Proof of Proposition~\ref{prop:lin_gauss2}]
	For a fixed $k$, we know from Proposition~\ref{prop:lin_gauss1} that $e_{1:k}(x)=\Pi_k Q_{:k}^\top x$ minimises each term $\bbE_{X}\bbE_{Y\sim P^*_{e_{1:k}(X)}}\|X - Y\|^{\beta}$ in \eqref{eq:obj_enc_all_k}, for some permutation matrix $\Pi_k$. The constraint that $e_{1:k}$ is the first $k$ rows of a single common encoder $e:\bbR^p\to\bbR^p$ forces the family $\{\Pi_k\}_k$ to be consistent across $k$: each $\Pi_k$ must be the leading $k\times k$ block of the same $p\times p$ permutation. With strictly positive weights $\{\omega_k\}$, this consistency requirement forces $\Pi_k=I_k$ for all $k$, so $e^*(x)=Q^\top x$ uniquely, independent of the choice of weights.
\end{proof}

\begin{proof}[Proof of Proposition~\ref{prop:eb_obj}]
	According to Proposition~\ref{prop:opt_d}, the optimal solution to $\min \cL_{\rm DPA}(e,d)$ for a fixed $e$, denoted by $\tilde{d}$, satisfies
	\begin{equation*}
		\tilde{d}(z,\varepsilon)\sim P(X|e(X,\eta)=z).
	\end{equation*}
	Thus we have
	\begin{align*}
		\min_d \cL_{\rm DPA}(e,d) &= \bbE\|X-\tilde{d}(e(X,\eta),\varepsilon)\| - \frac12\bbE\|\tilde{d}(e(X,\eta),\varepsilon) - \tilde{d}(e(X,\eta),\varepsilon')\| \\
		&= \frac12\bbE\|\tilde{d}(e(X,\eta),\varepsilon) - \tilde{d}(e(X,\eta),\varepsilon')\|.
	\end{align*}
	Then the whole objective function in \eqref{eq:eb} becomes
	\begin{equation*}
		\bbE\|X-\bar{d}(e(X,\eta),\varepsilon)\| - \frac12\bbE\|\bar{d}(e(X,\eta),\varepsilon) - \bar{d}(e(X,\eta),\varepsilon')\| - \frac12\bbE\|\tilde{d}(e(X,\eta),\varepsilon) - \tilde{d}(e(X,\eta),\varepsilon')\|,
	\end{equation*}
	which, by definition, is the energy distance between the conditional distribution of $X|e(X,\eta)$ and that of $\bar{d}(e(X,\eta),\varepsilon)|e(X,\eta)$. 
\end{proof}

\begin{proof}[Proof of Theorem~\ref{thm:eb}]
	Let $Z=\tilde{e}(X,\eta)$. According to Proposition~\ref{prop:eb_obj}, we have
	\begin{equation*}
		X|Z=z\ \overset{d}= \bar{d}(z,\varepsilon) \sim \bar{p}(x|z).
	\end{equation*}
	Note also that $X\sim \bar{p}(x)$ and the prior distribution of $Z$, denoted by $p(z)$ satisfies
	\begin{equation*}
		\bar{p}(x)=\int \bar{p}(x|z)p(z)dz.
	\end{equation*}
	Due to the identifiability assumption, we have $p(z)=\bar{p}(z)$. Thus the posterior of $Z|X$  
	\begin{equation*}
		p(z|x)\propto \bar{p}(z)\bar{p}(x|z)
	\end{equation*}
	is equal to $\bar{p}(z|x)$ by \eqref{eq:posterior}.
\end{proof}

\section{Experimental details}\label{app:expe_details}
\subsection{Data sets and preprocessing}
Benchmark image data:
\begin{itemize}
	\item \textsc{mnist}: 6k training samples. Each sample is a $28\times28$ image of hand-written digits (from 0 to 9) with pixel values in $\{0,1\}$.
	\item \textsc{disk}: 10k training samples. Each sample is a $32\times32$ image of two disks with pixel values in $\{0,1\}$. Each disk is determined by three generative factors: x-position, y-position, and radius, all of which are randomly sampled. In total the intrinsic dimension is 6. 
\end{itemize}
Climate data:
\begin{itemize}
	\item \textsc{r-temp}: regional temperature data from CORDEX models \citep{regionalclimatedata} in Kelvin. The sample size is 167,645. We center the data by the temporal mean per location. 
	\item \textsc{r-precip}: regional precipitation data from CORDEX models \citep{regionalclimatedata}. The unit of the raw precipitation data is $\mbox{kg} \cdot \mbox{m}^{-2} \mbox{s}^{-1}$. Sample size is 167,645.
	\item \textsc{g-precip}: global precipitation data from CMIP6 models~\citep{gmd-9-1937-2016,gmd-8-3379-2015}, with a spatial dimension of $360\times180$. The unit of raw data is $\mbox{kg} \cdot \mbox{m}^{-2} \mbox{s}^{-1}$ and we take a square root transformation. 
\end{itemize}
Single-cell data: we consider 8 data sets from the R package \texttt{SeuratData} and follow the standard preprocessing procedures provided in the R toolkit \texttt{Seurat}~\citep{seurat} for single-cell genomics, including log transformation and standardization. We pre-select the genes with more than 20\% expressed cells (with nonzero counts). Table~\ref{tab:sc_data} lists the details for each data set. 
\begin{table}
\centering
\caption{Single-cell data sets.}\label{tab:sc_data}
\begin{tabular}{cccc}\toprule
name & data set & dimension & sample size \\\midrule
\textsc{sc1} & bmcite & 918 & 30672 \\
\textsc{sc2} & cbmc & 1118 & 7895 \\
\textsc{sc3} & celegans\_embryo & 1019 & 4883 \\
\textsc{sc4} & mousecortex & 3005 & 558 \\
\textsc{sc5} & panc8 & 6810 & 14890 \\
\textsc{sc6} & pbmc3k & 943 & 2638 \\
\textsc{sc7} & pbmcsca & 810 & 31021 \\
\textsc{sc8} & thp1\_eccite & 5806 & 20729 \\\bottomrule
\end{tabular}
\end{table}

\subsection{Hyperparameters}
For the energy score used in DPA, we always use the default choice of $\beta=1$. Throughout all experiments, we keep the hyperparameters for neural network architectures and optimisation the same for all deep learning based methods including AE, VAE, and WAE. Specifically, we use the Adam optimiser \citep{kingma2014adam} with a learning rate of $10^{-4}$, default values for the beta parameters, and a mini-batch size of 512. 

For the neural network architecture, we always adopt multilayer perceptrons (MLPs) (we vectorize image/spatial data). The encoder is a standard MLP while the decoder has to take random noises as arguments, so we adopt the architecture used by \citet{shen2023engression}. As some of our neural nets are fairly deep, we use skip-connections every two-layer. Experiments for DPA and AE for varying $k$'s in Section~\ref{subsec:recon} require rich enough model classes, so we use encoders and decoders with 16 layers; the numbers of neurons per layer are 512 for \textsc{mnist} and \textsc{disk}, 1000 for all single-cell data sets, 5000 for \textsc{r-temp} and \textsc{r-precip}, and 2048 for \textsc{g-precip}, which are rather arbitrary choices mainly to fit into the GPU memory. For the decoder, we concatenate to each layer a 100-dimensional standard Gaussian noise except for the global precipitation data with the highest dimension and complexity for which we concatenate a 500-dimensional Gaussian noise. Experiments in Section~\ref{subsec:embedding} are only for a latent dimension of 2, so we use shallower networks with 4 layers for both encoders and decoders, and we keep the same network width as before. All experiments are run on a single NVIDIA RTX 4090 GPU.

For non-deep learning methods, we use the Python \texttt{scikit-learn}  library for PCA, UMAP, and t-SNE with the default parameters. For \textsc{g-precip} with a very high dimension and sample size so that the data could not be loaded into the memory, we train a linear autoencoder with mean squared reconstruction error (with intercepts) and Adam optimiser with the same hyperparameter settings.
%

\subsection{Additional empirical results}\label{app:add_exp}
We report results from VAE and WAE in addition to the comparisons in Figures~\ref{fig:mnist_recon}-\ref{fig:rcmp_recon} of the main text. Because each VAE or WAE model is trained for one fixed latent dimension, we run them with $k=2$ and show in Figure~\ref{fig:add_exp} their reconstructions on four data sets in comparison to DPA, PCA, and AE with the same fixed $k=2$.
\begin{figure}
	\centering
	\begin{tabular}{c@{}BBBB}
		& \textsc{mnist} & \textsc{disk} & \textsc{r-temp} & \textsc{r-precip}\\
		\rotatebox[origin=c]{90}{\small{true}}\hspace{4pt}\smallskip & 
		\includegraphics[width=0.23\textwidth,align=c]{fig/mnist/true_i1.png} &
		\includegraphics[width=0.23\textwidth,align=c]{fig/kdisk/true_i0.png} &
		\includegraphics[width=0.23\textwidth,align=c]{fig/rcm_t/true_i7.png} &
		\includegraphics[width=0.23\textwidth,align=c]{fig/rcm_p/true_i0.png} \\
		\rotatebox[origin=c]{90}{\small{PCA}}\hspace{4pt}\smallskip & 
		\includegraphics[width=0.23\textwidth,align=c]{fig/mnist/pca_i1_k2.png} &
		\includegraphics[width=0.23\textwidth,align=c]{fig/kdisk/pca_i0_k2.png} &
		\includegraphics[width=0.23\textwidth,align=c]{fig/rcm_t/pca_i7_k2.png} &
		\includegraphics[width=0.23\textwidth,align=c]{fig/rcm_p/pca_i0_k2.png} \\
		\rotatebox[origin=c]{90}{\small{AE}}\hspace{4pt}\smallskip & 
		\includegraphics[width=0.23\textwidth,align=c]{fig/mnist/iob_i1_k2.png} &
		\includegraphics[width=0.23\textwidth,align=c]{fig/kdisk/iob_i0_k2.png} &
		\includegraphics[width=0.23\textwidth,align=c]{fig/rcm_t/iob_i7_k2.png} &
		\includegraphics[width=0.23\textwidth,align=c]{fig/rcm_p/iob_i0_k2.png} \\
		\rotatebox[origin=c]{90}{\small{VAE}}\hspace{4pt}\smallskip & 
		\includegraphics[width=0.23\textwidth,align=c]{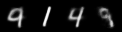} &
		\includegraphics[width=0.23\textwidth,align=c]{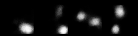} &
		\includegraphics[width=0.23\textwidth,align=c]{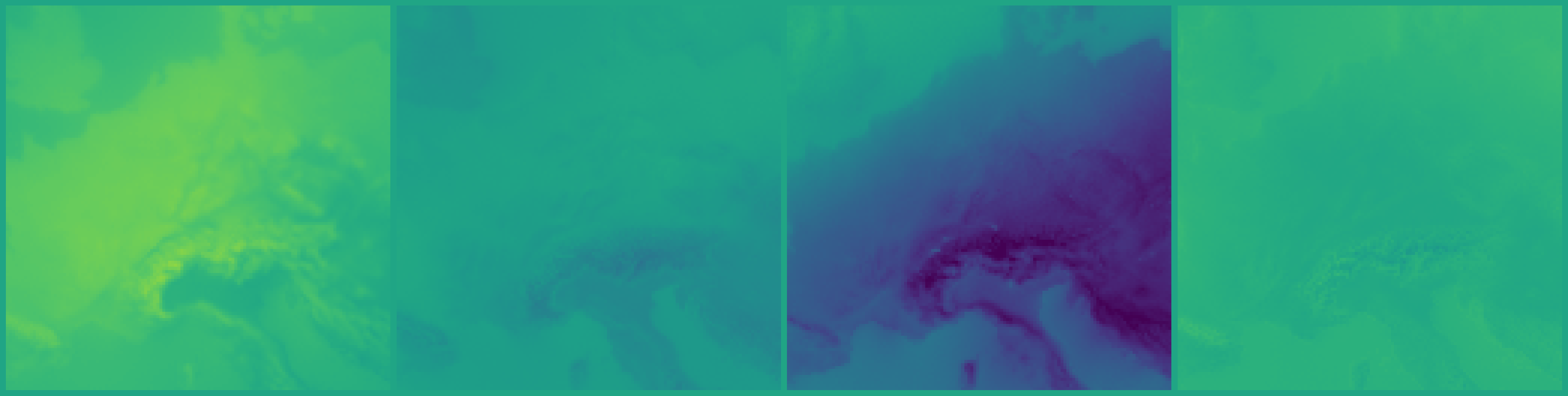} &
		\includegraphics[width=0.23\textwidth,align=c]{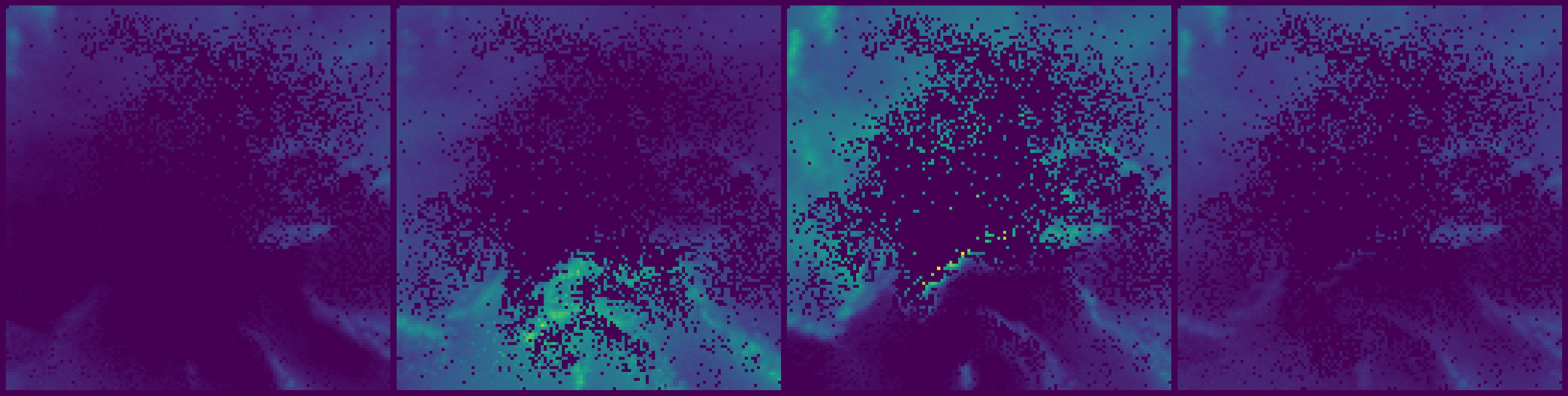} \\
		\rotatebox[origin=c]{90}{\small{WAE}}\hspace{4pt}\smallskip & 
		\includegraphics[width=0.23\textwidth,align=c]{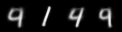} &
		\includegraphics[width=0.23\textwidth,align=c]{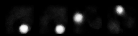} &
		\includegraphics[width=0.23\textwidth,align=c]{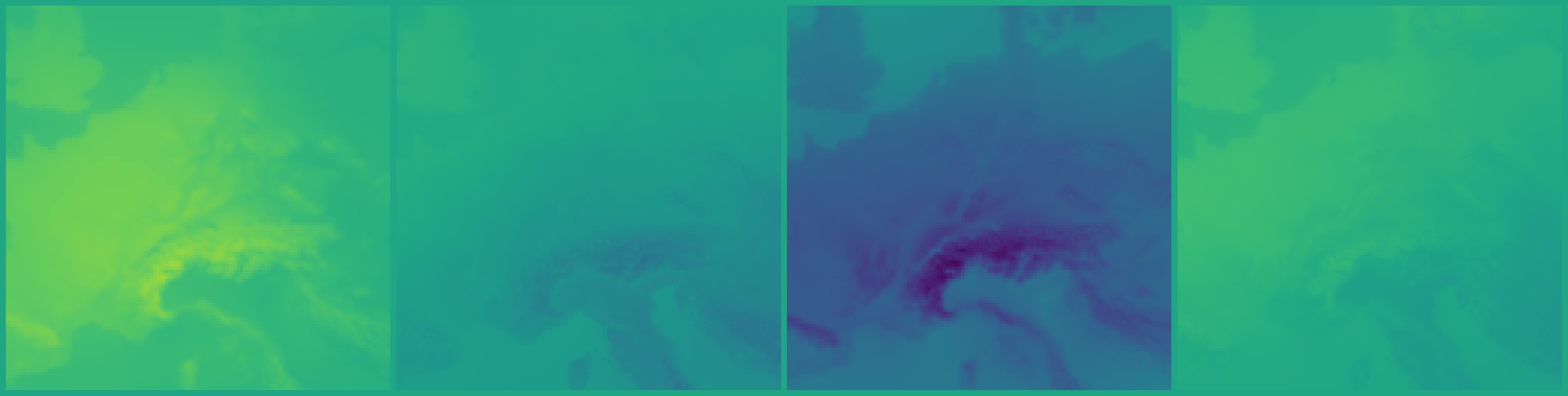} &
		\includegraphics[width=0.23\textwidth,align=c]{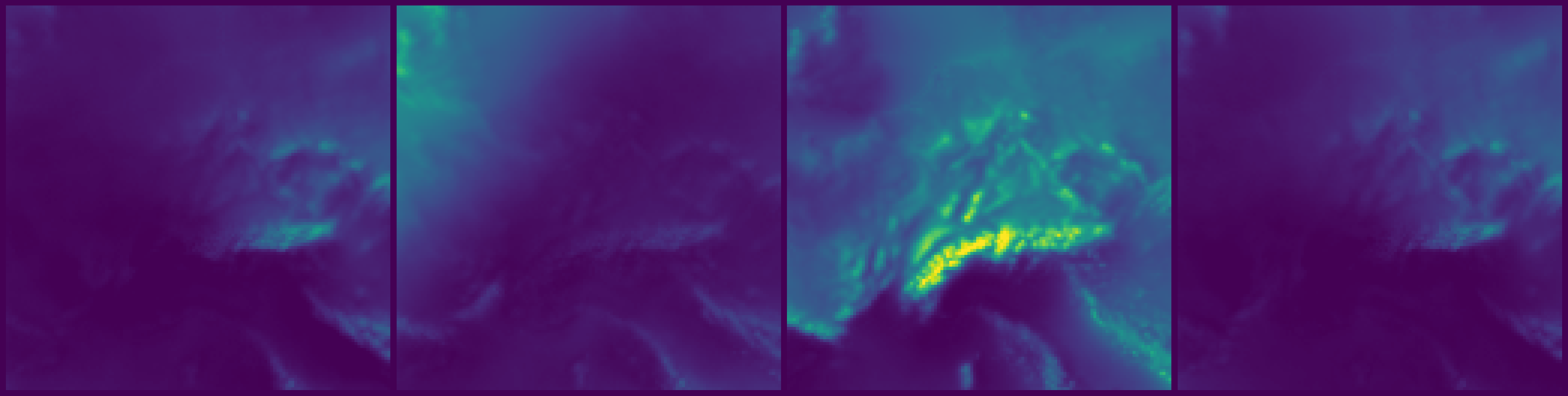} \\
		\rotatebox[origin=c]{90}{\small{DPA}}\hspace{4pt}\smallskip & 
		\includegraphics[width=0.23\textwidth,align=c]{fig/mnist/dpa_i1_k2.png} &
		\includegraphics[width=0.23\textwidth,align=c]{fig/kdisk/dpa_i0_k2.png} &
		\includegraphics[width=0.23\textwidth,align=c]{fig/rcm_t/dpa_i7_k2.png} &
		\includegraphics[width=0.23\textwidth,align=c]{fig/rcm_p/dpa_i0_k2.png} 
	\end{tabular}
	\caption{Reconstructions on \textsc{mnist}, \textsc{disk}, \textsc{r-temp}, and \textsc{r-precip}.}
	\label{fig:add_exp}
\end{figure}

\section{Comparison with VAE}\label{app:vae}
This appendix gives a more detailed comparison between DPA and VAE. The purpose is to make precise in what sense the two methods target different mathematical objects, beyond the brief statement in Section~\ref{sec:related}.

We use notation consistent with the rest of the paper. Denote by $q_e(z|x)$ the density induced by the encoder, by $p_d(x|z)$ the density induced by the decoder, and by $p(z)$ a pre-specified prior on the latent variable. The marginal data density induced by the decoder and prior is
\[p_d(x) = \int p(z)p_d(x|z)\,dz.\]
The VAE objective derives from the standard decomposition of the log-likelihood
\begin{equation*}
	\log p_d(x) = \bbE_{q_e(z|x)}[\log p_d(x|z)] - \mathrm{KL}(q_e(z|x),p(z)) + \mathrm{KL}(q_e(z|x),p_d(z|x)),
\end{equation*}
where $p_d(z|x)\propto p(z)p_d(x|z)$ is the (generally intractable) true posterior. The first two terms form the evidence lower bound (ELBO), which VAE maximises jointly over $e$ and $d$. The gap between the log-likelihood and the ELBO is the third term, the Kullback--Leibler divergence between the variational and true posteriors. To compute the ELBO in closed form, the encoder and decoder families are typically restricted; in the classical Gaussian VAE, encoder, decoder, and prior are all Gaussian.

VAE fits the joint distribution $p(z)p_d(x|z)$ on $(z,x)$. The prior $p(z)$ is prescribed independently of the data; the decoder defines the conditional $p_d(x|z)$. The encoder $q_e(z|x)$ is a variational approximation to $p_d(z|x)$, used as a computational device for variational inference.

DPA fits a different conditional: $X|e(X)=z$, where $e$ is a deterministic encoder defined by the unexplained-variability objective \eqref{eq:obj_enc}. There is no prior on the latent variable; the marginal of $e(X)$ is whatever the data and the encoder induce.

These two conditionals coincide only under specific conditions. For VAE-recovered $p_d(x|z)$ to equal the DPA target $X|e(X)=z$ for some encoder $e$, two things must hold simultaneously: (a) the decoder family must contain a $d^*$ with $p_{d^*}(x)=p^*(x)$, where $p^*$ is the density of $P^*$; and (b) the marginal that $e(X)$ induces must coincide with the prescribed prior $p(z)$. Condition (b) is not enforced by the VAE objective. The standard Gaussian prior used in practice will, for generic data and generic encoders, not match the marginal induced by $e(X)$.

The standard isotropic Gaussian prior used by VAE is rotationally symmetric, so its latent coordinates are exchangeable. There is no natural notion of a `first' or `most informative' latent coordinate within VAE. DPA's principal-ordering property arises from the encoder objective \eqref{eq:obj_enc}, which has no analogue in the VAE formulation. Variants such as $\beta$-VAE \citep{higgins2017beta} adjust the trade-off between reconstruction and KL terms but do not change this feature of the prior.

In addition, in practice VAE rarely achieves a tight ELBO, since the encoder and decoder families are restricted to allow closed-form Kullback--Leibler computation. As a result the variational posterior does not match the true one even at the optimum of the chosen family, which further widens the gap between $p_d(x|z)$ and $X|e(X)=z$. DPA avoids this approximation: the energy score can be computed from samples, so the encoder and decoder families can be expressive generative models of the form $\{d(z,\varepsilon)\}$ without restriction to a closed-form likelihood.

The main differences between DPA and VAE can be summarised as:
\begin{itemize}
	\item \emph{Target.} VAE fits the joint $p(z)p_d(x|z)$ with prescribed prior $p(z)$. DPA targets $X|e(X)=z$ for an encoder defined by minimising unexplained variability of the conditional. These are different conditionals on different conditioning variables.
	\item \emph{Optimisation.} VAE maximises the ELBO, a variational lower bound on the log-likelihood. DPA minimises the expected energy score between the oracle reconstructed distribution and the decoder distribution, with no variational approximation.
	\item \emph{Model class.} VAE restricts encoder and decoder families for closed-form Kullback--Leibler computation. DPA uses general generative model families, since the energy score is computable from samples.
	\item \emph{Latent structure.} The rotationally symmetric Gaussian prior makes VAE latent coordinates exchangeable. DPA imposes an ordering on the latent coordinates that recovers principal components in the linear case and supports an adaptive choice of the latent dimension.
\end{itemize}

\end{document}